\newtheorem{definition}{Definition} 
\newtheorem{observation}{Observation} 
\newtheorem{property}{Property}[section]
\newcounter{myexamplec}
\newenvironment{myexample}[1]{\refstepcounter{myexamplec} \medskip 
\noindent {\bf Example \themyexamplec~(#1).}~}{\hfill$\Box$\medskip}
\newenvironment{exampleContinued}[2]{\medskip\noindent {\bf Example #1~(#2\itshape{ -- continued}\,\bf).}~}{\hfill$\Box$\medskip}
\newtheorem{theorem}{Theorem}[section]
\newtheorem{lemma}{Lemma}[section]
\newtheorem{proposition}{Proposition}[section]
\newenvironment{proof}{\begin{trivlist}\item[\hskip\labelsep{\it Proof.}]}{\hfill$\Box$\end{trivlist}}
\newenvironment{proofOf}[1]{\begin{trivlist}\item[\hskip\labelsep{\it Proof of #1.}]}{\hfill$\Box$\end{trivlist}}
\newtheorem{claim}{Claim}
\newcommand{\C}{{\mathcal{C}}}
\newcommand{\E}{{\mathcal{E}}}
\newcommand{\G}{{\mathcal{G}}}
\newcommand{\M}{{\cal M}}
\newcommand{\N}{{\cal N}}
\newcommand{\MM}{{\rm \cal MM}}
\newcommand{\R}{{\cal R}}
\renewcommand{\S}{{\mathcal{S}t}}
\newcommand{\coNP}{{\rm coNP}}
\newcommand{\NP}{{\rm NP}}
\newcommand{\Pol}{{\rm P}}
\renewcommand{\P}{\Pi}
\newcommand{\Pnd}{\P^{nd}}
\newcommand{\atom}[1]{{\it atom}(#1)}
\newcommand\sel{\textrm{super-elementary}}
\newcommand\Sel{\mbox{Super-elementary}}
\newcommand\steady{{\rm steady}}
\newcommand\eG{\widehat{\G}}
\newcommand{\simpl}[2]{\sigma_{#2}(#1)}
\newcommand{\ssimpl}[2]{\Sigma_{#2}(#1)}
\newcommand{\ssimplnd}[2]{\Sigma^{nd}_{#2}(#1)}
\newcommand\nbd{{\rm not}~}
\journal{arXiv.org}
\begin{document}

\begin{frontmatter}

\title{On the Tractability of Minimal Model Computation \\ for Some CNF Theories}

\author[unical]{Fabrizio Angiulli}\ead{f.angiulli@dimes.unical.it}
\author[jce]{Rachel Ben-Eliyahu-Zohary}\ead{rbz@jce.ac.il}
\author[unical]{Fabio Fassetti}\ead{f.fassetti@dimes.unical.it}
\author[unical]{Luigi Palopoli}\ead{palopoli@dimes.unical.it}

\address[unical]{DIMES, University of Calabria, Rende (CS), Italy}
\address[jce]{Software Engineering Dept., Jerusalem College of Engineering, Jerusalem, Israel}
\begin{abstract}
Designing algorithms capable of {\em efficiently
constructing minimal models of CNFs} is an important task in AI.
This paper provides new results along this research line and presents new
algorithms for performing minimal model finding and checking over positive propositional
CNFs and model minimization over propositional CNFs. 
An algorithmic schema, called the {\em Generalized Elimination Algorithm} (GEA)
is presented, that computes a minimal model of {\em any} positive CNF.
The schema generalizes the {\em Elimination Algorithm}
(EA) \cite{Ben-Eliyahu-ZoharyP97}, which computes a minimal model of positive
\textit{head-cycle-free} (HCF) CNF theories. While the EA always runs in polynomial
time in the size of the input HCF CNF, the complexity of the GEA depends on the
complexity of the specific {\em eliminating operator} invoked therein, which may
in general turn out to be exponential. Therefore, a specific eliminating
operator is defined by which the GEA computes, in polynomial time, a minimal
model for a class of CNF that strictly includes head-elementary-set-free (HEF) CNF
theories \cite{GebserLL06}, which form, in their turn, a strict superset of HCF
theories.
Furthermore, in order to deal with the high complexity associated with
recognizing HEF theories, an ``incomplete'' variant of the GEA (called IGEA) is
proposed: the resulting schema, once instantiated with an appropriate
elimination operator, always constructs a model of the input CNF, which is
guaranteed to be minimal if the input theory is HEF.
In the light of the above results,
the main contribution of this work is the
enlargement of the tractability
frontier for the minimal model finding and checking
and the model minimization problems.
\end{abstract}

\begin{keyword}
CNF theories, minimal model, head-cycle-free CNF theories, 
head-elementary-set-free CNF theories, computational complexity.
\end{keyword}

\end{frontmatter}

\section{Introduction}\label{sect:intro}

Minimal models play a vital role in many systems that are dedicated 
to knowledge representation  and reasoning.
The concept of minimal model is at the heart of several tasks
in Artificial Intelligence including
circumscription \cite{circ1,circ2,lif-circ}, default logic \cite{Rei80},
minimal diagnosis \cite{dKMR92}, planning \cite{KautzMS96},
and in answering queries posed on logic programs under the stable model semantics \cite{gl:negation,BiFr87}
and deductive databases under the generalized closed-world assumption
\cite{minker}.

On the more formal side, the task of reasoning with minimal models has
been the subject of several studies
\cite{Cad92a,Cad92b,KoPa90,EiGo93,CT93,BeDe96b,Be05,KiKo03}.
Given a propositional CNF
theory $\Pi$,
among others, the tasks of {\em Minimal Model Finding} and 
{\em Minimal Model Checking} have been considered. The former task consists
of computing a minimal model of $\Pi$,
the latter one is the problem of checking
whether a given set of propositional letter is indeed a minimal model for $\Pi$.

Findings regarding the complexity of reasoning with minimal
models show that these problems are intractable in the general case.
Indeed, it turns out that even when the theory is positive (that is, it does not contain constraints),
finding a minimal model is
$\rm P^{NP[O(\log n)]}$-hard \cite{Cad92a} (note that positive theories always have a minimal
model)\footnote{We recall that $\rm P^{NP[O(\log n)]}$ is the class of decision problems
that are solved by polynomial-time bounded deterministic Turing machines
making at most a logarithmic number
of calls to an oracle in $\rm NP$. For a precise characterization of the
complexity of model finding, given in terms of complexity classes of functions,
see \cite{CT93}.},
and checking whether a model is minimal for a given theory
is co-NP-complete \cite{Cad92b}.

The above formidable complexities characterizing the two
above mentioned problems
have motivated several researchers to look for heuristics
\cite{LoTr06,BeDe96b,Be05,AvBe07}
as long as, due to the complexity results listed above and to the still
unresolved
P vs NP conundrum, all exact algorithms for solving these problems remain
exponential in the worst case.

One orthogonal direction of research concerns singling out significant fragments
of CNF theories for which dealing with minimal models is tractable.
The latter approach has also the merit of providing
insights that can help improve  the efficiency of heuristics for
the general case.
For instance, algorithms designed
for a specific subset of general CNF theories can be
incorporated into algorithms for computing
minimal models of general CNF theories \cite{BeDe96b,SKC96,GSS03,HHST04}.

Within this scenario, in \cite{Ben-Eliyahu-ZoharyP97}
efficient algorithms are presented for computing and checking minimal models
of a restricted subset of positive CNF theories, called
{\em Head Cycle Free} (HCF) theories \cite{BenEliyahuD94}.
To illustrate, HCF theories are positive CNF theories satisfying the constraint that
there is no cyclic dependence involving two positive literals occurring in the
same clause.
Head-cycle-freeness can also be checked efficiently \cite{BenEliyahuD94}.
These results have been then exploited by other authors to
improve model finding algorithms for general theories.
For example, the system {\tt dlv}
looks for HCF fragments into general disjunctive logic programs to 
be processed in order to improve efficiency \cite{LRS97,KoLe99}.

The research presented here falls into the groove traced in \cite{Ben-Eliyahu-ZoharyP97}.
The central contribution of this work
is a
polynomial time algorithm for computing a minimal model for (a superset of) the class of positive
HEF (Head Elementary-Set Free) CNF theories, the definition
of which we adapt from the homonym one given in
\cite{GebserLL06} for disjunctive logic programs and
which form, in their turn, a strict superset of the class of HCF theories
studied in \cite{Ben-Eliyahu-ZoharyP97}.

To the best of our knowledge
positive HCF theories form the largets class of CNFs
for which a polynomial time algorithm 
solving the \textit{Minimal Model Finding} problem
is known so far.
Since HCF theories are a strict subset of HEF ones,
our main contribution is the
enlargement of the tractability
frontier for the minimal model finding problem.

It is worth noting that a relevant difference holds here that while HCF theories are recognizable in polynomial time,
for HEF ones the same task is co-NP-complete \cite{FassettiP10}.
Although
this undesirable property seems to reduce the 
applicability of the above result,
we will show that our approach leads to techniques to compute 
a model of any positive CNF theory in polynomial time, 
while the computed model is guaranteed to be minimal 
at least for all positive HEF theories.
Notice that this latter property holds
without the need to recognize whether the input theory is HEF or not.

The rest of the paper is organized as follows. 
In Section \ref{sect:probl_contrib}, we provide preliminary
definitions about CNF theories, present the problems
and the sub-classes of CNF theories of interest here,
depict contributions of the work, and discuss
application examples.
In Section \ref{sect:algo}, we introduce the Generalized Elimination Algorithm (GEA), 
that is the basic algorithm presented in this paper, 
and the concept of {\em eliminating operator} 
that it makes use of. 
Then, in Section \ref{sect:hef}, we formally define HEF CNF theories
and then construct an eliminating operator that enables GEA to compute a minimal model for 
a positive HEF CNF theory in polynomial time. 
In Section \ref{sect:beyond}, we study 
the behavior the GEA when applied to a general CNF theory
and introduce the Incomplete GEA which is able
to 
compute a minimal model for 
a positive HEF CNF theory in polynomial time
without the need to know in advance whether the input
theory is HEF or not.
Concluding remarks 
are provided in Section \ref{sect:conclusions}. 
For the sake of presentation, some of the intermediate result proofs 
are reported in the Appendix.

\section{Our problems and application scenarios}
\label{sect:probl_contrib}

In this section, first we define the problems we are dealing with in this paper
and then depict some application scenarios.

\begin{table}\small
\centering
 \begin{tabular}[h]{|c||p{0.7\textwidth}|}
 \hline
 \bf Symbol & \bf Description \\
 \hline
 $\P$ & A CNF theory \\
 \hline
 $\Pnd$ & A non-disjunctive theory obtained from $\P$ by deleting all disjunctive clauses\\
 \hline
 $\atom{\P}$ & The set of atoms appearing in $\P$ \\
 \hline
 $c_X$ & The clause obtained by projecting the clause $c$ on the set of atoms
$X$: if $c\equiv H\leftarrow B$ then $c_X \equiv H_X \leftarrow B_X$ with
$H_X = H\cap X$ and $B_X = B\cap X$ \\
 \hline
 $c_{X\leftarrow}$ & The clause obtained by projecting the head of the clause
$c$ on the set of atoms $X$: if $c\equiv H\leftarrow B$ then $c_{X\leftarrow}
\equiv H_X \leftarrow B$ with $H_X =
H\cap X$\\
 \hline
 $\P_X$ & The set of all the non-empty head clauses $c_X$ with $c$ in $\P$\\
 \hline
 $\P_{X\leftarrow}$ & The set of all the non-empty head clauses $c_{X\leftarrow}$
with $c$ in $X$\\
 \hline
 $\simpl{\P}{\M}$ & The set of all the non-empty clauses $c_\M$ with $c$ in
$\P$\\
 \hline
 $\ssimpl{\P}{\M}$ & A shortcut for $(\simpl{\P}{\M})_{\M\setminus\S}$\\
 \hline
  $\G(\P)$ & The dependency graph associated with the theory $\P$\\
 \hline
  $\eG(\P)$ & The elementary graph associated with the non-disjunctive theory
$\P$\\
 \hline
 \end{tabular}
\caption{Summary of the symbols employed throughout the paper.}
\label{table:symbols}
\end{table}

\subsection{Preliminary definitions}\label{sect:prelim}

In this section we recall or adapt the definitions of propositional CNF
theories and their subclasses (head-cycle-free, head-elementary-set-free) which
are of interest here.

An \textit{atom} is a propositional \emph{letter} (aka, \textit{positive} literal).
A \textit{clause}
(aka, \textit{rule} -- in the following we shall make use of the two terms interchangeably) is an expression of the form $H \leftarrow B$, where $H$ and $B$ are
sets of atoms\footnote{We prefer to adopt the implication-based syntax for clauses in the place of the more usual disjunction-based one to slightly ease the foregoing presentation.}.
$H$ and $B$ are referred to as,
respectively, the \textit{head} and \textit{body} of the clause;
the atoms in $H$ are also called head atoms
while the atoms in $B$ are also called body atoms.
With a little abuse of terminology, if $|H|>1$, we shall say the clause is
\textit{disjunctive}, otherwise it is a \textit{Horn}, or \textit{non-disjunctive}\footnote{We will use the terms \textit{Horn} and \textit{non-disjunctive} interchangeably.}.
Moreover, if $|H| = 1$ the clause is called \textit{single-head}.
A \textit{fact} is a single-head rule with empty body.
A \textit{theory} $\P$ is a finite set of clauses.
If there is some disjunctive rule in
$\P$ then $\P$ is called \textit{disjunctive}, otherwise it is called
\textit{non-disjunctive}.
$\atom{\P}$ denotes the set of all the atoms occurring in $\P$.
A set $S$ of atoms is called a \textit{disjunctive
set} for $\P$ if there exists at least one rule $H \leftarrow B$ in $\P$ such that $|H\cap S|>1$.
A \textit{constraint} is an empty-head clause.
A theory $\P$ is said to be \textit{positive}
if no costraint occurs in $\P$.

The semantics of CNF theories relies on the concepts of {\em interpretation} and {\em model}, which are recalled next.
An \textit{interpretation} $I$ for the theory $\P$ is a  set of atoms from $\P$. An atom is {\em true} (resp., {\em false})
in the interpretation $I$ if $a \in I$ (resp., $a \not\in I$). A rule $H \leftarrow B$ is true in $I$ if
either at least one atom occurring in $H$ is true in $I$ or at least one atom occurring in $B$ is false in $I$. An
interpretation $I$ is a \textit{model} for a theory $\P$ if all clauses occurring in $\P$
are true in $I$. A model $M$ for $\P$ is \textit{minimal} if no proper subset of $M$ is a
model for $\P$.

A directed graph $\G(\P)$, called \emph{positive dependency graph}, can be
associated with a
theory $\P$. Specifically, nodes in $\G(\P)$ are associated with atoms
occurring in $\P$ and, moreover, there is a directed edge $(m,n)$ from a node
$m$ to a node
$n$ in $\G(\P)$ if and only if there is a clause
$H \leftarrow B$ of $\P$ such that
the atom associated with $m$ is in $B$ and the atom associated with $n$
is in $H$.

Given a clause $c\equiv H\leftarrow B$ and a set of atoms $X$,
$c_{X\leftarrow}$ denotes the clause $H\cap X\leftarrow B$,
whereas $c_{X}$ denotes the clause $H\cap X\leftarrow B\cap X$.
Given a theory $\P$ and a set of atoms $X$, the theory $\P_{X\leftarrow}$
includes all \textit{non-empty head} clauses $c_{X\leftarrow}$, with $c$ a
clause in $\P$.
Analogously, the theory $\P_{X}$
includes all \textit{non-empty head} clauses $c_{X}$, with $c$ a clause in $\P$.
Given a theory $\P$, the theory $\P^{nd} \subseteq \P$ includes
all Horn clauses of $\P$.
In the following, we assume that the operators
$\cdot_X$ and $\cdot_{X\leftarrow}$ have precedence
over the operator $\cdot^{nd}$, thus that the expression
$\P^{nd}_X$ ($\P^{nd}_{X\leftarrow}$, resp.)
is to be intended equivalent to
$(\P_X)^{nd}$ ($(\P_{X\leftarrow})^{nd}$, resp.).

Table \ref{table:symbols} summarizes some of the 
symbols
used throughout the paper (some of them 
are defined in subsequent sections).

\begin{table}[t]
\small
\centering
\begin{tabular}{|l|c|c|c|c|c|}
 \hline
 \it Class of CNF Theory & REC & MFP & MMP & MMCP & MMFP \\ 
 \hline\hline
 \it General & --- & NP-{\it h} & $\rm P^{NP[O(\log n)]}$-{\it h} & coNP & $\rm \Sigma^P_2$-{\it h} \\
 \hline
 \it Positive general & P & FP & $\rm P^{NP[O(\log n)]}$-{\it h} & coNP & $\rm P^{NP[O(\log n)]}$-{\it h} \\
 \hline
 \it HEF & coNP & NP-{\it h} & \bf FP & \bf P & NP-{\it h} \\
 \hline
 \it Positive HEF & coNP & FP & \bf FP & \bf P & \bf FP \\
 \hline
 \it HCF & P & FP & FP & P & FP \\
 \hline
\end{tabular}
\caption{Problems and their complexity.}
\label{table:probl_compl}
\end{table}

\subsection{Problems}\label{sect:model_finding}

Table \ref{table:probl_compl} summarizes the problems and the classes of
CNF theories of interest here 
and reports the associated complexities.

As for the classes of CNF theories,
other than general one here we consider
HEF and HCF theories:
\begin{itemize}
\item[---]
\textit{Head Cycle Free} (HCF) theories \cite{BenEliyahuD94}
are CNF theories such that no connected component of
the associated dependency graph contains two positive literals occurring in the
same clause;

\item[---]
\textit{Head Elementary-Set Free} (HEF) CNF theories, the definition
of which we adapt from the homonym one given in
\cite{GebserLL06} for disjunctive logic programs (see Section \ref{sect:hef} for the formal definition of HEF theories), 
form a strict superset of the class of HCF theories.
\end{itemize}

The problems (listed in the table) are:
\begin{itemize}
\item[---] \textit{Recognition Problem} (REC):
Given a CNF theory $\P$ and a class $\cal C$ of CNF theories, 
decide if $\P$ belongs to the class $\cal C$;

\item[---] \textit{Model Finding Problem} (MFP):
Given a CNF theory $\P$, compute a model $\M$ for $\P$;

\item[---] \textit{Model Minimization Problem} (MMP):
Given a CNF theory $\P$ and a model $\M$ for $\P$,
compute a minimal model $\MM$ for $\P$ contained in $\M$;

\item[---] {\em Minimal Model Checking Problem} (MMCP):
Given a CNF theory $\P$ and a model $\M$ for $\P$,
check if $\M$ is indeed a minimal model for $\P$;

\item[---] {\em Minimal Model Finding Problem} (MMFP):
Given a CNF theory $\P$, compute a minimal model $\M$ for $\P$.
\end{itemize}
The MFP problem is NP-hard unless the theory is positive.
Indeed, in the latter case, the set consisting of all the literals
occurring in the theory is always a model.

In this work we will focus on the MMP, MMCP, and MMFP problems.

As for MMFP,
it turns out that, over positive CNF theories, this
is hard to solve. In particular, it is known that on positive theories
MMFP is ${\rm P}^{\NP[O(\log n)]}$-hard \cite{Cad92a} (even though positive CNF
theories always have a minimal model!).

Given a CNF theory $\P$ and a model $\M$ for $\P$,
it is worth noticing that the theory $\P_\M$
is always a positive CNF
and {that the models of $\P_\M$ are a subset of those of $\P$}.
This explains the fact that the complexity of the MMP and MMCP
problems, which have in input a model $\M$ other than the theory $\P$,
does not depend on positiveness of the theory.

Moreover, we notice that MMFP is not easier than MMP and MMCP since the latter problems
can be reduced to the former one as follows:
\begin{itemize}
\item[---] As for MMP, return ${\rm MMFP}(\P_\M)$;

\item[---] 
As for MMCP,
return true if ${\rm MMFP}(\P_\M) = \M$ and false otherwise.
\end{itemize}
Thus, if for a certain class of theories the MMFP were 
tractable, then both MMP and MMCP would become tractable as well.

Moreover, if attention is restricted to positive theories,
the MMP and MMFP problems coincide (since this time
MMFP can be reduced to MMP by setting $\M$
to the set of all the literals occurring in $\P$) and, consequently,
MMP on general theories is equivalent to MMFP on positive theories.

We notice that, on the other hand, for head-cycle-free CNF theories
things are easier than for the general case: indeed it was
proved in \cite{Ben-Eliyahu-ZoharyP97} that the MMFP 
is solvable in polynomial time if the input theory is HCF.

All that given, the following section details the contributions of the paper.

\subsection{Contributions and algorithms road map}

In this work we investigate the MMP and MMCP problems
on CNF theories and the MMFP on positive CNF theories.

Among the main contributions offered here, we will show that
MMP and MMCP are tractable on generic HEF theories,
while MMFP is tractable on positive HEF theories.

In order to provide a uniform treatment of these problems,
we will concentrate on algorithms for the MMP,
which can be considered the most general of them since its input consists of
both a CNF theory and a (not necessarily) non-minimal model of the theory.
Specifically, we provide a polynomial time algorithm
solving the MMP on general HEF CNF theories which, because of the observations made above, can be directly used to solve in polynomial time
the following five problems (see also 
cells of Table \ref{table:probl_compl} reported in bold):
($i$) MMP on non-positive HEF CNF theories,
($ii$) MMCP on non-positive HEF CNF theories,
($iii$) MMP on positive HEF CNF theories,
($iv$) MMCP on positive HEF CNF theories, and
($v$) MMFP on positive HEF CNF theories.

Also already noticed, differently from HCF theories, which turn out to
be recognizable in polynomial time \cite{BenEliyahuD94},
recognizing HEF theories is an intractable problem \cite{FassettiP10}.
This undesirable property may seem to limit the applicability of the above complexity results.
However, as better explained next,
we show that our MMP algorithm can be fed with any CNF theory $\Pi$ and any model $\M$ of $\Pi$ and it is
guaranteed to corectly minimize $\M$ at least in the case that the theory $\Pi$ is HEF.
Notice that this property holds without the need to recognize whether the input theory is HEF or not.

To illustrate, we start by presenting an algorithmic schema, called the
{\em Generalized Elimination Algorithm (GEA)} for model minimization
over CNF theories.
The GEA invokes a suitable \textit{eliminating operator}
in order to converge towards a minimal model of the input theory.
Intuitively, an eliminating operator is any function that,
given a model as the input, returns a model strictly included therein,
if one exists.
Therefore, the actual complexity of the GEA depends on
the complexity of the specific {eliminating operator} one decides to employ.
Clearly, the trivial eliminating operator may enumerate
(in exponential time)
all the interpretations contained in the given model
and check for satisfiability of the theory,
while we shall consider actually interesting only those 
eliminating operators that accomplish
their task in polynomial time.

A specific eliminating operator, denoted by $\xi_{\rm HEF}$, is henceforth defined,
by which the GEA computes a minimal model of any HEF theory in polynomial time.
However, the intractability of the recognition
problem for HEF CNF theories
may seem to narrow the applicability
of the results sketched above and to
reduce their significance to a mere theoretical result.
This seemingly relevant limitation can fortunately be overcome by
suitably readapting the structure of our algorithm:
to this end, we introduce the
\textit{Incomplete Generalized Elimination Algorithm} (IGEA)
that, once instantiated with a suitable operator, outputs a model of the
input theory, which is guaranteed to be minimal at least over HEF theories.

The design of IGEA leverages on the notion of \textit{fallible eliminating operator},
which is defined later in this paper.
Then, by coupling IGEA with the $\xi_{\rm HEF}$ operator,
we call this instance of the algorithm $\rm IGEA_{\xi_{\rm HEF}}$,
we obtain a polynomial-time algorithm
that always minimizes the input model of a HEF CNF
theory without the need of knowing in advance
whether the input CNF theory is HEF or not.
As for non-HEF theories, we show
that $\rm IGEA_{\xi_{\rm HEF}}$
always returns a model of the input theory which may be minimal or not, 
depending on the structure of the input theory.
This kind of behavior on non-HEF theories
is clearly the expected one since, as already noticed, recognizing HEF theories
is co-NP-complete.
Interestingly, this latter characteristics of $\rm IGEA_{\xi_{\rm HEF}}$
further enhances
its relevance, since its application is not restricted
to the class of HEF CNF theories, but to a even broader class thereof.

\begin{figure}[t]
\centering 
\begin{tabular}{cc}
\fbox{
$\begin{array}{rrll}
\P = \{  & g \vee j    & \leftarrow & \\
        & f \vee h    & \leftarrow & \\
        & b    & \leftarrow a & \\
        & c    & \leftarrow b & \\
        & a    & \leftarrow c & \\
        & d    & \leftarrow a, b & \\
        & c    & \leftarrow d & \\
        & e    & \leftarrow b & \\
        & h    & \leftarrow b & \\
        & f    & \leftarrow e, i & \\
        
        & i    & \leftarrow e, j & \\
        
        & g    & \leftarrow f & \\
        & e    & \leftarrow g & \\
        & j    & \leftarrow e & \\
        & h    & \leftarrow j & \\
        & j    & \leftarrow h & \\
        & c    & \leftarrow h, e & \} \\
\end{array}$} & 
 \qquad
\fbox{
\begin{minipage}{0.45\textwidth}
\includegraphics[width=1.0\textwidth]{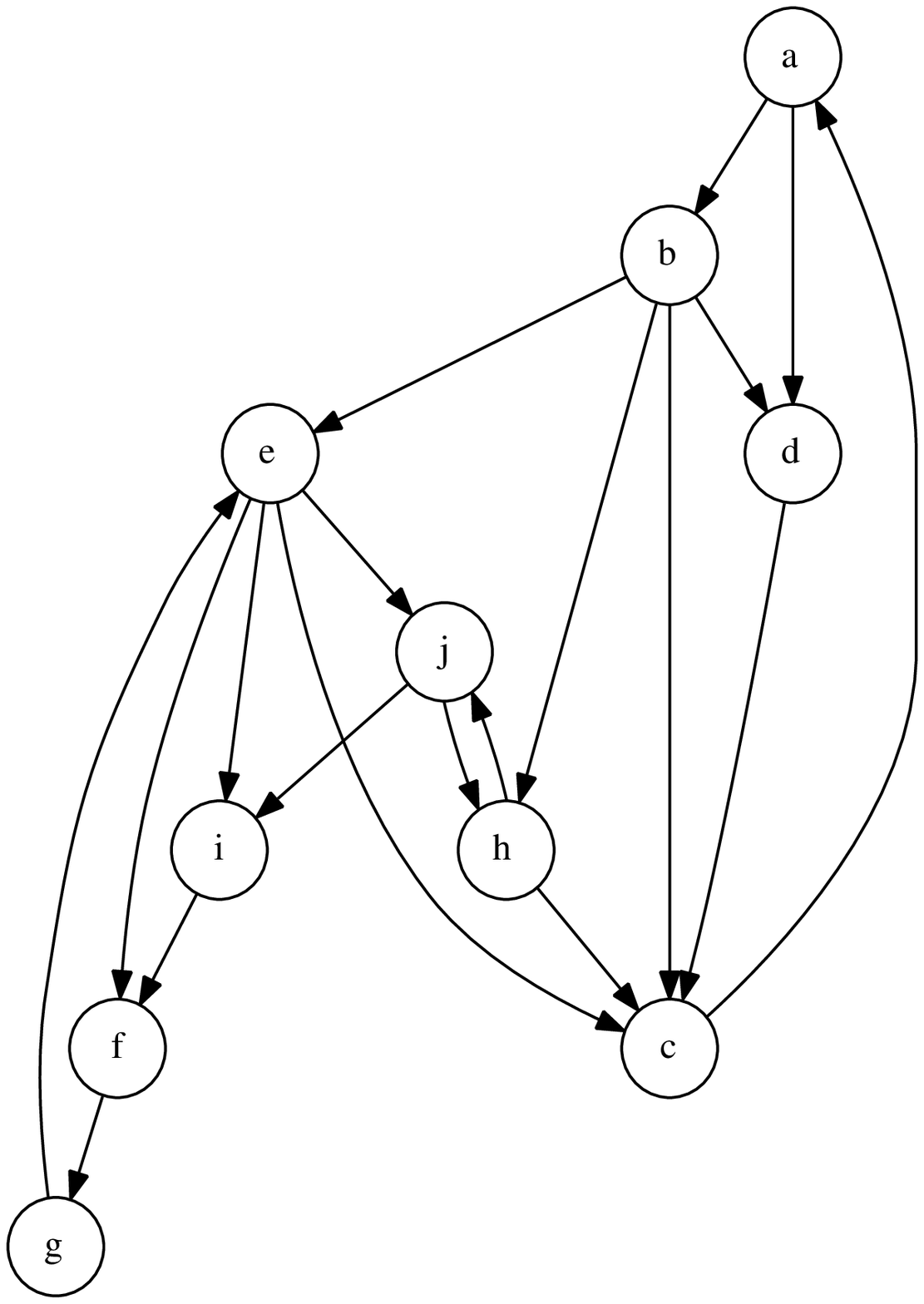}
\end{minipage}
}
\end{tabular}
\caption{A positive CNF theory and the associated dependency graph.}
\label{fig:ex_poscnf}
\end{figure}

\subsection{Application scenarios}
\label{sect:appl_scen}

In this section
we consider generic CNF theories
without concentrating on the particular class
(that is, general, HEF or HCF)
they belong to.
Later, in Section \ref{sect:appl_scen_hef}, 
we specialize some of the examples provided next
in the context of HEF theories, which is a main focus in our investigation.

As already noticed, the minimal model finding problem
is a formidable one 
and remains intractable even in the case attention
is restricted to positive CNFs. 
The following positive CNF theory 
will be employed in order
to describe the various concepts introduced
throughout the paper.

\begin{myexample}{Minimal models of positive CNF theories}\rm
\label{ex:pos_cnf}
Figure \ref{fig:ex_poscnf} reports an example of positive CNF theory $\P$ (on the left)
together with the associated dependency graph $\G(\P)$ (on the right).
The set $\atom{\P}$ is $\{a$, $b$, $c$, $d$, $e$, $f$, $g$, $h$, $i$, $j\}$ and it is the largest model
of $\P$. This theory has several models, but only a minimal one, which is
$\{j,h\}$.
\end{myexample}

\begin{figure}[t]
\centering 
\[\begin{array}{cc}
\fbox{$\begin{array}{rrll}
P = \{  & d    & \leftarrow \nbd c & \\
        & b    & \leftarrow a, e, \nbd d & \\
        & a    & \leftarrow b, e, \nbd d & \\
        & a    & \leftarrow c   & \\
        & b    & \leftarrow c  & \\
        & c    & \leftarrow a, b & \\
        & a, b & \leftarrow \nbd f & \}
\end{array}$} & 
\qquad
\fbox{$\begin{array}{rrll}
\M = \{ & a, d &  \} \\ \\
P^\M = \{ & d    & \leftarrow & \\
         & a    & \leftarrow c   & \\
         & b    & \leftarrow c  & \\
         & c    & \leftarrow a, b & \\
         & a, b & \leftarrow & \}
\end{array}$}
\end{array}\]
\caption{A logic program $P$, a model $\M$ of $P$, and the reduct $P^\M$.}
\label{fig:ex_stable}
\end{figure}

To illustrate a setting in which 
positive CNFs natural arise, consider Logic Programming,
a central tool in Knowledge Representation and Reasoning.
In the field of
Logic Programming, the notion of negation by 
default poses the problem of defining a proper notion
of model of the program. Among the several proposed semantics for logic programs
with negation, the
Stable Models and Answer Sets semantics are nowadays the reference one for
closed world scenarios \cite{GelfondL88}.
An interesting
application of our techniques
concerns stable model (or answer set) checking. 
To illustrate, stable models 
exploit the concept of the reduct of the program, as clarified in the following definition.

\begin{definition}[Stable Model \cite{GelfondL88}]\rm 
Given a logic program $P$ and a model $\M$ of $P$, 
the \textit{reduct} of $P$ w.r.t $\M$, also denoted by $P^\M$, is the program built from
$P$ by ($i$) removing all rules that contain a negative literal $not~a$ in the
body with $a \in \M$, and ($ii$) removing all negative literals from the remaining
rules.
A model $\M$ of $P$ is {\em stable} if $\M$ is a minimal model of $P^\M$.
\end{definition}

\begin{myexample}{Stable Models of Logic Programs}
\label{ex:stable_models}
Figure \ref{fig:ex_stable} shows, on the left, a logic program $P$
and, on the right, the reduct $P^\M$ of $P$ w.r.t. the model $\M=\{a,d\}$.
In this case, 
$\M$ is a minimal model of $P^\M$
and, hence, it is a stable model of $P$.
\end{myexample}

It is worth noticing that $P^\M$ is a CNF since,
by definition of the reduct, negation by default does not occur in any clause
of $P^\M$. Moreover, $\M$ is always a model of $P^\M$ and is given in input.

Therefore, by setting $\Pi=P^\M$ the problem 
of verifying if a 
given model $\M$ for the logic program $P$ is stable
fits the minimal model checking problem
for positive CNFs and, as such, can be suitably dealt with using the techniques this paper proposes.

\begin{figure}[t]
\centering 
\[\begin{array}{cc}
\fbox{$\begin{array}{rrll}
\P = \{  & b    & \leftarrow a & \\
        & c    & \leftarrow a & \\
        & a    & \leftarrow b, c & \\
        & b, c & \leftarrow & \\
        & d    & \leftarrow\\
        &      & \leftarrow b, d & \} \\ ~\\ ~\\ ~\\ ~\\ 
\end{array}$} & 
\qquad
\fbox{$\begin{array}{rrll}
\P^+ = \{  & b    & \leftarrow a & \\
        & c    & \leftarrow a & \\
        & a    & \leftarrow b, c & \\
        & b, c & \leftarrow & \\
        & d    & \leftarrow\\
        & \phi & \leftarrow b, d & \\
        & a & \leftarrow \phi & \\
        & b & \leftarrow \phi & \\
        & c & \leftarrow \phi & \\
        & d & \leftarrow \phi & \}
\end{array}$}
\end{array}\]
\caption{A CNF $\P$ and its positive form $\P^+$.}
\label{fig:ex_posform1}
\end{figure}

In order to analyze a different application scenario, let us assume a positive CNF is given.
Next we show that the given theory can be indeed reduced to a positive theory
whose models have some clear relationship with the models of the original theory.

Let us first consider the definition of positive form of a CNF.

\begin{definition}[Positive Form of a CNF theory]\label{def:positive_form}\rm
The theory $\P^+$, also said the \textit{positive form} of $\P$, is defined as follows:
(1) for each clause $H \leftarrow B$ of $\P$, 
if $H$ is not empty then the clause $H \leftarrow B$ is in $\P^+$;
(2) for each clause $\leftarrow B$ of $\P$, the clause $\phi \leftarrow B$ is in $\P^+$;
(3) for each atom $a$ occurring in $\P$, the clause $a\leftarrow\phi$
is in $\P^+$.
\end{definition}

The following result relates models of $\P$ with
minimal models of $\P^+$.

\begin{proposition}
Given a CNF theory $\P$, 
{if $\phi$ belongs to the (unique) minimal model of $\P^+$
then $\P$ is inconsistent, otherwise the set of
minimal models of $\P$ and $\P^+$ coincide}. 
\end{proposition}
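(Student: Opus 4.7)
The plan is to split the analysis according to whether $\P$ is consistent, using as the main lever an equivalence between the models of $\P$ and the $\phi$-free models of $\P^+$. I first observe that $\phi$ is implicitly a fresh atom not belonging to $\atom{\P}$ (as tacitly required by Definition \ref{def:positive_form}), so that any interpretation either contains $\phi$ or is a subset of $\atom{\P}$, and that the type (3) clauses $a \leftarrow \phi$ force any model of $\P^+$ that contains $\phi$ to contain all of $\atom{\P} \cup \{\phi\}$. The key lemma I would establish is that for every $\M \subseteq \atom{\P}$, $\M$ is a model of $\P^+$ if and only if $\M$ is a model of $\P$: the type (1) clauses of $\P^+$ coincide with the non-empty-head clauses of $\P$, while a type (2) clause $\phi \leftarrow B$ is satisfied by $\M$ (with $\phi \notin \M$) precisely when $B$ is not entirely true in $\M$, i.e.\ when the constraint $\leftarrow B$ of $\P$ is satisfied by $\M$.

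From this equivalence the inconsistent case is immediate: no subset of $\atom{\P}$ can be a model of $\P^+$, so every model of $\P^+$ must contain $\phi$ and hence all of $\atom{\P} \cup \{\phi\}$. Since $\atom{\P} \cup \{\phi\}$ trivially satisfies every clause of $\P^+$ (each clause of $\P^+$ has non-empty head fully contained in this set), it is the unique minimal model of $\P^+$ and it contains $\phi$, as claimed.

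For the consistent case, any model $\M$ of $\P$ is, by the lemma, a $\phi$-free model of $\P^+$ that is strictly contained in $\atom{\P} \cup \{\phi\}$. Consequently, no minimal model of $\P^+$ may contain $\phi$, for such a model would have to include $\atom{\P} \cup \{\phi\}$, which strictly contains $\M$. Hence every minimal model of $\P^+$ is a subset of $\atom{\P}$, and minimality in $\P^+$ and minimality in $\P$ transfer back and forth via the lemma, as in both directions the subsets being compared lie in $\atom{\P}$. This yields the desired coincidence of the sets of minimal models. I expect the main delicacy to lie exactly in this consistent-case argument: the exclusion of $\phi$ from minimal models of $\P^+$ hinges simultaneously on the equivalence lemma and on the existence of at least one model of $\P$, and a careless formulation could easily become circular.
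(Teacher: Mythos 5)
Your proof is correct and follows essentially the same route as the paper's: the clause-by-clause equivalence between models of $\P$ and the $\phi$-free models of $\P^+$, together with the observation that any model of $\P^+$ containing $\phi$ must be all of $\atom{\P}\cup\{\phi\}$. If anything, your write-up is slightly more complete, since the paper only verifies that a minimal model of $\P^+$ not containing $\phi$ is a model of $\P$, whereas your explicit two-way lemma also makes the transfer of \emph{minimality} in both directions immediate.
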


\begin{proof}

First of all, we will observe that each model of $\P$ is a model of $\P^+$ as
well and, then, $\phi$ is not in $\M$.

\begin{observation}\label{obs:model}
Let $\M$ be a model for $\P$ and consider the theory $\P^+$.
All the clauses (1) in $\P^+$ are also in $\P$ and then are true.
Since $\M$ is a model for $\P$ all the empty-head clauses of $\P$ don't have
the body fully contained in $\M$ and, therefore, $\M$ satisfies all the clauses (2) of
$\P^+$. Finally, since $\phi$ is not in $\M$ all the clauses (3) are true.
\end{observation}

Now, let $\M^+$ be a minimal model of $\P^+$ and $\atom{\P^+}$ be the set of all
atoms occurring in $\P^+$.

Note that, because of the presence of the set of clauses (3), two cases are possible, that are:
either $\M^+$ contains $\phi$ and then all the atoms occurring in $\P^+$; or
$\M^+\subset \atom{\P^+}$ and, in particular, $\phi \not \in \M^+$.

\begin{enumerate}
 \item As for the first case, if $\P$ had a model $\M$ then, due to
Observation \ref{obs:model}, $\M$ would be a model of $\P^+$ as well and then
$\M^+$ would not be minimal. Thus, $\P$ is inconsistent.
 \item As for the second case, $\M^+$ does not contain $\phi$. Consider now the
theory $\P$. All the non-empty-head clauses in $\P$ are also in $\P^+$ and,
then, are satisfied by $\M^+$. Consider, now, the empty-head clauses in $\P$.
Because of the presence of clauses (2) in $\P^+$, and since $\M^+$ does not contain
$\phi$, it is the case that the body of such clauses is not fully contained in
$\M^+$. Thus, the correspondent clauses in $\P$ are satisfied by $\M^+$.
\end{enumerate}

This implies that $\M^+$ is a model of $\P$ as well.
\end{proof}

To illustrate, consider the following example.

\begin{myexample}{General CNF theories}
\label{ex:posform}
Consider the CNF reported in Figure \ref{fig:ex_posform1}
on the left. In the same figure, one the right, it is reported
the positive form $\P^+$ of $\P$.
$\P$ has only one minimal model, namely $\{c, d\}$,
which is precisely the unique minimal model of $\P^+$.
\end{myexample}

\section{Generalized Elimination Algorithm}\label{sect:algo}

In this section, a generalization of the elimination algorithm proposed in \cite{Ben-Eliyahu-ZoharyP97},
called Generalized Elimination Algorithm, is introduced.
We begin by providing some preliminary concepts, notably, those of \textit{steady set} and \textit{eliminating operator}.

Intuitively, given a model $\M$ for a theory $\P$, the steady set is the subset
of $\M$ containing atoms which ``cannot'' be erased from $\M$, for otherwise
$\M$ would no longer be a model for $\P$. As proved next, the steady set can be
obtained by computing the model of a certain non-disjunctive theory.

\begin{definition}[Steady set]
Given a CNF theory $\P$ and a model $\M$ for $\P$,
the minimal model $\S \subseteq \M$ of the theory $\P^{nd}_{M\leftarrow}$ 
is called the \emph{$\steady$ set} of $\M$ for $\P$.
\end{definition}

Note that the steady set $\S$ of $\M$ for $\P$ always
exists and is unique.
Indeed, $\P^{nd}_{M\leftarrow}$ is a Horn positive CNF
and it is known that these kinds of theories
have one and only one minimal model (which can be computed in polynomial time)
\cite{DowlingG84,Lloyd1987}.

\begin{property}[$\mathcal{MM}$-containment]\label{prop:mm_cont}
Given a positive CNF theory $\P$, a model $\M$ for $\P$
and the steady set $\S$ of $\M$ for $\P$,
it holds that each model of $\P$ contained in $\M$ contains $\S$.
\end{property}
\begin{proof}
First, notice that the models of the positive CNF theory $\P$
which are contained in the model $\M$ of $\P$
coincide with the models of the positive CNF theory $\P_{\M\leftarrow}$.

Since $\Pnd_{\M\leftarrow}$ is contained in $\P_{\M\leftarrow}$,
by monotonicity of propositional logic, it follows that all logical
consequences of $\Pnd_{\M\leftarrow}$
are also logical consequences of $\P_{\M\leftarrow}$ and, hence,
each model of $\P_{\M\leftarrow}$ contains the
unique minimal model of $\Pnd_{\M\leftarrow}$, which is the steady set of $\M$
for $\P$.
\end{proof}

\begin{definition}[Erasable set]
Let $\M$ be a model of a positive CNF theory $\P$.
A non-empty subset $\E$ of $\M$ is said to be
\emph{erasable} in $\M$ for $\P$
if $\M \setminus \E$ is a model of $\P$.
\end{definition}

The following result holds.

\begin{proposition}
Let $\M$ be a model of a positive CNF theory $\P$,
let $\S$ be the steady set of $\M$ for $\P$,
and let $\E$ be a set  erasable in $\M$ for $\P$.
Then, $\E\subseteq\M\setminus\S$.
\end{proposition}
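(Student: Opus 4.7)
The plan is to derive the conclusion directly from Property~\ref{prop:mm_cont} ($\mathcal{MM}$-containment), which is the only nontrivial ingredient needed. The whole argument is essentially a one-line set-theoretic manipulation, so there is no serious obstacle; the task is to make the chain of inclusions explicit.

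First, I would unpack the hypothesis: since $\E$ is erasable in $\M$ for $\P$, by definition the set $\M' = \M\setminus\E$ is a model of $\P$, and $\M' \subseteq \M$. Thus $\M'$ is a model of the positive CNF theory $\P$ that is contained in $\M$.

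Next, I would apply Property~\ref{prop:mm_cont} to $\M'$: every model of $\P$ contained in $\M$ must contain the steady set $\S$ of $\M$ for $\P$. Therefore $\S \subseteq \M' = \M\setminus\E$, which immediately forces $\S \cap \E = \emptyset$.

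Finally, I would combine this with the definitional fact that $\E \subseteq \M$ (erasable sets are by definition non-empty subsets of $\M$). From $\E \subseteq \M$ and $\E \cap \S = \emptyset$ it follows that $\E \subseteq \M \setminus \S$, which is the desired conclusion. No induction, case analysis, or auxiliary construction is required; the entire proof is a direct invocation of $\mathcal{MM}$-containment followed by elementary set reasoning.
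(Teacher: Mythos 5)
Your proof is correct and follows essentially the same route as the paper: both arguments reduce the claim to Property~\ref{prop:mm_cont} applied to the model $\M\setminus\E$, the only difference being that the paper phrases it as a contradiction while you derive $\S\cap\E=\emptyset$ directly. The extra step you make explicit ($\E\subseteq\M$ together with $\E\cap\S=\emptyset$ yields $\E\subseteq\M\setminus\S$) is a harmless elaboration of what the paper leaves implicit.
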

\begin{proof}
For the sake of contradiction, assume that $\E\cap\S\neq\emptyset$. Then,
$\M\setminus\E$ is a model of $\P$ that does not contain $\S$,
which contradicts the fact that $\S$ has the $\mathcal{MM}$-containment property
in $\M$ for $\P$
(See Property \ref{prop:mm_cont}).
\end{proof}

\begin{definition}[Eliminating operator]
Let $\M$ be a model of a positive CNF  theory $\P$.
An \emph{eliminating operator} $\xi$ is a mapping that,
given $\M$ and $\P$ in input, returns an erasable set in $\M$ for $\P$,
if one exists, and an the empty set, otherwise.
\end{definition}

It immediately follows that
if $\xi(\P,\M) = \emptyset$ then $\M$ is a minimal
model of $\P$.
This is easily shown by observing that
$\xi(\P,\M) = \emptyset$ implies that there is no erasable set in $\M$,
namely no subset of $\M$ is a model for $\P$.

We are now ready to present our algorithmic schema,
referred to as the Generalized Elimination Algorithm (GEA)
throughout the paper,
which is summarized in Figure \ref{fig:elimination_algo}.
Note that GEA has an operator $\xi$ as its parameter\footnote{The term {\em schema}
is used here since actual algorithms are obtained only
after instantiating the generic $\xi$ operator invoked in
the GEA to a specific operator.}.

\newcommand{\MMin}{{\M^\ast}}

\begin{figure}[t]
\begin{algorithm}[H]
\footnotesize
\KwIn{A CNF theory $\P$ and a model $\M$ of $\P$}
\KwOut{A minimal model $\MMin$ of $\P$ contained in $\M$}
\BlankLine
remove all constraints from $\P$\;
$stop$ = $false$\;
\Repeat{$stop$ \label{line:end_out_cycle}}{\label{line:start_out_cycle}
	compute the minimal model $\S$ of $\Pnd_{\M\leftarrow}$\;
	\If{$\S$ is a model of $\P$}{
		$\MMin = \S$
		$stop$ = $true$\;
	}\Else{
		$\E = \xi(\P,\M)$\;
		\If{$(\E = \emptyset)$}{
			$\MMin = \M$\;
			$stop$ = $true$\;
		}\Else{
			$\M = \M \setminus \E$\label{line:compute_xi_S}\;
		}
	}
}
\Return $\MMin$
\caption{Generalized Elimination Algorithm with operator $\xi$, $GEA_{\xi}(\P,\M)$}
\label{algo:elimination}
\end{algorithm}
\caption{Generalized Elimination Algorithm with operator $\xi$, $\rm GEA_{\xi}(\P,\M)$}
\label{fig:elimination_algo}
\end{figure}

Our first result states that GEA is correct under the condition that the operator parameter $\xi$ is an eliminating operator.

\begin{theorem}[GEA correctness]\label{theo:gea_correct}
Let $\P$ be a CNF theory and $\M$ be a model of $\P$.
If $\xi$ is an eliminating operator,
then the set returned by $\rm GEA_{\xi}$ on input $\P$ and $\M$ 
is a minimal model for $\P$ contained in $\M$.
\end{theorem}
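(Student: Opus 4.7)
The plan is to verify four things: (i) the preprocessing step that strips constraints does not change the target, (ii) the current value of $\M$ remains a model throughout the loop, (iii) the loop terminates, and (iv) upon exit, the returned set is a minimal model of $\P$ contained in the input $\M$.

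For (i), after line~1 we operate on a positive CNF $\P'$ obtained from $\P$ by deleting all empty-head clauses. For every $\M' \subseteq \M$, each constraint $\leftarrow B$ of $\P$ is automatically satisfied by $\M'$, since $\M$ being a model of $\P$ gives $B \not\subseteq \M$, whence $B \not\subseteq \M'$. Consequently, the models of $\P$ contained in $\M$ coincide with the models of $\P'$ contained in $\M$, and the same holds for \emph{minimal} models. It therefore suffices to show that $\mathrm{GEA}_\xi$ returns a minimal model of $\P'$ contained in $\M$.

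For (ii) and (iii), I argue by induction on the iteration count that the current $\M$ is always a model of $\P'$: at each update $\M \gets \M \setminus \E$ the hypothesis on $\xi$ ensures that $\E = \xi(\P', \M)$ is erasable, so $\M \setminus \E$ is again a model of $\P'$. Each such update strictly shrinks $|\M|$ because $\E$ is nonempty along that branch; since the initial $\M$ is finite, the loop halts after finitely many iterations.

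For (iv), consider the two exit branches. If the algorithm terminates by returning $\S$, then $\S$ is a model of $\P'$ (by the test preceding the assignment) and is the steady set of the current $\M$ for $\P'$; Property~\ref{prop:mm_cont} then asserts that every model of $\P'$ contained in the current $\M$ contains $\S$, so in particular no proper subset of $\S$ is a model of $\P'$, whence $\S$ is minimal. If instead the algorithm terminates by returning $\M$ after $\xi(\P', \M) = \emptyset$, then by the definition of eliminating operator no erasable set exists in $\M$ for $\P'$, i.e., no proper subset of $\M$ is a model of $\P'$, so $\M$ itself is minimal. In either case the returned set is contained in the input $\M$ (the current $\M$ only ever shrinks) and is a minimal model of $\P'$, hence a minimal model of $\P$ by~(i). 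No step is technically demanding; the argument is essentially an orchestration of Property~\ref{prop:mm_cont}, the definition of eliminating operator, and the observation that constraint removal is innocuous for subsets of the input model. The real difficulty — and the substance of the paper — lies not in this correctness schema but in constructing concrete eliminating operators evaluable in polynomial time, which is the focus of the subsequent sections.
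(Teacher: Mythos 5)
Your proof is correct and follows essentially the same route as the paper's: discard constraints (justified because any subset of the input model still satisfies them), maintain the invariant that $\M$ stays a model while strictly shrinking, and analyze the two exit branches via Property \ref{prop:mm_cont} and the definition of eliminating operator. Your write-up is somewhat more explicit than the paper's (notably in justifying the minimality of $\S$ on the early-exit branch and the containment in the input model), but there is no substantive difference in the argument.
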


\begin{proof}
First of all, since $\M$ is a model of $\P$, by definition of model 
all the constraints (aka empty-head clauses)
of $\P$ are true in $\M$ and
are also true in any subset of $\M$. Hence, they can be disregarded
during the subsequent steps (see line 1).

Moreover, note that,
by definition of steady set, it follows that the set $\S$
computed at the beginning of each iteration of the algorithm (line 4)
is a (not necessarily proper) subset of every minimal model contained in $\M$.
Let $n$ be the number of atoms in the model $\M$ computed at line 1
of the GEA.

Three cases are possible, which are discussed next:
\begin{enumerate}
\item {\em $\S$ is a model of $\P$.} Since $\S$ is the steady set of $\M$ for $\Pi$,
if $\S$ is a model for $\P$, then it is also minimal; so the algorithm stops
and returns a correct solution.\footnote{We recall that if the steady set $\S$
of M for $\P$ is 
a model of $\P$ then it is
the unique minimal model of $\P$ contained in $\M$.
Hence, the test at line $4$ serves the purpose of accelerating
the termination of the algorithm. However, operations in lines
3-6 could be safely dropped without affecting the
correctness of the algorithm.}

\item {\em ${\E} = \emptyset$}. By definition of eliminating operator, if $\E$
is empty, then $\M$ is a minimal model;
so the algorithm stops and returns a correct solution.

\item {\em $\E \neq \emptyset$.} In this case,
a non-empty set of atoms is deleted from $\M$,
letting (by definitions of eliminating operator and erasable set)
$\M$ still be a model for $\P$.
Thus, at the next iteration, the algorithms
will work with a smaller (possibly not minimal) model $\M$.
Hence, after at most $n$
iterations, either case 1 or case 2 applies.
\end{enumerate}
\end{proof}

The next result states the time complexity of the GEA that, clearly, will depend
on the complexity $C_\xi$ associated with the evaluation of the eliminating
operator $\xi$.

\begin{proposition}\label{prop:gea_cost}
Let $n$ and $m$ denote the number of atoms occurring in the heads of $\P$ and,
overall, in $\P$, respectively.
Then, for any model $\M$ of $\P$,
$\rm GEA_{\xi}(\P,\M)$ runs in time $O(n m + n C_\xi)$.
\end{proposition}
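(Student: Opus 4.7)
The plan is to bound the running time as the product of (a) the cost of one execution of the loop body and (b) the total number of iterations of the repeat--until loop; if (a) is $O(m + C_\xi)$ and (b) is $O(n)$, the announced $O(nm + n C_\xi)$ bound follows immediately.

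For (a), each step inside the loop is cheap. The set $\S$ computed at line 4 is the minimal model of the positive Horn theory $\Pnd_{\M\leftarrow}$, so it can be obtained in $O(m)$ time by the classical linear--time unit--propagation procedure of \cite{DowlingG84}. The test at line 5 that $\S$ satisfies $\P$ is a single scan of the theory, again $O(m)$. The call $\xi(\P,\M)$ costs $O(C_\xi)$ by definition, and computing $\M \setminus \E$ at line 12 is $O(m)$. Summing gives a per--iteration cost of $O(m + C_\xi)$.

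For (b), note first that, by the proof of Theorem \ref{theo:gea_correct}, every non--terminating iteration strictly shrinks $\M$ by removing the non--empty erasable set returned by $\xi$. This already gives the loose bound $|\M| \le m$ on the iteration count. To sharpen it to $n$, I would argue that any atom of $\M$ not appearing in any head of $\P$ is trivially erasable---removing it from $\M$ breaks no clause, since it can occur only inside bodies, where its deletion only helps satisfaction---so such atoms contribute only a constant to the iteration count, while the remaining atoms of $\M$ form a subset of the head atoms of $\P$, whose cardinality is at most $n$.

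The main obstacle is exactly this last point: one needs to argue carefully that non--head atoms of $\M$ do not blow up the iteration count beyond $O(n)$, as opposed to the trivial $O(|\M|) = O(m)$ estimate. Once that is settled, multiplying the two bounds yields $O(n(m + C_\xi)) = O(nm + n C_\xi)$, which is the stated complexity.
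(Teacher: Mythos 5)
Your decomposition is exactly the paper's: bound the per-iteration cost by $O(m+C_\xi)$ (unit propagation for the steady set, a linear scan for the model test, one call to $\xi$) and multiply by the number of iterations. That first half of your argument matches the published proof in substance. The divergence is on the iteration count, and there you have put your finger on a real issue rather than invented one: the paper's proof simply asserts that, since at least one atom is removed per non-terminating iteration, the number of iterations is $O(n)$ --- which, as you observe, only yields $O(|\M|)\le O(m)$ when $\M$ may contain atoms occurring in no head.

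However, your proposed repair does not close the gap. The claim that non-head atoms ``contribute only a constant to the iteration count'' presupposes that they are all stripped away in a single call to $\xi$, but $\xi$ is an arbitrary eliminating operator: it is only required to return \emph{some} erasable set, and a singleton consisting of one non-head atom is erasable. Concretely, for $\P=\{a,b\leftarrow\}\cup\{a\leftarrow c_i : 1\le i\le k\}$ and $\M=\atom{\P}$, the steady-set test never fires early and an eliminating operator may legally return $\{c_1\},\{c_2\},\dots$ in successive iterations, forcing $\Theta(m)$ rounds while $n=2$. So the bound actually provable from this line of argument is $O(m^2+mC_\xi)$; the stated $O(nm+nC_\xi)$ needs an extra hypothesis that neither you nor the paper makes explicit --- for instance that $\M$ consists of head atoms only (as for the model built at line~1 of IGEA), or that $\xi$ eagerly discards the atoms outside $\atom{\ssimpl{\P}{\M}}$ (as $\xi_{\rm HEF}$ does). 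You correctly label this the main obstacle and leave it open; be aware that the paper's own proof passes over the very same point silently.
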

\begin{proof}
Since at each iteration (if the stopping condition is not matched) at least one
atom is removed,
the total number of iterations is $O(n)$.
As for the cost spent at each iteration, the dominant operations are:
$(i)$ computing the (unique) minimal model of a non-disjunctive theory (line 4)
which can be accomplished in linear time w.r.t. $m$ by the well-known unit
propagation procedure \cite{DowlingG84};
$(ii)$ checking if a set of atoms is a model (line 5)
which can be accomplished in linear time in $m$ as well;
$(iii)$ applying the eliminating operator (line 8), whose cost is $C_\xi$. This
closes the proof.
\end{proof}

In particular,
consider the naive operator $\xi_{\exp}$
that enumerates all the $2^n$ non-empty subsets of $\M$
and either returns one of these, call it $\E$,
such that $\M \setminus\E$ is a model for $\P$,
or an empty set if such a set $\E$ does not exist. The resulting algorithm
$\rm GEA_{\xi_{\exp}}$ returns a minimal model
of $\P$ but requires exponential running time.

\medskip
Conversely,
as an example of instance of the GEA algorithm
having polynomial time complexity on a specific class 
of CNF theories,
consider
the Elimination
Algorithm presented in \cite{Ben-Eliyahu-ZoharyP97}.
This algorithm can be obtained from the GEA 
by having the operator $\xi_{\rm HCF}$ (described next) as 
the eliminating 
operator $\xi$
and the set $\atom{\P}$ as the input model $\M$.
Indeed,
as shown in \cite{Ben-Eliyahu-ZoharyP97},
the Elimination Algorithm
computes a minimal model of a positive HCF theory in
polynomial time. The definition of
$\xi_{\rm HCF}$ operator follows \cite{Ben-Eliyahu-ZoharyP97}.
Let $\P$ be a 
{positive} HCF CNF
theory and let $\M'$ be the set of the heads of the disjunctive rules in
$\P_{\M\leftarrow}$ which are false in $\M$. Then,
$\xi_{\rm HCF}(\P, \M)$ is defined to return a \textit{source} of $\M'$,
where a source of the set of atoms $\M'$ is a connected component in the subgraph of
$\G(\P)$ induced by $\M'$ which does not have incoming arcs.

\bigskip
Before leaving this section, we provide two further results 
which will be useful when discussing the MMCP and the MMFP.

\begin{lemma}\label{lemma:LemmaCheck}
Given a CNF theory $\P$, an eliminating operator $\xi$ and a model $\M$
of $\P$, $\M$ is minimal for $\P$ if and only if $\rm GEA_{\xi}(\P,\M)$ outputs $\M$.
\end{lemma}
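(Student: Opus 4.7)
The plan is to handle the two directions separately. The backward direction ($\Leftarrow$) is essentially a one-line appeal to Theorem \ref{theo:gea_correct}: assuming $\xi$ is an eliminating operator, that theorem guarantees that the output of $\rm GEA_{\xi}(\P,\M)$ is a minimal model of $\P$ contained in $\M$, so if this output coincides with $\M$ itself, then $\M$ is a minimal model of $\P$.

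For the forward direction ($\Rightarrow$), I would assume $\M$ is minimal for $\P$ and show that the algorithm terminates after a single iteration with output $\M$. Line 1, which drops the constraints from $\P$, is innocuous because $\M$ (and any subset thereof we might consider) already satisfies them. At line 4 the steady set $\S \subseteq \M$ is computed, and I would case-split on the outcome of the test at line 5. If $\S$ is a model of $\P$, then Property \ref{prop:mm_cont} (MM-containment) says $\S$ is contained in every model of $\P$ below $\M$; but $\M$ itself is such a model, and by minimality of $\M$ this forces $\S = \M$, so the algorithm outputs $\S = \M$ at line 6. Otherwise, control passes to line 8, where $\E = \xi(\P,\M)$ is computed; minimality of $\M$ rules out the existence of any non-empty erasable set in $\M$ for $\P$ (since any such $\E$ would yield the strict sub-model $\M\setminus\E\subsetneq\M$), so by the defining contract of an eliminating operator $\xi(\P,\M) = \emptyset$, and the algorithm exits at line 10 with $\MMin = \M$.

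The only delicate step is the last one: translating the minimality of $\M$ into the emptiness of $\xi(\P,\M)$. This hinges on the precise contract of an eliminating operator, which is required to return some erasable set whenever one exists, together with the fact that by definition erasable sets are non-empty. Once this is observed, the single-iteration analysis above combines cleanly with the backward direction to yield the lemma. No additional invariants, no induction on iteration count, and no further appeal to the structure of $\xi$ are needed.
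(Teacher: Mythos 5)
Your proof is correct, but it takes a more operational route than the paper for the forward direction. The paper's own proof is a one-liner: by Theorem \ref{theo:gea_correct} the output of $\rm GEA_{\xi}(\P,\M)$ is always a minimal model of $\P$ contained in $\M$, and both directions fall out immediately --- if the output equals $\M$ then $\M$ is minimal, and if $\M$ is minimal then the output, being a model contained in $\M$, can only be $\M$ itself. You use the same appeal for the ($\Leftarrow$) direction, but for ($\Rightarrow$) you instead trace the first iteration of the algorithm and show it halts with $\M$, splitting on the test at line 5. That buys a slightly stronger conclusion (single-iteration termination on minimal inputs) and avoids re-invoking the full correctness theorem, at the cost of length. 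One small remark: in your first case the citation of Property \ref{prop:mm_cont} is superfluous --- what actually forces $\S=\M$ is simply that $\S$ is a model of $\P$ contained in $\M$ and $\M$ is minimal, so no proper subset of $\M$ can be a model; Property \ref{prop:mm_cont} only re-establishes $\S\subseteq\M$, which is already part of the definition of the steady set. The rest of your reasoning, in particular the translation of minimality of $\M$ into $\xi(\P,\M)=\emptyset$ via the non-emptiness of erasable sets, is exactly right.
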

\begin{proof}
The proof follows by noticing that GEA always
outputs a (possibly non-proper) minimal sub-model of the initial model $\M$ as
its output. 
\end{proof}

\begin{lemma}\label{lemma:LemmaFind}
Given a positive CNF theory $\P$ and an eliminating operator $\xi$, 
then $\rm GEA_{\xi}(\P,\atom{\P})$ outputs a minimal model of $\P$.
\end{lemma}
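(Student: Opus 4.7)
The plan is to derive this as an immediate corollary of Theorem \ref{theo:gea_correct} (GEA correctness). That theorem already guarantees that whenever $\rm GEA_{\xi}$ is fed a CNF theory $\P$ and a model $\M$ of $\P$, and $\xi$ is an eliminating operator, the returned set is a minimal model of $\P$ contained in $\M$. Since the hypotheses of our lemma already supply a positive CNF theory $\P$ and an eliminating operator $\xi$, the only additional thing I need to check is that $\atom{\P}$ is a model of $\P$.

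First, I would observe that because $\P$ is positive, it contains no empty-head clauses (constraints); every clause has the form $H \leftarrow B$ with $H \neq \emptyset$, and by definition $H \subseteq \atom{\P}$. Taking the interpretation $I = \atom{\P}$, every head atom of every clause is true in $I$, so every clause of $\P$ is satisfied. Hence $\atom{\P}$ is a model of $\P$, and it is in fact the largest model of $\P$ (as already noted in Example \ref{ex:pos_cnf}).

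Having verified this, I would apply Theorem \ref{theo:gea_correct} with $\M = \atom{\P}$ to conclude that $\rm GEA_{\xi}(\P, \atom{\P})$ returns a minimal model of $\P$ contained in $\atom{\P}$, which is exactly a minimal model of $\P$.

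There is essentially no hard step here; the only subtlety worth being explicit about is the positiveness hypothesis, which is what rules out constraints and guarantees that $\atom{\P}$ indeed satisfies every clause. Without positiveness the claim would fail, since a constraint $\leftarrow B$ with $B \subseteq \atom{\P}$ would be falsified by $\atom{\P}$, and $\rm GEA_{\xi}$ could not even be legitimately invoked with $\atom{\P}$ as input.
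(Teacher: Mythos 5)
Your proposal is correct and follows essentially the same route as the paper: verify that $\atom{\P}$ is a model of the positive theory $\P$ and then invoke the correctness of the GEA. The only cosmetic difference is that you cite Theorem \ref{theo:gea_correct} directly, whereas the paper routes the conclusion through Lemma \ref{lemma:LemmaCheck}; both rest on the same underlying fact that $\rm GEA_{\xi}$ always returns a minimal sub-model of its input model.
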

\begin{proof}
The proof follows by noticing that 
$\atom{\M}$ is a model of $\M$, being $\P$ a positive theory,
and by Lemma \ref{lemma:LemmaCheck}.
\end{proof}

\section{Model minimization on HEF CNF theories}\label{sect:hef}

We have noticed above that the complexity of GEA
depends on the complexity characterizing, in its turn,
the specific elimination operator it invokes.
On the other hand, the MMP
being ${\rm P}^{\NP[O(\log n)]}$-hard \cite{Cad92a} implies
that, unless the polynomial hierarchy collapses, the GEA will
generally require exponential time to terminate when called
on a generic input CNF theory.
Therefore, it is sensible to single out significant subclasses
of CNF theories for which it is possible to devise a specific
eliminating operator guaranteeing a polynomial running time
for the GEA.

In this respect, it is a simple consequence of the results presented in
\cite{Ben-Eliyahu-ZoharyP97} that a model
of any head-cycle-free theory can be indeed minimized in polynomial time using the
Elimination Algorithm.
So, the interesting question remains open of whether we can
do better than this. Our answer to this question is affirmative
and this section serves the purpose of illustrating this result.
In particular, we shall show that by carefully defining the
eliminating operator, we can have that the GEA
minimizes 
in polynomial time
a model of any HEF CNF theory.
In Section \ref{sect:beyond}, we shall moreover show that there also exist
CNF theories which are not HEF but for which the algorithm,
equipped with a proper eliminating operator, efficiently minimizes a 
model.

\subsection{Head-elementary-set-free theories and $\sel$ sets}
\label{HEF}

Next, we recall the definition of head-cycle-free theories \cite{BenEliyahuD94},
adapt that of head-elementary-cycle-free theories \cite{GebserLL06} to our
propositional context 
and provide a couple of preliminary results which will be useful in the
following.

We proceed by introducing the concepts of outbound and elementary set.

\begin{definition}[Outbound Set (adapted from \cite{GebserLL06})]
Let $\P$ be a CNF theory. For any set $Y$ of atoms occurring in $\P$, a subset $Z$ of $Y$ is \emph{outbound} in $Y$ for $\P$
if there is a clause $H \leftarrow B$ in $\P$ such that: {\rm({\it i})} $H\cap Z\neq \emptyset$; {\rm({\it ii})} $B\cap (Y\backslash Z)\neq \emptyset$; {\rm({\it iii})} $B\cap Z = \emptyset$ and {\rm({\it iv})} $H\cap (Y\backslash Z) = \emptyset$.
\end{definition}
Intuitively, $Z \subseteq Y$ is outbound in $Y$ for $\P$ if there exists a rule
$c$ in $\P$ such that the partition of $Y$ induced by $Z$ (namely, $\langle Z;
Y\setminus Z \rangle$) ``separates'' head and body atoms of $c$.

\begin{myexample}{Outbound set}\label{ex:elementary_set}
Consider the theory
\begin{eqnarray*}
\begin{array}{rrll}
\P = \{& b, c & \leftarrow a   & \\
            & b    & \leftarrow c   & \\
            & c    & \leftarrow b   & \\
            & a    & \leftarrow b   & \\
            & d    & \leftarrow b,c & \}
\end{array}
\end{eqnarray*}
and the set $E=\{a,b,c\}$. Consider, now, the subset $O=\{a,b\}$ of $E$. $O$ is outbound in $E$ for $\P$ because of the clause $b\leftarrow c$, since $c \in E \setminus O$, $c\not\in O$, $b\in O$ and $b\not\in E \setminus O$.
\end{myexample}

Let $O$ be a non-outbound set in $X$ for $\P$.
$O$ is \textit{minimal non-outbound} if any proper subset $O' \subset O$
is outbound in $X$ for $\P$.

\begin{definition}[Elementary Set (adapted from \cite{GebserLL06})]
Let $\P$ be a CNF theory. For any non-empty set $Y \subseteq \atom{\P}$, $Y$ is
\emph{elementary} for $\P$ if all non-empty proper subsets of $Y$
are outbound in $Y$ for $\P$.
\end{definition}

For example, the set $E_{ex}$ of Example \ref{ex:elementary_set} is elementary
for the theory $\P_{ex}$, since each non-empty proper subset of $E_{ex}$ is
outbound in $E_{ex}$ for $\P_{ex}$.

\begin{definition}[Head-Elementary-Set-Free CNF theory (adapted from \cite{GebserLL07})]\label{def:hef_theory}
Let $\P$ be a CNF theory. $\P$ is \emph{head-elementary-set-free
(HEF)} if for each clause $H \leftarrow B$ in $\P$, there is no elementary set
$E$ for $\P$ such that $|E\cap H|>1$.
\end{definition}
So, a CNF theory $\P$ is HEF if there is no elementary set containing two or more
atoms appearing in the same head of a rule of $\P$.
An immediate consequence of Definition \ref{def:hef_theory} is the following
property.

\begin{property}\label{prop:hef_def}
A theory $\P$ is not HEF if and only if
there exists a set $X$ of atoms of $\P$ such that $X$ is both a disjunctive
and an elementary set for $\P$.
\end{property}

For instance, the theory $\P_{ex}$ of Example \ref{ex:elementary_set} is not
HEF, since for the rule $b,c\leftarrow a$ and the elementary set
$E_{ex}$, we have $|E_{ex} \cap \{b,c\}| > 1$.

\subsection{Examples of HEF theories}
\label{sect:appl_scen_hef}

Now the examples already introduced in Section \ref{sect:appl_scen}
are discussed in the context of HEF CNF theories.

\begin{exampleContinued}{\ref{ex:pos_cnf}}{Minimal models of positive CNF theories}\rm
Consider the theory reported in Figure \ref{fig:ex_poscnf}.
This is an HEF CNF theory since no superset of $\{g,j\}$ and 
no superset of $\{f,h\}$ is an elementary set for $\P$.
\end{exampleContinued}

\begin{exampleContinued}{\ref{ex:stable_models}}{Stable Models of Logic Programs}
A logic program $P$ is HEF if the CNF $\widehat{P}$ obtained by removing
all the literals of the form $not~a$ from the body of its rules
is HEF \cite{GebserLL07}.

Importantly, it holds that if the logic program $P$
is HEF and $\M$ is a model of $P$, then also $P^\M$ is HEF.
This follows since, by definition, a logic program $P$ is HEF if and only if
the CNF $\widehat{P}$ is HEF, and by Lemma \ref{theo:HEF_monotonicity}
(reported in Section \ref{sect:sel_set}) any subset of clauses of 
a HEF CNF is HEF as well, and $P^\M$ is precisely a subset of $\widehat{P}$.

Notably, even if $P$ is not HEF, it could be anyway the case that $P^\M$ is HEF,
and this broadens the range of applicability of the techniques proposed here.

As an example, consider again Figure \ref{fig:ex_stable}.
The program $P$ there reported is not HEF,
since the set $S=\{a,b,c\}$ is both disjunctive and elementary.

Conversely, $P^\M$ is HEF
since the set $S$ is no longer elementary 
because the subsets $\{a,c\}$ and $\{b,c\}$ of $S$ are not outbound in $S$.

{Moreover, we notice that the subgraph of $\G(P^\M)$
induced by $S$ is a connected component 
and then both $P$ and $P^\M$ are not HCF.}
\end{exampleContinued}

\begin{exampleContinued}{\ref{ex:posform}}{General CNF theories}
Given a non-positive CNF $\P$, it holds that if $\P$ is not HEF,
then also $\P^+$ is not HEF.

Let $\P'$ be the subset of $\P$ obtained by
removing the contraints in $\P$.
Notice that $\P'$ can be obtained from $\P^+$ by
first removing the clauses of the form $a\leftarrow\phi$
for each $a\in\atom{\P}$
(see point (3) of Definition \ref{def:positive_form})
and then projecting it on $\atom{\P}$.
Since the HEF property does not depend on 
the costraints,
it follows from Lemma \ref{theo:HEF_monotonicity}
(reported in Section \ref{sect:sel_set}) 
that if $\P$ is not HEF, then $\P^+$ is not HEF as well.

Conversely, if $\P$ is HEF, then $\P^+$ can happen to be either HEF or not.

As an example, consider the theories displayed in Figure \ref{fig:ex_posform1} of Section \ref{sect:appl_scen}. In this case
$\P$ and $\P^+$ are both HEF.
Conversely, consider the theories reported in Figure \ref{fig:ex_posform2}.
In this case, $\P$ is HEF, whereas $\P^+$ is not.
\end{exampleContinued}

\begin{figure}[t]
\centering 
\[\begin{array}{cc}
\fbox{$\begin{array}{rrll}
\P = \{  & b    & \leftarrow a & \\
        & c    & \leftarrow a & \\
        & a    & \leftarrow b, c & \\
        & b    & \leftarrow c & \\
        & b, c & \leftarrow & \\
        & d    & \leftarrow\\
        &      & \leftarrow b, d & \\ 
        &      & \leftarrow c, d & \} \\ 
        ~\\ ~\\ ~\\ ~\\ 
\end{array}$} & 
\qquad
\fbox{$\begin{array}{rrll}
\P^+ = \{  & b    & \leftarrow a & \\
        & c    & \leftarrow a & \\
        & a    & \leftarrow b, c & \\
        & b    & \leftarrow c & \\
        & b, c & \leftarrow & \\
        & d    & \leftarrow\\
        & \phi & \leftarrow b, d & \\
        & \phi & \leftarrow c, d & \\
        & a & \leftarrow \phi & \\
        & b & \leftarrow \phi & \\
        & c & \leftarrow \phi & \\
        & d & \leftarrow \phi & \}
\end{array}$}
\end{array}\]
\caption{An HEF CNF $\P$ and its positive form $\P^+$
which is not HEF.}
\label{fig:ex_posform2}
\end{figure}

\subsection{$\Sel$ sets}

We introduce next the definition of
\emph{simplified theory}
and of $\emph{\sel}$ set
that will play a relevant role in
the definition of the eliminating 
operator for HEF theories.

\begin{definition}[Simplified theory]
Let $\P$ be a CNF theory and $\M$ be a model of
$\P$. Then 
the \emph{simplified theory of $\P$ w.r.t. $\M$},
denoted as $\ssimpl{\P}{\M}$,
is the CNF theory
$\left(\simpl{\P}{\M}\right)_{\M\setminus\S}$,
where 
$$\simpl{\P}{\M} = \{ H\leftarrow B \in \P : H\cap\S=\emptyset \mbox{ and } \M\supseteq B \}$$
and
$\S$ is the steady set of $\M$ in $\P$.
\end{definition}

The clauses in $\simpl{\P}{\M}$ are those clauses of $\P$ having the body
fully contained in $\M$ and some atoms of the head contained in $\M$ but not in
$\S$.
Note that it cannot be the case for the head of any clause in $\P$
to have empty intersection with $\M$
(or, analogously, the head is empty) since, in such a case, $\M$ would not be a
model for $\P$.
Then, intuitively, $\simpl{\P}{\M}$ contains the subset of the clauses of
$\P$ which could be falsified if atoms would be eliminated from the model $\M$,
so that we would have a model for $\P$ no longer.
Note that, we do not consider the case that atoms of $\S$ are eliminated from
$\M$ since, by definition of steady set, if any atom of $\S$ were eliminated
we would have no longer models for $\P$ in $\M$.
Simplified theories enjoy two useful properties. 

As for the first, we observe that, for any CNF theory $\P$ and model $\M$ of
$\P$, $\simpl{\P}{\M}$ is positive.

The second one, summarized in the following Lemma, tells that
$\simpl{\P}{\M}$ contains no facts.

\begin{lemma}\label{prop:equiv_th_atom}
Let $\P$ be a CNF theory, let $\M$ be a model of $\P$, and
let $\S$ be the steady set of $\M$ for $\P$.
Then no clause of the form $h\leftarrow$, with
$h$ a single letter, occurs in the theory
$\ssimpl{\P}{\M}$.
\end{lemma}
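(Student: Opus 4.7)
The plan is to proceed by contradiction: suppose that a fact $h\leftarrow$ (with $h$ a single atom) appears in $\ssimpl{\P}{\M} = (\simpl{\P}{\M})_{\M\setminus\S}$, and trace this fact back through the definitions to a clause of $\P$, showing that $h$ is forced to belong to the steady set $\S$, which will contradict how $\simpl{\P}{\M}$ is constructed.

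More concretely, I would first unfold the definitions as follows. The alleged fact $h\leftarrow$ must arise as the projection $c_{\M\setminus\S}$ of some clause $c\equiv H\leftarrow B$ that survives in $\simpl{\P}{\M}$. By definition of $\simpl{\P}{\M}$, this clause $c$ belongs to $\P$ and satisfies $H\cap\S=\emptyset$ and $B\subseteq\M$. The projection on $\M\setminus\S$ collapsing $c$ into $h\leftarrow$ means $H\cap(\M\setminus\S)=\{h\}$ and $B\cap(\M\setminus\S)=\emptyset$.

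The key observation is then twofold. First, since $H\cap\S=\emptyset$, we get $H\cap\M = H\cap(\S\cup(\M\setminus\S))=\{h\}$; in particular the projected head $H\cap\M$ is a singleton, so the clause $c_{\M\leftarrow}\equiv h\leftarrow B$ is a Horn clause belonging to $\Pnd_{\M\leftarrow}$. Second, from $B\subseteq\M$ together with $B\cap(\M\setminus\S)=\emptyset$, we obtain $B\subseteq\S$. Now, since $\S$ is a model of $\Pnd_{\M\leftarrow}$ and contains the whole body $B$ of the Horn clause $h\leftarrow B\in\Pnd_{\M\leftarrow}$, the atom $h$ must be in $\S$. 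This contradicts $h\in H$ and $H\cap\S=\emptyset$, finishing the proof.

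The main obstacle is purely bookkeeping: one has to be careful with the two different projection operators $\cdot_{X}$ and $\cdot_{X\leftarrow}$ and with the precedence convention between $\cdot_X$ and $\cdot^{nd}$, in order to correctly conclude that the clause recovered from $c$ actually lands in $\Pnd_{\M\leftarrow}$ (rather than just in $\P_{\M\leftarrow}$). Once this is handled, the contradiction follows directly from the minimality (indeed, just the modelhood) of $\S$ in $\Pnd_{\M\leftarrow}$.
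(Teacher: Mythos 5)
Your proof is correct and follows essentially the same route as the paper's: trace the alleged fact back to a clause $c'\in\simpl{\P}{\M}\subseteq\P$, observe that $H'\cap\S=\emptyset$ forces $c'_{\M\leftarrow}$ to be a single-head clause of $\Pnd_{\M\leftarrow}$ with body contained in $\S$, and conclude $h\in\S$, contradicting $H'\cap\S=\emptyset$. The only (cosmetic) difference is that you handle the body uniformly via $B\subseteq\S$, whereas the paper splits into the two subcases $B'=\emptyset$ and $\emptyset\neq B'\subseteq\S$.
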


Next, we introduce the notion of $\sel$ set which will be used for defining
the eliminating operator for HEF theories.

\begin{definition}[$\Sel$ set]
Given a CNF theory $\P$ and a set $X \subseteq \atom{\P}$,
$X$ is {\em $\sel$} for $\P$ if $X$ is
both an elementary set for $\P$ and
a non-outbound set in $\atom{\P}$ for $\P$.
\end{definition}

Intuitively, a {\em $\sel$} set $X$ for $\P$ is a set of atoms such that for no
disjunctive clause $c$ in $\P$, the body of 
$c$ is satisfied by atoms not occurring in $X$ and its head is contained in $X$
(as will be clear in the proof of Theorem \ref{theo:eliminable_set}). 
Notice that, as a consequence, no clause may become unsatisfied
by removing a {\em $\sel$} set $X$ from a model.

\subsection{On the erasability properties of $\Sel$ sets}\label{sect:sel_set}

Next, we are going to show that, given any theory $\P$ and model $\M$ of $\P$,
any {\em $\sel$} set is erasable in $\M$ for $\P$. In order to do that, we
shall:
\begin{enumerate}
\item 
demonstrate a one-to-one correspondence
between the erasable sets in $\M$ for $\P$ and the erasable
sets in $\M\setminus\S$ for $\ssimpl{\P}{\M}$,
where $\S$ is the steady set of $\M$ for $\P$ (Lemma \ref{th:equiv_theories}),
\item
show that the property of a theory $\P$ being HEF is retained by the subsets of
$\P$ (Lemma \ref{theo:HEF_monotonicity}): 
this implies that if a theory $\P$ is HEF then, for each model $\M$ of $\P$, also $\ssimpl{\P}{\M}$ is HEF,
\item prove that, for any HEF theory $\P$ and any model $\M$ of $\P$, any $\sel$
set is erasable in $\ssimpl{\P}{\M}$ (Theorem \ref{theo:eliminable_set}),
whereby the sought result is obtained.
\end{enumerate}

Le following results are conducive the the achievement of the aforementioned 
objectives. To ease readability, some of the proof are reported in the appendix.

\begin{lemma}\label{th:equiv_theories}
Let $\P$ be a CNF theory, let $\M$ be a model of $\P$
and let $\S$ be the steady set of $\M$ for $\P$.
A set of atoms $\E$ is erasable in $\M$ for $\P$ if and only if
$\E$ is erasable in $(\M\setminus\S)\supseteq\atom{\ssimpl{\P}{\M}}$ for $\ssimpl{\P}{\M}$.
\end{lemma}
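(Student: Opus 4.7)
The plan is to prove both implications by a clause-by-clause analysis, leaning on the earlier Proposition stating that any erasable set $\E$ in $\M$ for $\P$ satisfies $\E\subseteq\M\setminus\S$, together with two preliminary observations: (a) $\M\setminus\S$ itself is a model of $\ssimpl{\P}{\M}$, because every clause surviving in the simplified theory has a non-empty projected head inside $\M\setminus\S$ by construction; and (b) if $c\equiv H\leftarrow B\in\P$ with $B\subseteq\M$ and $H\cap\S=\emptyset$, then $c\in\simpl{\P}{\M}$ and its projected head $H\cap(\M\setminus\S)=H\cap\M$ is non-empty (since $\M$ is a model of $\P$), so that $c_{\M\setminus\S}\in\ssimpl{\P}{\M}$.

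For the forward direction, I would assume $\M\setminus\E$ is a model of $\P$ and take an arbitrary clause $c_{\M\setminus\S}\in\ssimpl{\P}{\M}$, coming from $c\equiv H\leftarrow B\in\simpl{\P}{\M}$. Because $\M\setminus\E$ satisfies $c$ and $B\subseteq\M$, either some $b\in B$ belongs to $\E$, in which case $b\in B\cap(\M\setminus\S)$ (using $\E\cap\S=\emptyset$) but $b\notin(\M\setminus\S)\setminus\E$, falsifying the body of $c_{\M\setminus\S}$; or some $h\in H\cap(\M\setminus\E)$, and since $H\cap\S=\emptyset$ from $c\in\simpl{\P}{\M}$, this $h$ lies in $H\cap(\M\setminus\S)$ and in $(\M\setminus\S)\setminus\E$, satisfying the projected head.

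For the backward direction, I would assume $(\M\setminus\S)\setminus\E$ is a model of $\ssimpl{\P}{\M}$ and consider an arbitrary $c\equiv H\leftarrow B\in\P$. The cases are: (i) $B\not\subseteq\M$, so some body atom is absent from $\M\setminus\E$; (ii) $B\subseteq\M$ but $B\cap\E\neq\emptyset$, so again the body fails; (iii) $B\subseteq\M\setminus\E$ with $H\cap\S\neq\emptyset$, in which case any atom of $H\cap\S$ lies in $\M\setminus\E$ (since $\E\subseteq\M\setminus\S$), satisfying the head; and (iv) the decisive case, $B\subseteq\M\setminus\E$ and $H\cap\S=\emptyset$. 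Here, by observation (b), $c_{\M\setminus\S}\in\ssimpl{\P}{\M}$; the projected body $B\cap(\M\setminus\S)$ is contained in $(\M\setminus\S)\setminus\E$, so the projected head must meet $(\M\setminus\S)\setminus\E$, producing an atom of $H\cap(\M\setminus\E)$.

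The main obstacle is the bookkeeping in case (iv): one must verify that the original clause $c$ is lifted faithfully into $\ssimpl{\P}{\M}$ (i.e., it survives simplification with a non-empty projected head), which is precisely what observation (b) guarantees. Once this correspondence is established, the rest of the argument is a routine propagation of atoms between $\M\setminus\E$ and $(\M\setminus\S)\setminus\E$ using $\E\subseteq\M\setminus\S$.
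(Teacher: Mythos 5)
Your proof is correct and follows essentially the same route as the paper's: both directions hinge on the clause correspondence $c \mapsto c_{\M\setminus\S}$ between $\simpl{\P}{\M}$ and $\ssimpl{\P}{\M}$, combined with the fact that $\E\cap\S=\emptyset$ (so projected bodies and heads track the original ones faithfully). The only difference is presentational: the paper argues each direction by contradiction from a hypothetical falsified clause, whereas you argue directly with an explicit case split, which also makes visible the cases (constraints on $B\not\subseteq\M$, heads meeting $\S$) that the paper handles implicitly.
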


\begin{lemma}\label{theo:HEF_monotonicity}
Let $\P$ be a HEF CNF theory.
For each set of clauses $\P'\subseteq\P$ and
for each set of atoms $X$, the theory $\P'_X$ is HEF.
\end{lemma}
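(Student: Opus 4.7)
My plan is to argue by contrapositive, leveraging Property \ref{prop:hef_def}: if $\P'_X$ fails to be HEF, I will exhibit a set that is simultaneously disjunctive and elementary for $\P$, contradicting the assumption that $\P$ is HEF. The central observation that makes the lifting work is that every atom appearing in $\P'_X$ lies in $X$, so any disjunctive/elementary witness $Y$ for $\P'_X$ automatically satisfies $Y\subseteq X$, and then the set operations used in the definitions of ``outbound'' and ``disjunctive'' commute nicely with intersecting against $X$.

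Concretely, I would suppose for contradiction that $\P'_X$ is not HEF and pick a set $Y$ of atoms of $\P'_X$ that is both disjunctive and elementary for $\P'_X$. For the disjunctive part, there is a clause $c_X\equiv (H\cap X)\leftarrow (B\cap X)$ of $\P'_X$ coming from some clause $c\equiv H\leftarrow B$ of $\P'\subseteq \P$ with $|(H\cap X)\cap Y|>1$; since $Y\subseteq X$, this equals $|H\cap Y|$, so the original clause $c$ witnesses that $Y$ is disjunctive for $\P$.

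For the elementary part, I would take any non-empty proper subset $Z\subset Y$ and, from the fact that $Z$ is outbound in $Y$ for $\P'_X$, produce a clause $c_X$ of $\P'_X$ satisfying the four outbound conditions with respect to $Y$ and $Z$. I would then claim that the originating clause $c$ of $\P'\subseteq \P$ witnesses that $Z$ is outbound in $Y$ for $\P$ as well. This reduces to four straightforward identities of the form $H\cap Z=(H\cap X)\cap Z$, $B\cap(Y\setminus Z)=(B\cap X)\cap(Y\setminus Z)$, etc., all of which hold because $Z,Y\setminus Z\subseteq Y\subseteq X$. Hence every non-empty proper subset of $Y$ is outbound in $Y$ for $\P$, so $Y$ is elementary for $\P$.

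Combining these two facts, $Y$ is both disjunctive and elementary for $\P$, contradicting Property \ref{prop:hef_def} applied to the HEF theory $\P$. I do not anticipate a genuine obstacle here; the only thing to be careful with is that the clause $c$ used to witness the outbound property for $\P$ is the same clause (up to projection) that witnesses it for $\P'_X$, and that the inclusion $Y\subseteq X$ is exploited at every step to turn intersections against $X$ into harmless identities.
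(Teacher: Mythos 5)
Your proposal is correct and follows essentially the same route as the paper's own proof: contraposition via Property \ref{prop:hef_def}, lifting a disjunctive-and-elementary witness from $\P'_X$ back to $\P$ by observing that the witness is contained in $X$, so all the intersections defining ``disjunctive'' and ``outbound'' are unaffected by the projection onto $X$. No gaps.
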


\begin{theorem}\label{theo:eliminable_set}
Let $\P$ be a HEF CNF theory,
let $\M$ be a model for $\P$ and
let $\S$ be the steady set of $\M$ for $\P$,
If $\E \subseteq \atom{\ssimpl{\P}{\M}}$
is $\sel$ for $\ssimpl{\P}{\M}$
then $\E$ is erasable in 
$\M$ for $\P$.
\end{theorem}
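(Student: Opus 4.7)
The plan is to reduce the claim to a statement about the simplified theory and then exploit the super-elementary structure. First I would invoke Lemma \ref{th:equiv_theories}, which allows replacing the target conclusion ``$\E$ is erasable in $\M$ for $\P$'' by the equivalent ``$\E$ is erasable in $\M\setminus\S$ for $\ssimpl{\P}{\M}$''. Second, I would note that by Lemma \ref{theo:HEF_monotonicity} the simplified theory $\P' = \ssimpl{\P}{\M}$ is itself HEF, since it is obtained from $\P$ by first taking a subset of clauses and then projecting atoms on $\M\setminus\S$. Write $\M' = \M\setminus\S$. A short check using the construction of $\ssimpl{\P}{\M}$ (and the fact that $\M$ models $\P$) shows that $\M'$ is a model of $\P'$ and that every atom occurring in any clause of $\P'$ lies in $\M'$; these two facts carry most of the weight in what follows.

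Next I would argue by contradiction: suppose some clause $c \equiv H \leftarrow B$ of $\P'$ is falsified by $\M'\setminus\E$, so that $B \subseteq \M'\setminus\E$ and $H \cap (\M'\setminus\E) = \emptyset$. Because $\M'$ is a model of $\P'$ and $B \subseteq \M'$, we must have $H \cap \M' \neq \emptyset$. Combining this with $H \subseteq \M'$ (by construction of $\P'$) and $H \cap (\M'\setminus\E) = \emptyset$ forces $\emptyset \neq H \subseteq \E$. Now the HEF property of $\P'$ together with the fact that $\E$ is elementary for $\P'$ yields $|H \cap \E| \le 1$, so $|H| = 1$.

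At this point I would split on whether $B$ is empty. If $B = \emptyset$, then $c$ is a single-head fact in $\ssimpl{\P}{\M}$, contradicting Lemma \ref{prop:equiv_th_atom}. Hence $B \neq \emptyset$, and since $B \subseteq \M'\setminus\E \subseteq \atom{\P'}\setminus\E$, the clause $c$ witnesses that $\E$ is outbound in $\atom{\P'}$ for $\P'$: indeed it satisfies (i) $H \cap \E \neq \emptyset$, (ii) $B \cap (\atom{\P'}\setminus\E) \neq \emptyset$, (iii) $B \cap \E = \emptyset$, and (iv) $H \cap (\atom{\P'}\setminus\E) = \emptyset$. This contradicts the super-elementarity of $\E$, which requires $\E$ to be non-outbound in $\atom{\P'}$ for $\P'$. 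Therefore no such clause $c$ exists, so $\M'\setminus\E$ is a model of $\P'$, and by Lemma \ref{th:equiv_theories} the set $\E$ is erasable in $\M$ for $\P$.

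The main obstacle, in my view, is the $|H|=1$ case when $B=\emptyset$: without ruling this out by Lemma \ref{prop:equiv_th_atom}, a single-head fact whose unique head atom lies in $\E$ would be a genuine counterexample, because such a clause cannot be shown to be outbound. So the delicate point is that the steady-set construction is exactly strong enough to absorb all single-head facts into $\S$, leaving the non-outbound condition to handle everything else.
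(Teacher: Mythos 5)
Your proof is correct and follows essentially the same route as the paper's: reduce via Lemma \ref{th:equiv_theories}, use Lemma \ref{theo:HEF_monotonicity} to get that $\ssimpl{\P}{\M}$ is HEF, kill the single-head-fact case with Lemma \ref{prop:equiv_th_atom}, the multi-head case with elementarity plus the HEF condition, and the nonempty-body case with non-outboundness. (One cosmetic slip: the inclusion $\M'\setminus\E \subseteq \atom{\P'}\setminus\E$ is backwards, but the fact you actually need, $B \subseteq \atom{\P'}\setminus\E$, follows directly from $B \subseteq \atom{\P'}$ and $B\cap\E=\emptyset$.)
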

\begin{proof}
By Lemma \ref{th:equiv_theories} it suffices to prove 
that $\E$ is erasable in
$\atom{\ssimpl{\P}{\M}}$ for $\ssimpl{\P}{\M}$,
which is accounted for next.

First of all, recall that $\ssimpl{\P}{\M}$
is a positive theory.
Moreover, by Lemma \ref{theo:HEF_monotonicity}, $\ssimpl{\P}{\M}$ is HEF, since $\P$ is
HEF.

Clearly, $\atom{\ssimpl{\P}{\M}}$ is a model of $\ssimpl{\P}{\M}$.
It must be proved that
each clause of $\ssimpl{\P}{\M}$
is true in $\atom{\ssimpl{\P}{\M}}\setminus\E$.
Let $H \leftarrow B$ be a generic clause of $\ssimpl{\P}{\M}$ such that
$\E$ contains $H$: this is the only kind of clause that might become false in
$\atom{\ssimpl{\P}{\M}}\setminus\E$.
Next, it is proved that $H\leftarrow B$ is true in 
$\atom{\ssimpl{\P}{\M}}\setminus\E$.
First notice that, by definition of $\ssimpl{\P}{\M}$, it cannot be the case that
$H$ is empty and $|B|\ge 1$.
Thus, the following three cases have to be considered:
\begin{enumerate}
\item
{\em $B$ is empty and $|H|=1$.}
By Lemma \ref{prop:equiv_th_atom},
such a clause cannot exist.

\item
{\em $B$ is empty and $|H|>1$.}
Notice that,
since $\E$ is an elementary set for $\ssimpl{\P}{\M}$,
it cannot be the case
that $|H\cap\E|>1$ or, in other words, that $\E\supseteq H$,
since the theory
$\ssimpl{\P}{\M}$ is HEF.
Hence, the clause $H\leftarrow B$ is true also in 
$\atom{\ssimpl{\P}{\M}}\setminus\E$;

\item
{\em $B$ is not empty.}
By contradiction, assume that $H\leftarrow B$ is false in
$\atom{\ssimpl{\P}{\M}}\setminus\E$.
Then, $H\leftarrow B$ is such that $H \subseteq \E$
and $B \subseteq \atom{\ssimpl{\P}{\M}} \setminus \E$,
namely, none of the atoms in $B$ occurs in $\E$.
But this rule cannot exist,
since $\E$ is non-outbound in $\atom{\ssimpl{\P}{\M}}$
for $\ssimpl{\P}{\M}$.
\end{enumerate}
\end{proof}

\subsection{On the existence of a $\sel$ set in a HEF theory}
\label{sect:sel_exists}

Next, we are going to show that, under the condition that $\atom{\P}=\atom{\Pnd}$, any HEF theory $\P$ has a  
$\sel$ set. This result, stated as Theorem \ref{theo:existence} below, shall be attained by preliminarly proving that:
\begin{enumerate}
\item 
a $\sel$ set for the non-disjunctive subset of a positive CNF theory is also $\sel$ for the whole theory (Lemma \ref{theo:pnd->p}),
\item
each  HEF CNF theory $\P$ has a \sel~set (Lemma \ref{theo:minimal->elem}).
\end{enumerate}

\begin{lemma}\label{theo:pnd->p}
Let $\P$ be a HEF CNF theory.
If $O\subseteq\atom{\Pnd}$ is $\sel$ for $\Pnd$ then $O$ is $\sel$ for $\P$.
\end{lemma}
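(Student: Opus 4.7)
The plan is to verify the two conjuncts in the definition of a \sel~set separately for $\P$, leveraging (i) the obvious fact that every clause of $\Pnd$ is a clause of $\P$, and (ii) the HEF hypothesis on $\P$ (used only in the disjunctive sub-case below).

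First I would establish that $O$ is elementary for $\P$. Take any non-empty proper subset $Z$ of $O$. By hypothesis $O$ is elementary for $\Pnd$, so $Z$ is outbound in $O$ for $\Pnd$, witnessed by some clause $c\equiv H\leftarrow B$ of $\Pnd$. Since $\Pnd\subseteq\P$, the same clause $c$ sits in $\P$ and satisfies the four outbound conditions verbatim, certifying that $Z$ is outbound in $O$ for $\P$. As $Z$ was arbitrary, $O$ is elementary for $\P$.

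Next I would show $O$ is non-outbound in $\atom{\P}$ for $\P$. Suppose, for contradiction, that some clause $c\equiv H\leftarrow B$ in $\P$ witnesses that $O$ is outbound in $\atom{\P}$ for $\P$. Condition (iv) of the definition gives $H\cap(\atom{\P}\setminus O)=\emptyset$, i.e. $H\subseteq O$. I split on $|H|$:
\begin{itemize}
\item[---] If $|H|=1$, then $c\in\Pnd$, so every atom in $H\cup B$ lies in $\atom{\Pnd}$. Consequently $B\cap(\atom{\Pnd}\setminus O)=B\cap(\atom{\P}\setminus O)\neq\emptyset$, and the remaining outbound conditions already hold relative to $O$ and $\Pnd$. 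Thus $c$ would witness that $O$ is outbound in $\atom{\Pnd}$ for $\Pnd$, contradicting the hypothesis that $O$ is \sel~for $\Pnd$.
\item[---] If $|H|>1$, then $H\subseteq O$ gives $|H\cap O|>1$. Combined with the first part (which established that $O$ is elementary for $\P$), this together with Property \ref{prop:hef_def} contradicts the assumption that $\P$ is HEF.
\end{itemize}

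Either case produces a contradiction, so no such clause $c$ exists and $O$ is non-outbound in $\atom{\P}$ for $\P$. Combining the two parts, $O$ is \sel~for $\P$. The main (and only slightly delicate) obstacle is the disjunctive sub-case of the second step, where the HEF hypothesis must be invoked: without it, a genuinely disjunctive clause of $\P\setminus\Pnd$ with head sitting entirely in $O$ could spoil non-outboundness, even though $O$ behaves well with respect to $\Pnd$.
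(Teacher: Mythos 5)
Your proof is correct and follows essentially the same route as the paper: elementarity transfers from $\Pnd$ to $\P$ because any witnessing clause for outboundness in $O$ persists, and non-outboundness in $\atom{\P}$ is shown by contradiction, ruling out a single-head witness via the hypothesis on $\Pnd$ and a disjunctive witness via the HEF property combined with the just-established elementarity of $O$. The only cosmetic difference is that you prove the elementarity-monotonicity step directly, whereas the paper cites it as a claim from \cite{FassettiP10}.
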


\begin{lemma}\label{theo:minimal->elem}
Let $\P$ be a non-disjunctive CNF theory.
Each minimal non-outbound set in $\atom{\P}$ for $\P$
is $\sel$ for $\P$.
\end{lemma}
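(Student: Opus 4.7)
The plan is to show the two conditions defining $\sel$ sets separately. The first, that $O$ is non-outbound in $\atom{\P}$ for $\P$, comes for free from the hypothesis. So the real work lies in proving that $O$ is elementary for $\P$, i.e. that every non-empty proper subset $O' \subset O$ is outbound \emph{in $O$} (not just in $\atom{\P}$) for $\P$. The central subtlety to keep in mind is precisely this shift of ``reference set'' from $\atom{\P}$ to $O$: being outbound in the larger set does not immediately entail being outbound in the smaller one, because the body-witness might live in $\atom{\P}\setminus O$ rather than in $O\setminus O'$.

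First I would fix a non-empty $O' \subset O$ and exploit the minimality of $O$ to obtain a clause $c \equiv H \leftarrow B$ of $\P$ witnessing that $O'$ is outbound in $\atom{\P}$ for $\P$. By the definition of outbound, this clause satisfies $H \cap O' \neq \emptyset$, $H \cap (\atom{\P}\setminus O')=\emptyset$, $B\cap O' = \emptyset$, and $B \cap (\atom{\P}\setminus O') \neq \emptyset$. Since $\P$ is non-disjunctive, $H$ consists of a single atom, which by the first two conditions must lie in $O'$; in particular $\emptyset \neq H \subseteq O' \subseteq O$.

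Next I would apply the non-outbound hypothesis on $O$ to this same clause $c$. Unfolding the four conditions of ``$O$ is outbound in $\atom{\P}$ for $\P$'' and negating, the fact that $\emptyset \neq H \subseteq O$ forces one of the remaining two conditions to fail for $c$: either $B = \emptyset$ (contradicting $B \cap (\atom{\P}\setminus O') \neq \emptyset$) or $B \cap O \neq \emptyset$. Hence $B \cap O \neq \emptyset$, and combined with $B \cap O' = \emptyset$ this yields $B \cap (O\setminus O') \neq \emptyset$.

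Finally I would re-check the four outbound conditions for $O'$ in $O$ witnessed by $c$: $H \cap O' \neq \emptyset$ is inherited; $B \cap (O \setminus O') \neq \emptyset$ was just established; $B \cap O' = \emptyset$ is inherited; and $H \cap (O\setminus O') = \emptyset$ holds because $H \subseteq O'$. Thus $O'$ is outbound in $O$ for $\P$, completing the proof that $O$ is elementary. The only conceptually delicate step is the middle one, where the non-disjunctiveness of $\P$ (forcing $H \subseteq O'$) and the non-outbound property of $O$ together rule out the possibility that the body-witness of outboundness of $O'$ lies outside $O$; the rest is bookkeeping on set-theoretic conditions.
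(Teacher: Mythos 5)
Your proof is correct and follows essentially the same route as the paper's: use minimality of $O$ to extract a single-head witness clause for each proper subset $O'$, invoke the non-outboundness of $O$ to force the body to meet $O\setminus O'$, and then verify the four outbound conditions relative to $O$. Your version merely makes the case analysis on which outbound condition fails for $O$ slightly more explicit than the paper's one-line remark.
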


The following result eventually states another key property of $\sel$
sets in HEF CNF theories.

\begin{theorem}\label{theo:existence}
Let $\P$ be a disjunctive HEF CNF theory
such that $\atom{\P}=\atom{\Pnd}$.
Then, there exists a non-empty set of atoms $O\subseteq\atom{\P}$
such that $O$ is $\sel$ for $\P$.
\end{theorem}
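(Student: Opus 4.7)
The plan is to build a $\sel$ set for $\P$ by first constructing one for the non-disjunctive part $\Pnd$ and then transferring it via Lemma \ref{theo:pnd->p}. Since the lemma requires $\P$ to be HEF (which we have) and only asks for a subset of $\atom{\Pnd}$ that is $\sel$ for $\Pnd$, it suffices to exhibit a non-empty $O \subseteq \atom{\Pnd}$ that is $\sel$ for $\Pnd$. Lemma \ref{theo:minimal->elem} tells us that any minimal non-outbound set in $\atom{\Pnd}$ for $\Pnd$ is already $\sel$ for $\Pnd$, so the entire task reduces to producing such a minimal non-outbound set.

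First I would note that $\atom{\Pnd} = \atom{\P}$ is non-empty: since $\P$ is disjunctive, it contains at least one clause whose head has at least two atoms, so $\atom{\P}$ is non-empty, and by hypothesis these atoms are all witnessed by non-disjunctive rules as well. Next, I would observe that $\atom{\Pnd}$ itself is non-outbound in $\atom{\Pnd}$ for $\Pnd$: taking $Z = Y = \atom{\Pnd}$ in the outbound definition forces $Y \setminus Z = \emptyset$, violating condition (ii) ($B \cap (Y \setminus Z) \neq \emptyset$) for every clause. Hence the family $\mathcal{F}$ of non-empty subsets of $\atom{\Pnd}$ that are non-outbound in $\atom{\Pnd}$ for $\Pnd$ is non-empty and finite.

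Now I would choose $O \in \mathcal{F}$ of minimum cardinality. Any proper non-empty subset $O' \subsetneq O$ has strictly smaller cardinality than $O$ and therefore cannot lie in $\mathcal{F}$; that is, $O'$ must be outbound in $\atom{\Pnd}$ for $\Pnd$. Thus $O$ is a minimal non-outbound set in $\atom{\Pnd}$ for $\Pnd$ in the sense of the definition preceding Lemma \ref{theo:minimal->elem}. Applying that lemma yields that $O$ is $\sel$ for $\Pnd$, and then Lemma \ref{theo:pnd->p}, whose HEF hypothesis is supplied by the theorem, lifts this to $O$ being $\sel$ for $\P$.

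The proof is essentially a two-line composition of the preceding lemmas once the existence of a minimal non-outbound set is established, so the only step requiring care is the existence argument itself. The main subtle point is verifying that the hypothesis $\atom{\P} = \atom{\Pnd}$ is genuinely used — it is exactly what allows us to apply Lemma \ref{theo:pnd->p} without shrinking the ground set of atoms and losing the elementary/non-outbound properties when passing from $\Pnd$ back to $\P$. No real obstacle beyond this bookkeeping is expected.
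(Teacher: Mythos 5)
Your proof is correct and follows essentially the same route as the paper's: reduce to exhibiting a minimal non-outbound set in $\atom{\Pnd}$ for $\Pnd$, then compose Lemma \ref{theo:minimal->elem} with Lemma \ref{theo:pnd->p}. The only (harmless) difference is how existence is seeded: you take a minimum-cardinality member of the family of non-empty non-outbound subsets of $\atom{\Pnd}$, which is non-empty because $\atom{\Pnd}$ itself is trivially non-outbound, whereas the paper first uses HEF-ness and disjunctivity to argue that $\atom{\Pnd}$ is not elementary for $\Pnd$ and hence has a non-outbound \emph{proper} subset --- your chain of lemmas delivers the stated conclusion either way.
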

\begin{proof}
Since $\P$ is a disjunctive HEF CNF theory,
it cannot be the case that $\atom{\P}=\atom{\Pnd}$
is elementary for $\Pnd$, for otherwise $\atom{\P}$ would be
elementary also for $\P$, implying that $\P$ is not HEF.
Since $\atom{\Pnd}$ is not elementary in $\Pnd$, by definition,
there exists a set of atoms which is non-outbound in $\atom{\Pnd}$
for $\Pnd$ and, in particular, there exists a minimal non-outbound set
$O\subset\atom{\Pnd}$ for $\atom{\Pnd}$ in $\Pnd$.
To conclude, $O$ is $\sel$ for $\Pnd$ by Lemma \ref{theo:minimal->elem}
and,
since $O\subset\atom{\Pnd}=\atom{\P}$,
$O$ is $\sel$ for $\P$
by Lemma \ref{theo:pnd->p}.
\end{proof}

\subsection{Computing a $\sel$ set}\label{sect:comp_sel_set}

This section is devoted to proving that a $\sel$ set of an HEF CNF theory can
be, in fact, computed in polynomial time.

\begin{figure}[t]
\begin{function}[H]
\footnotesize
\KwIn{A non-disjunctive CNF theory $\P$ \newline a set of atoms
$X\subseteq\atom{\P}$}
\KwOut{The elementary subgraph $\eG(\P,X)$ of $\P$}
\BlankLine
$i = 0$\;
$E_i = \emptyset$\;
$\eG_i = \langle X, E_i\rangle$\;
\Repeat{$C=\emptyset$}{
		let $C$ be the set of clauses $h \leftarrow B$ in $\P$ s.t. 
		the subgraph of $\eG_i$ induced by $B$ is strongly connected\;
		$E_{i+1} = E_i\cup\{(b,h) \mid b\in B \mbox{ and } h\leftarrow B\in
C\}$\;
		$\eG_{i+1} = \langle X, E_{i+1}\rangle$\;
		remove $C$ from $\P$\;
		$i = i + 1$\;
	}
\Return $\eG_i$
\label{fig:procPE}
\caption{compute\_elementary\_subgraph()}
\label{proc:compute_elem_set}
\end{function}
\caption{The \ref{proc:compute_elem_set} function}
\label{fig:compute_elem_set}
\end{figure}

The task of computing a $\sel$ set is accomplished by the function
\ref{funct:find_elem_nonout_set}
shown in
Figure \ref{fig:funct_find_sel}.

At each iteration, the function \ref{funct:find_elem_nonout_set}
makes use of the function \textit{\ref{proc:compute_elem_set}},
which is detailed in Figure \ref{fig:compute_elem_set}.
The latter function receives as input a theory $\P$ and a set of atoms $X$,
an returns a graph, also denoted by $\eG(\P,X)$, called
the \textit{elementary subgraph of $X$ for $\P$} \cite{GebserLL06}.
The function reported in Figure \ref{fig:compute_elem_set} is
substantially the same as that described at pag. 4 of \cite{GebserLL06}.
Specifically, in the pseudo-code, by $\langle X, E\rangle$ 
it is denoted a graph where $X$ is the set of nodes and
$E$ is the set of arcs.

\begin{myexample}{Elementary subgraph}
Figure \ref{fig:elem_graph_example} reports an example of computation of an
elementary subgraph.

Since $E_0 = \emptyset$, $\eG_0$ is a graph including nodes but no arcs
(see Figure \ref{fig:elem_graph_example}(c)).
The clauses in $\P$ whose body is fully contained in one strongly connected
component of $\eG_0$ are all the clauses with just one atom in the body,
namely $C=\{ c_1$, $c_2$, $c_3$, $c_5 \}$.
Thus, $E_1$ consists in set of arcs $\{(a, b), (c, a), (a, c), (d,a)\}$ and
the clauses $c_1$, $c_2$, $c_3$ and $c_5$ are removed from $\P$.

The graph $\eG_1$ is shown in Figure \ref{fig:elem_graph_example}(d).
The unique clause left in $\P$ whose body is fully contained in a strongly
connected is $c_4$, then $C=\{c_4\}$, 
$E_2 = \{(a, d), (c, d)\}$, and $c_4$ is removed from $\P$.

Figure \ref{fig:elem_graph_example}(e) reports the graph $\eG_2$.
Since the body of $c_6$ does not belong to a strongly connected component of
$\eG_2$, the procedure stops returning $\eG_2$ as the elementary subgraph 
$\eG(\P,X)$ of $X$ for $\P$.

\begin{figure}[h]
\centering
\subfloat[The program $\P$ and the set of atoms $X$]{
\begin{minipage}{0.5\textwidth}
\small
\begin{equation*}
\begin{array}{l@{\quad}l}
\begin{split}
\P=\{ c_1 \equiv & ~b \leftarrow a\\
      c_2 \equiv & ~a \leftarrow c\\
      c_3 \equiv & ~c \leftarrow a\\
      c_4 \equiv & ~d \leftarrow a, c\\
      c_5 \equiv & ~a \leftarrow d\\
      c_6 \equiv & ~e \leftarrow a, b\}\\
\end{split}
&
\begin{split}
 X = \{a, b, c, d, e\}
\end{split}
\end{array}
\end{equation*}
\end{minipage}
}\qquad
\subfloat[The dependency graph of $\P$]{
\begin{minipage}{0.33\textwidth}
\centering
\includegraphics[width=0.75\textwidth]{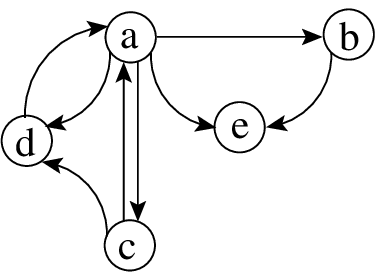}
\end{minipage}
}\\
\subfloat[$\eG_{0}=\langle X, E_0\rangle$]{
\begin{minipage}{0.33\textwidth}
\centering
\includegraphics[width=0.75\textwidth]{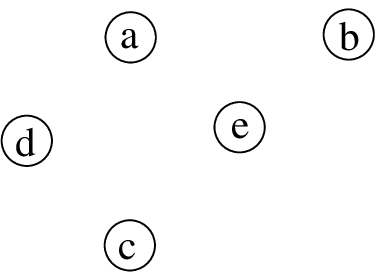}
\end{minipage}
}
\subfloat[$\eG_{1}=\langle X, E_1\rangle$]{
\begin{minipage}{0.33\textwidth}
\centering
\includegraphics[width=0.75\textwidth]{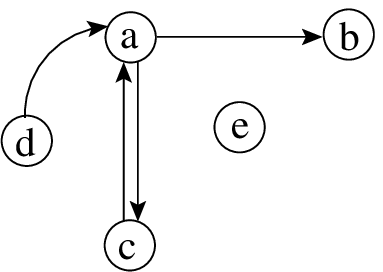}
\end{minipage}
}
\subfloat[$\eG_{2}=\langle X, E_2\rangle$]{
\begin{minipage}{0.33\textwidth}
\centering
\includegraphics[width=0.75\textwidth]{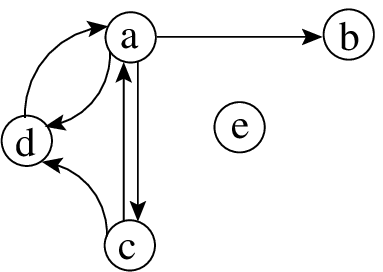}
\end{minipage}
}
 \caption{An example of elementary subgraph construction}
 \label{fig:elem_graph_example}
\end{figure}

\end{myexample}

Next, we recall the main result stated in \cite{GebserLL06},
concerning elementary subgraphs.

\begin{proposition}[Theorem 2 of \cite{GebserLL06}]\label{prop:elem_subgraph_connected}
For any non-disjunctive theory $\P$ and any
set $X$ of atoms occurring in $\P$, $X$ is an elementary set for
$\P$ if and only if the elementary subgraph of $X$ for $\P$
is strongly connected. 
\end{proposition}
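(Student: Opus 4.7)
The plan is to prove both directions of the biconditional separately. The forward direction (``strongly connected $\Rightarrow$ elementary'') is established by a ``first-crossing'' argument on the iterative construction, while the reverse direction (``elementary $\Rightarrow$ strongly connected'') is handled by contrapositive and exploits a fixed-point characterization of the arcs of $\eG(\P,X)$.

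For the direction $\eG(\P,X)$ strongly connected $\Rightarrow$ $X$ elementary, fix an arbitrary non-empty proper $Z \subsetneq X$. Since $\eG^* = \eG(\P,X)$ is strongly connected, some arc $(b,h)$ must cross from $X \setminus Z$ into $Z$. Among all such crossing arcs, select one $(b^*, h^*)$ added at the earliest iteration $i^*$ of the function \textit{compute\_elementary\_subgraph}, and let $c^* \equiv h^* \leftarrow B^*$ be the clause that introduced it; by construction, $b^* \in B^*$ and $B^* \cap X$ is strongly connected in $\eG_{i^* - 1}$. The critical observation is that $B^* \cap Z = \emptyset$: otherwise, any $c \in B^* \cap Z$ would be joined to $b^*$ by a path in $\eG_{i^* - 1}$ lying entirely inside $B^* \cap X$, which must traverse some crossing arc from $X \setminus Z$ into $Z$ already present at stage $i^* - 1$---contradicting the minimality of $i^*$. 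Together with $b^* \in B^* \cap (X \setminus Z)$ and $h^* \in Z$, this makes $c^*$ witness that $Z$ is outbound in $X$; since $Z$ was arbitrary, $X$ is elementary.

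For the reverse direction, I would argue by contrapositive. The first step is to establish a fixed-point characterization of the arcs of $\eG^*$: an arc $(b,h)$ belongs to $\eG^*$ if and only if some clause $h \leftarrow B$ with $b \in B$ has $B \cap X$ strongly connected in $\eG^*$. This follows from tracking the iterative construction, since each arc is introduced exactly when its originating body first becomes strongly connected and subsequent stages only grow the arc set. Now suppose $\eG^*$ is not strongly connected and consider the condensation DAG of $\eG^*$. The goal is to exhibit a non-empty proper $Z \subsetneq X$ failing the outbound condition. The main obstacle is precisely that the outbound condition quantifies over \emph{all} clauses of $\P$, while the fixed-point characterization only controls clauses whose bodies are strongly connected in $\eG^*$; thus an outbound witness clause may have a ``disconnected'' body and leave no trace in $\eG^*$. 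I would overcome this obstacle by a peeling argument starting from a source strongly connected component $T$: using elementariness to repeatedly produce outbound witness clauses, whose bodies must---by the fixed-point characterization combined with the source-SCC property---be disconnected in $\eG^*$, one inductively absorbs the relevant head atoms into a growing set $\widehat T$ until no valid outbound witness remains, yielding the desired non-outbound set and contradicting elementariness of $X$.
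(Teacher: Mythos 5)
First, a point of reference: the paper does not prove this statement at all --- it is imported verbatim as Theorem~2 of \cite{GebserLL06} --- so there is no in-paper argument to compare yours against. Your forward direction is correct: the ``first-crossing'' argument does produce, for each non-empty proper $Z\subsetneq X$, a processed clause witnessing that $Z$ is outbound, and it is robust to either reading of the construction.

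The reverse direction, however, has a genuine gap, and it sits exactly at the obstacle you yourself name. The ``peeling'' step never specifies which atoms are absorbed into $\widehat T$ (the head atom of a witness for $\widehat T$ already lies in $\widehat T$, so ``absorbing head atoms'' does nothing), and nothing in the sketch prevents $\widehat T$ from growing to all of $X$, in which case no non-outbound proper subset is produced. The choice of atoms genuinely matters: for $\P=\{c\leftarrow a;\ d\leftarrow a;\ d\leftarrow c;\ a\leftarrow c,d\}$ the source SCC is $\{a\}$ with unprocessed witness $a\leftarrow c,d$, and absorbing $c$ yields the non-outbound set $\{a,c\}$ while absorbing $d$ yields the outbound set $\{a,d\}$. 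Worse, if one reads the pseudocode in Figure~\ref{fig:compute_elem_set} literally (process $h\leftarrow B$ when the subgraph \emph{induced by} $B$ is strongly connected), the implication you are trying to prove is simply false: for $\P=\{c_2\leftarrow c_1;\ c_3\leftarrow c_2;\ c_1\leftarrow c_3;\ c_1\leftarrow h;\ h\leftarrow c_1,c_2\}$ and $X=\atom{\P}$, every non-empty proper subset of $X$ is outbound (so $X$ is elementary), yet $h\leftarrow c_1,c_2$ is never processed because $\{c_1,c_2\}$ induces only the arc $(c_1,c_2)$; hence no arc ever enters $h$ and $\eG(\P,X)$ is not strongly connected. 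The statement holds only for the construction actually used in \cite{GebserLL06} (and in the paper's worked example): process a clause as soon as all of its body atoms lie in the \emph{same strongly connected component of the whole current graph}. Under that reading the reverse direction has a short direct proof that your source-SCC peeling does not recover: take a sink SCC $C\subsetneq X$ of the final graph; any outbound witness for $X\setminus C$ has its body inside the single SCC $C$, hence must have been processed, hence contributes an arc leaving $C$, contradicting that $C$ is a sink. You should adopt the same-SCC processing rule and replace the peeling argument by this sink-SCC argument.
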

Moreover, as also proved in \cite{GebserLL06}, the following proposition holds.
\begin{proposition}[\cite{GebserLL06}]\label{prop:elem_subgraph_cost}
The procedure \ref{proc:compute_elem_set} terminates in polynomial
time. 
\end{proposition}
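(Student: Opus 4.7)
The plan is to bound both the number of iterations of the main loop and the cost of each iteration, and then combine the two bounds. Let $n = |X|$ and let $s$ denote the total size of $\P$ (i.e.\ the sum of the lengths of its clauses). The output size is trivially polynomial since the graph is over the $n$ nodes of $X$.

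First I would argue termination with a polynomial number of iterations. Observe that at every iteration in which the \textbf{repeat}-loop does not terminate, the set $C$ is non-empty, and the instruction \emph{``remove $C$ from $\P$''} strictly decreases the number of clauses left in $\P$. Since the loop exits exactly when $C = \emptyset$, and $\P$ starts with at most $s$ clauses, the loop executes at most $s+1$ times in total.

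Second, I would bound the work per iteration. The only non-trivial steps are (i) computing the strongly connected components of $\eG_i = \langle X, E_i\rangle$, which can be done in time $O(|X| + |E_i|) = O(n^2)$ using Tarjan's algorithm, and (ii) for every clause $h \leftarrow B$ still in $\P$, checking whether the subgraph of $\eG_i$ induced by $B$ is strongly connected, which reduces to checking whether all atoms of $B$ lie in a single SCC once the SCCs have been identified -- a test that takes $O(|B|)$ time per clause, hence $O(s)$ overall. The update of $E_{i+1}$ from $E_i$ adds at most $|B|$ edges per clause in $C$, for a total of $O(s)$ per iteration.

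Combining the two bounds gives an overall running time of $O(s \cdot (n^2 + s))$, which is polynomial in the size of the input. The main (and only real) obstacle is the termination argument; once one observes the monotone shrinking of $\P$ across iterations, everything else reduces to standard graph-algorithmic bookkeeping, and no argument beyond Tarjan's SCC algorithm is needed.
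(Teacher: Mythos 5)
Your argument is essentially the paper's own: the loop executes at most once per clause because each non-terminating iteration removes the non-empty set $C$ from $\P$, and each iteration costs polynomial time by standard graph computations. One minor inaccuracy worth noting: the test in the pseudocode is whether the subgraph of $\eG_i$ \emph{induced by} $B$ is strongly connected, which is not equivalent to all atoms of $B$ lying in a single strongly connected component of $\eG_i$ (the latter can hold via paths through nodes outside $B$); this does not affect your polynomial bound, since the induced-subgraph test is itself directly computable in $O(|B|^2)$ time per clause.
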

Indeed, at each iteration, a non-empty set of clauses 
(for otherwise the algorithm would stop) is taken into account
and each clause of the theory is considered at most once. 
Thus, the number of iterations is
at most linear w.r.t. the number of clauses of the theory.
As for the cost of a single iteration,
we have first to find a clause $c$ such that the subgraph of $\eG_i$
induced by the body of $c$ is strongly connected. This task can be
clearly accomplished in polynomial time.
Second, we have to build the new graph $\eG_{i+1}$ by adding new arcs
to $\eG_i$, a task
that can be accomplished also in polynomial time.

\medskip
Let us now resort to the function \ref{funct:find_elem_nonout_set}
(see Figure \ref{fig:funct_find_sel}).

Assume that the set $\atom{\Pnd}$ is not elementary for $\Pnd$.
Then the elementary graph $\eG\left(\atom{\Pnd}, \Pnd\right)$ is not strongly
connected (by Proposition \ref{prop:elem_subgraph_connected}).
Therefore, the graph $\eG\left(\atom{\Pnd}, \Pnd\right)$ can be partitioned into
the sets $\C_1,\dots,\C_k$
of its maximal strongly connected components
and organized into $m\ge 1$ levels, such that
if there is an arc from a node in a connected component $C_i$
to a node in a connected component $C_j$, then
the level of $C_i$ precedes the level of $C_j$.
Isolated connected components possibly occurring in the graph are assumed to be
part of the last level $m$.

\begin{figure}
\begin{function}[H]
\KwIn{An 
HEF CNF theory $\P$ such that $\atom{\P}=\atom{\Pnd}$}
\KwOut{A $\sel$ set in $\atom{\P}$ for $\P$}
\BlankLine
$X_0 = \atom{\P}$\;
$i = 0$\;
$stop = false$\;
\Repeat{$stop$}{
	compute the elementary subgraph $\G_i = \eG(X_{i}, \Pnd)$\;
	\If{$\G_i$ is strongly connected}{
		$stop = true$\;
	}\Else{
	select a connected component $\C$ in the last level of $\G_i$\;
	$X_{i+1} = X_{i} \setminus{\C}$\;
	$i = i+1$\;
	}
}
\Return $X_i$
\caption{\textit{find\_\sel\_set}()}
\label{funct:find_elem_nonout_set}
\end{function}
\caption{The \ref{funct:find_elem_nonout_set} function.}
\label{fig:funct_find_sel}
\end{figure}

The following Theorem states the correctness of the function
\ref{funct:find_elem_nonout_set}.

\begin{theorem}\label{theo:correct_xiHEF}
Let $\P$ be a disjunctive HEF CNF theory
such that $\atom{\P}=\atom{\Pnd}$.
Then, 
the function \ref{funct:find_elem_nonout_set}{\rm (}$\P${\rm )}
computes a $\sel$ set for $\P$.
\end{theorem}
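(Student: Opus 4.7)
The plan is to show that the main loop of \ref{funct:find_elem_nonout_set} terminates producing a set $X_i$ that is both elementary for $\Pnd$ and non-outbound in $\atom{\P}$ for $\Pnd$; Lemma \ref{theo:pnd->p} then upgrades this conclusion to $\sel$ for $\P$, which is exactly what is required. Termination is routine, since each iteration that does not exit the loop removes a non-empty SCC from $X_i$, so the algorithm halts within at most $|\atom{\P}|$ iterations. When the loop exits, $\G_i = \eG(X_i, \Pnd)$ is strongly connected by the stopping condition, so Proposition \ref{prop:elem_subgraph_connected} yields that $X_i$ is elementary for $\Pnd$, and hence also for $\P$, since any clause of $\Pnd$ witnessing outboundness of a proper subset for $\Pnd$ also witnesses it for $\P \supseteq \Pnd$.

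The key nontrivial task is to maintain, as a loop invariant, the property that $X_i$ is non-outbound in $\atom{\P}$ for $\Pnd$ throughout the execution. I would prove this by induction on $i$: the base case $X_0 = \atom{\P}$ is immediate because $\atom{\P} \setminus X_0 = \emptyset$. For the inductive step, assume $X_i$ is non-outbound and $\C$ is the sink SCC selected from $\G_i$, and suppose for contradiction that $X_{i+1} = X_i \setminus \C$ is outbound, witnessed by some clause $h \leftarrow B \in \Pnd$ with $h \in X_i \setminus \C$, non-empty body $B$, and $B \cap X_{i+1} = \emptyset$. The inductive hypothesis forces $\emptyset \neq B \cap X_i \subseteq \C$. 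The intended contradiction is to exhibit an outgoing arc of $\G_i$ with source in $\C$ and target $h \notin \C$: by the elementary-subgraph construction, the clause $h \leftarrow B$ contributes arcs $(b, h)$ for each $b \in B \cap X_i$ as soon as the subgraph of $\G_i$ induced by the body becomes strongly connected, and any such arc would violate the sink property of $\C$.

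The main obstacle is the subtle case in which $B \cap X_i$ is a proper subset of $\C$ that fails to be strongly connected in $\G_i$, so that the clause never fires in the elementary-subgraph procedure and the naive outgoing-arc argument does not apply directly. Ruling out this configuration is where the HEF hypothesis on $\P$ and the assumption $\atom{\P} = \atom{\Pnd}$ are decisively used, together with the fact that sink SCCs of $\eG(X_i, \Pnd)$ are themselves elementary sets for $\Pnd$ and hence for $\P$. Combining these properties should constrain the internal structure of $\C$ enough to preclude the existence of a clause that blocks the outgoing-arc argument, thereby closing the induction and yielding that the returned $X_i$ is $\sel$ for $\Pnd$, and in turn for $\P$ via Lemma \ref{theo:pnd->p}.
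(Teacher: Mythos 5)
Your overall architecture coincides with the paper's: termination is immediate, strong connectivity of $\G_i$ at exit gives elementarity of $X_i$ for $\Pnd$ via Proposition \ref{prop:elem_subgraph_connected}, the heart of the matter is the loop invariant that each $X_i$ is non-empty and non-outbound in $\atom{\Pnd}$ for $\Pnd$ (this is exactly Claim \ref{claim:non_outbound}), and Lemma \ref{theo:pnd->p} lifts the conclusion from $\Pnd$ to $\P$. Your base case and the sub-case $B\cap\C=\emptyset$ (disposed of by the induction hypothesis) also match the paper. The problem is that your inductive step is not actually finished. In the remaining sub-case $\emptyset\neq B\cap X_i\subseteq\C$ you correctly observe that an arc $(b,h)$ leaving $\C$ is only added by \ref{proc:compute_elem_set} when the body induces a strongly connected subgraph of the partial graph, you isolate the configuration in which $B\cap X_i$ is a subset of $\C$ of size at least two that is not strongly connected in $\G_i$, and then you write that HEF-ness and $\atom{\P}=\atom{\Pnd}$ ``should'' preclude it. That is a conjecture, not a proof: no argument is supplied, and the auxiliary fact you lean on --- that a sink SCC $\C$ of $\eG(X_i,\Pnd)$ is itself an elementary set for $\Pnd$ --- does not follow from Proposition \ref{prop:elem_subgraph_connected}, which characterizes elementarity of $\C$ via strong connectivity of $\eG(\C,\Pnd)$; that is a different graph from the restriction of $\eG(X_i,\Pnd)$ to $\C$, since arcs inside $\C$ in the latter may come from clauses whose bodies reach outside $\C$ and hence contribute nothing to $\eG(\C,\Pnd)$.

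For comparison, the paper's proof of Claim \ref{claim:non_outbound} settles this case in one line: the projected clause $c_{X_i}\equiv h\leftarrow B\cap X_i$ belongs to $\Pnd_{X_i}$, has body inside $\C$ and head in $X_{i+1}$, and this is taken to be incompatible with $\C$ lying in the last level of $\G_i$. You have in effect spotted that this step silently assumes the clause actually fires in the elementary-subgraph construction; flagging that subtlety is to your credit, but flagging it is not resolving it. As your write-up stands, you must either prove that the outgoing arc really is present (e.g., by showing the relevant body is strongly connected at some stage) or rule out the witnessing clause by some other route; until then the invariant, and with it the theorem, is not established.
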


In order to prove the theorem the following result is useful.

\begin{claim}\label{claim:non_outbound}
For each $i\ge 0$, $X_i$ is a non-empty non-outbound set in $\atom{\Pnd}$ for $\Pnd$.
\end{claim}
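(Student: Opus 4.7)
My plan is to establish the claim by induction on $i$. For the base case ($i=0$), note that $X_0 = \atom{\P} = \atom{\Pnd}$ is non-empty (assuming $\Pnd$ is non-empty) and is trivially non-outbound in $\atom{\Pnd}$ for $\Pnd$: for any clause $H \leftarrow B \in \Pnd$, outbound condition (ii), namely $B \cap (\atom{\Pnd}\setminus X_0)\neq\emptyset$, fails because $\atom{\Pnd}\setminus X_0 = \emptyset$.

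For the inductive step, I would assume $X_i$ is non-empty and non-outbound in $\atom{\Pnd}$ for $\Pnd$, and show that $X_{i+1} = X_i\setminus\C$ retains both properties. Non-emptiness is immediate: since the algorithm did not stop at iteration $i$, $\G_i$ is not strongly connected, so it has at least two SCCs, which means $\C\subsetneq X_i$.

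To prove non-outbound-ness I would argue by contradiction. Suppose some clause $h\leftarrow B\in\Pnd$ witnesses the outbound-ness of $X_{i+1}$, so $h \in X_{i+1}$, $B$ is non-empty, and $B\cap X_{i+1}=\emptyset$. Applying the inductive hypothesis to this clause, whose head $h$ lies in $X_{i+1} \subseteq X_i$, forces $B\cap X_i\neq\emptyset$ (otherwise $X_i$ itself would be outbound via this clause); combined with $B\cap X_{i+1}=\emptyset$, this gives $\emptyset\neq B\cap X_i\subseteq\C$. Picking any $b\in B\cap\C$, I would then derive a contradiction with the assumption that $\C$ lies at the last level of $\G_i$. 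Specifically, I would show that the clause $h\leftarrow B$ contributes an arc $(b,h)$ to $\G_i$; such an arc would go from a node in $\C$ to a node in $X_i\setminus\C$ (since $h\in X_{i+1}\subseteq X_i\setminus\C$), hence into a different SCC of $\G_i$, contradicting the fact that $\C$ sits at the last level.

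The main obstacle is showing that the arc $(b,h)$ is actually produced by the elementary subgraph construction. Such an arc is added only when, at some iteration of the fixpoint construction of $\eG(X_i,\Pnd)$, the subgraph induced by the relevant body atoms becomes strongly connected. To carry this through, I would exploit the fact that $B\cap X_i$ is a non-empty subset of the single SCC $\C$ of the final graph $\G_i$, and analyze the monotone construction of the elementary subgraph to argue that these body atoms become strongly connected at some iteration, so that the clause is eventually processed and arcs from its body to $h$ are added. This technical step, which couples the structure of $\C$ as an SCC of $\G_i$ with the fixpoint behaviour of the elementary subgraph procedure, is where the main work of the proof lies.
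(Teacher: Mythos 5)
Your proposal follows essentially the same route as the paper's proof: induction on $i$, with the base case discharged because $\atom{\Pnd}\setminus X_0=\emptyset$, non-emptiness of $X_{i+1}$ obtained from the fact that $\G_i$ is not strongly connected, and the inductive step reduced by contradiction to the existence of a single-head clause $h\leftarrow B$ with $h\in X_{i+1}$ and $\emptyset\neq B\cap X_i\subseteq\C$ (the paper organizes this as two cases, $B\cap\C=\emptyset$ versus $B\cap\C\neq\emptyset$, but the logic is identical to your direct derivation). The one place where you stop short --- showing that this clause actually contributes an arc $(b,h)$ to the elementary subgraph $\G_i$, so that $\C$ cannot sit at the last level --- is precisely the step the paper dispatches in a single sentence: it asserts that since the projected clause $c_{X_i}$ belongs to $\Pnd_{X_i}$, the component $\C$ could not belong to the last level of $\G_i$. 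The paper does not engage with the concern you raise, namely that $\eG(X_i,\Pnd)$ records an arc for a clause only if, at some stage of the fixpoint construction, the subgraph induced by that clause's body becomes strongly connected; it implicitly treats membership of the clause in $\Pnd_{X_i}$ with body inside $\C$ as sufficient. So your write-up is faithful to the paper's argument and, if anything, is more candid about where the delicate point lies; but as submitted it is incomplete, since the ``main work'' you defer is never carried out, and it is not discharged by merely observing that $B\cap X_i$ lies inside an SCC of the final graph --- you would need the additional analysis of the fixpoint construction that you only sketch.
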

\begin{proofOf}{Claim \ref{claim:non_outbound}}
The proof is by induction.

We start by noticing that the non-empty set $X_0=\atom{\P}=\atom{\Pnd}$ is non-outbound in $\atom{\Pnd}$ for $\Pnd$,
by definition of outbound set. 
Moreover, consider the graph $\G_0$, namely, the
elementary graph associated with the set of atoms $X_0 = \atom{\Pnd}$ and the
theory $\Pnd$.
Note that this graph is not strongly connected since $\P$ is, by hypothesis, a disjunctive HEF theory such that $\atom{\P}=\atom{\Pnd}$ and then
$\atom{\P}$ is not elementary for $\Pnd$.

Now, for $i>1$,
assume by induction hypothesis that $X_{i}$ is non-outbound in $\atom{\Pnd}$ for $\Pnd$
and that the graph $\G_i$ is not strongly connected
(for otherwise the algorithm would have stopped).
Consider a strongly connected component $\C$ of the last level of $\G_i$
and the set $X_{i+1} = X_{i}\setminus\C$.
Note that $X_{i+1}$ is non-empty since, by induction hypothesis, $\G_i$
is not strongly connected and note, moreover, that also $\C$ is not empty.

Next, it is shown that $X_{i+1}$ is non-outbound in $\atom{\Pnd}$ for $\Pnd$ or,
in other words, that there does not exist any clause $c\equiv h\leftarrow B$
such that
$B \subseteq X_0 \setminus X_{i+1}$ and $h \in X_{i+1}$,
(note that this means that, without loss of generality, we can limit ourselves to focus
only on such single-head clauses
where the atom in the head belongs to $X_{i+1}$ and the body is in $X_0 \setminus X_{i+1}$).

So, assume by contradiction that one such a clause $c$ indeed exists. Two cases
are possible.

\begin{description}
\item
$B \cap \C = \emptyset$. In this case, $B \subseteq X_0\setminus X_{i}$.
Therefore, $c$ cannot exist in $\Pnd$ since $X_{i}$
is non-outbound in $\atom{\Pnd}$ for $\Pnd$.
\item
$B\cap \C\neq\emptyset$.
Also in this case,  the clause $c$ cannot exist in $\Pnd$.
Indeed, the clause $c_{X_{i}}$
obtained by projecting $c$ on $X_{i}$, has its body contained in $\C$ and its
head in $X_{i+1}$.
Since this clause would belong to $\Pnd_{X_{i}}$, then it would be the case that
$\C$ would not belong to the last level of $\G_i$.

\end{description}

This concludes the proof of Claim \ref{claim:non_outbound}.
\end{proofOf}

Using Claim \ref{claim:non_outbound}, the statement of Theorem
\ref{theo:correct_xiHEF} easily follows, as shown next.

\begin{proofOf}{Theorem \ref{theo:correct_xiHEF}}
When the algorithm \ref{funct:find_elem_nonout_set}
stops, the last set $X_i$ is elementary for $\Pnd$,
since the graph $\G_i$ is strongly connected.
By Claim \ref{claim:non_outbound}, the set $X_i$
is also non-empty and non-outbound in $\atom{\P}$ for $\Pnd$.
To conclude, by Lemma \ref{theo:pnd->p}, the set $X_i$
is $\sel$ for $\P$.
\end{proofOf}

\begin{figure}[t]
\centering 
\begin{tabular}{cc}
\fbox{
\begin{minipage}{0.4\textwidth}
\includegraphics[width=1.0\textwidth]{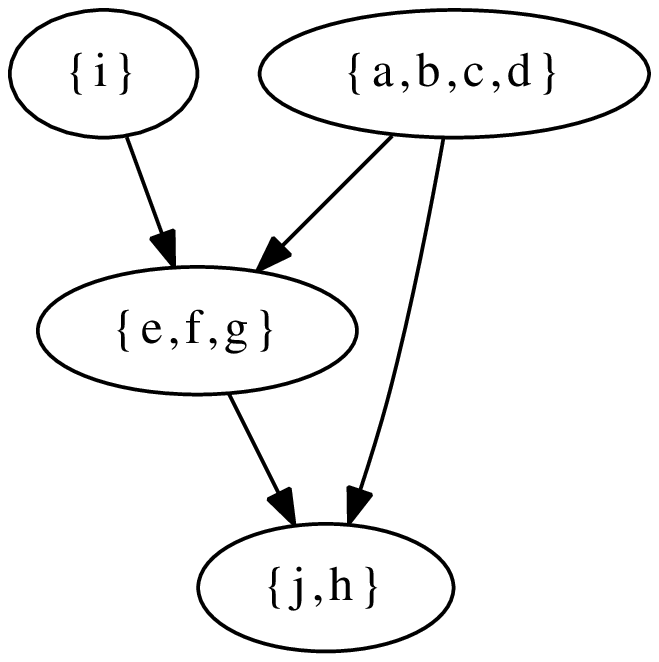}
\end{minipage}
}
& 
 \qquad
\fbox{
\begin{minipage}{0.4\textwidth}
\centering
\includegraphics[width=0.8\textwidth]{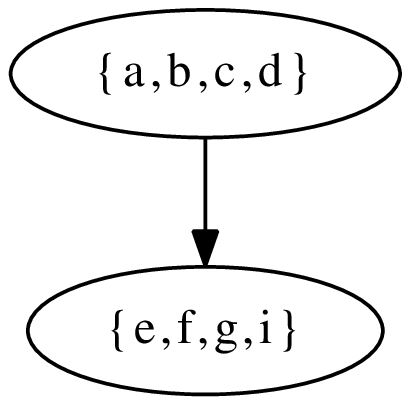}
\end{minipage}
}
\end{tabular}
\caption{Example of execution of the function \ref{funct:find_elem_nonout_set}.}
\label{fig:ex_poscnf_supelem}
\end{figure}

\begin{exampleContinued}{\ref{ex:pos_cnf}}{Minimal models of positive CNF theories}\rm
Consider again the theory $\P$ reported in Figure \ref{fig:ex_poscnf}
and the function \ref{funct:find_elem_nonout_set}($\P$).
The connected components of the elementary subgraph
$\G_0$ are shown in Figure \ref{fig:ex_poscnf_supelem} on the left.
Thus, there is a unique connected component in the last level
of $\G_0$, which is $\C=\{j,h\}$, and $X_1$ is set 
to $\{a,b,c,d,e,f,g,i\}$.
Notice that the 
connected components of the elementary subgraph
$\G_1$, which are reported in Figure \ref{fig:ex_poscnf_supelem} on the right,
are not a subset of those of $\G_0$.
The set $X_2$ is then $\{a,b,c,d\}$ and it is 
the $\sel$ set returned by the function.
\end{exampleContinued}

The next theorem accounts for the complexity of the function \ref{funct:find_elem_nonout_set}.

\begin{theorem}\label{theo:ptime_xiHEF}
For any CNF theory $\P$,
the function \ref{funct:find_elem_nonout_set}{\rm (}$\P${\rm )}
terminates in polynomial time in the size of the theory.
\end{theorem}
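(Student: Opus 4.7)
The proof plan is a routine ``iterations times cost-per-iteration'' accounting, leveraging the polynomial-time bound on the elementary-subgraph construction that was already recorded earlier.

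First I would bound the number of iterations. At each pass through the repeat-loop of \ref{funct:find_elem_nonout_set}, either the algorithm stops (because $\G_i$ is strongly connected) or it chooses a connected component $\C$ in the last level of $\G_i$ and sets $X_{i+1} = X_i \setminus \C$. As observed in the discussion preceding Theorem \ref{theo:correct_xiHEF}, the strongly connected components into which $\G_i$ is partitioned are non-empty, so $|\C|\ge 1$ and hence $|X_{i+1}| < |X_i|$. Since $X_0 = \atom{\P}$ and the sequence $|X_0|>|X_1|>\dots$ is strictly decreasing in the non-negative integers, the number of iterations is at most $|\atom{\P}|$, which is linear in the size of $\P$.

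Next I would bound the cost of a single iteration. Three operations are performed inside the loop body: ($i$) the construction of the elementary subgraph $\G_i = \eG(X_i, \Pnd)$, which runs in polynomial time in the size of $\Pnd$ (and hence of $\P$) by Proposition \ref{prop:elem_subgraph_cost}; ($ii$) the strong-connectivity test on $\G_i$, together with the decomposition of $\G_i$ into its maximal strongly connected components and the computation of the level structure (i.e.\ a topological sort of the condensation), which is achievable in linear time in $|\G_i|$ via standard graph algorithms; and ($iii$) the set-difference operation $X_i \setminus \C$, which is trivially polynomial. Summing these, each iteration costs a polynomial in $|\P|$.

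Combining these two bounds, the total running time of \ref{funct:find_elem_nonout_set}($\P$) is at most $|\atom{\P}|$ times a polynomial in $|\P|$, and hence polynomial in the size of $\P$. There is no genuine obstacle here beyond verifying that the two cited polynomial-time primitives (elementary-subgraph construction and SCC/level computation) do indeed apply to the graphs arising inside the loop, which they do since at every iteration we feed \textit{\ref{proc:compute_elem_set}} the non-disjunctive theory $\Pnd$ together with a subset $X_i$ of its atoms, exactly the setting covered by Proposition \ref{prop:elem_subgraph_cost}.
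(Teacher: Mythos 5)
Your proposal is correct and follows essentially the same route as the paper's own proof: both bound the number of iterations by $|\atom{\P}|$ via the strict shrinkage $X_{i+1} = X_i\setminus\C$ with $\C\neq\emptyset$, and both appeal to the polynomial cost of the elementary-subgraph construction (Proposition \ref{prop:elem_subgraph_cost}) plus standard SCC computations for the per-iteration cost. Your write-up is merely a bit more explicit about the individual operations inside the loop.
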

\begin{proof}
Initially $X_0$ contains all the atoms occurring in the input theory. Then, at
each iteration, either the graph $\G_i$ is strongly connected and then the
function stops and returns $X_i$, or $\G_i$ is not strongly
connected and in such case some node is removed from $X_i$.
In the latter case, there exist at least two strongly connected components in
graph $\G_i$.
$\C$ is one of them and is such that $X_i \supset\C\supset\emptyset$.
Thus, $X_{i+1}$ is always non-empty.
As for the convergence, it is ensured by the fact that the singleton set is
strongly connected by definition.

The number of iterations executed by the \ref{funct:find_elem_nonout_set}
function is at most equal to the number of atoms occurring in
the input theory, since in the worst case $\C$ consists in just one single atom
at each iteration.
The statement follows by the fact that each iteration can be accomplished in
polynomial time.
\end{proof}

\subsection{Defining an eliminating operator for HEF CNF theories}

In previous sections, we showed that:
\begin{itemize}
\item[--]
given a HEF CNF theory $\P$ and a model $\M$ for $\P$,
a $\sel$ set for $\ssimpl{\P}{\M}$ is erasable
in $\M$ for $\P$ (Theorem \ref{theo:eliminable_set} in Section \ref{sect:sel_set}),

\item[--]
given a HEF CNF theory $\P$,
if the set of atoms of $\P$
coincides with that of its non-disjunctive
fragment, a $\sel$ set always exists (see Theorem \ref{theo:existence} in Section \ref{sect:sel_exists})
and can be indeed computed in polynomial time (see Theorems \ref{theo:correct_xiHEF}
and \ref{theo:ptime_xiHEF} in Section \ref{sect:comp_sel_set}).
\end{itemize}

Putting things together, given an HEF CNF,
it can be concluded that 
if $\atom{\ssimpl{\P}{\M}}$ 
coincides with $\atom{\ssimplnd{\P}{\M}}$,
an erasable set $\E$ in $\M$ for $\P$
can be obtained by computing a $\sel$ set
for $\ssimpl{\P}{\M}$ (as detailed in Section \ref{sect:comp_sel_set}).

In order to build a suitable eliminating operator for HEF theories,
it remains to prove 
that
if $\atom{\ssimpl{\P}{\M}}$ is a strict superset of $\atom{\ssimplnd{\P}{\M}}$
then
it is always possible to find in polynomial time a model $\M'\subseteq\M$
such that $\atom{\ssimpl{\P}{\M'}}$ coincides with $\atom{\ssimplnd{\P}{\M'}}$.

\begin{proposition}\label{th:cnf_transf}
Given a CNF theory $\P$ and a model $\M$ for $\P$,
a model $\M'\subseteq\M$
such that 
$\atom{\ssimpl{\P}{\M'}}$ coincides with $\atom{\ssimplnd{\P}{\M'}}$
can be computed in polynomial time.
\end{proposition}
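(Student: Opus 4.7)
The plan is to construct $\M'$ by an iterative peeling procedure: starting from $\M$, while $\atom{\ssimpl{\P}{\M}} \supsetneq \atom{\ssimplnd{\P}{\M}}$, pick any atom $a$ in the difference, replace $\M$ with $\M \setminus \{a\}$, and recompute the steady set and the simplified theory. When the loop exits, return $\M' = \M$. Correctness rests on two things: that at each step $\M$ remains a model of $\P$, and that the process terminates in polynomially many iterations.

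The crux of the argument is the following invariant, which I would prove separately: if $\M$ is a model of $\P$ and $a \in \atom{\ssimpl{\P}{\M}} \setminus \atom{\ssimplnd{\P}{\M}}$, then $\M \setminus \{a\}$ is also a model of $\P$. My proof would go by contradiction. Suppose some clause $c \equiv H \leftarrow B$ of $\P$ is falsified by $\M \setminus \{a\}$. Then $B \subseteq \M \setminus \{a\} \subseteq \M$ and $H \cap (\M \setminus \{a\}) = \emptyset$; since $\M$ itself satisfies $c$, we must have $H \cap \M = \{a\}$, so in particular $a \in H$. Because $a \in \atom{\ssimpl{\P}{\M}}$, and atoms appearing in $\ssimpl{\P}{\M}$ live in $\M \setminus \S$, we know $a \notin \S$; combined with $\S \subseteq \M$ and $H \cap \M = \{a\}$ this gives $H \cap \S = \emptyset$. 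Hence $c \in \simpl{\P}{\M}$, and its projection $c_{\M \setminus \S}$ has head $H \cap (\M \setminus \S) = H \cap \M = \{a\}$, a singleton. Therefore $c_{\M \setminus \S}$ is a non-disjunctive clause of $\ssimpl{\P}{\M}$ containing $a$, so $a \in \atom{\ssimplnd{\P}{\M}}$, contradicting the choice of $a$.

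Termination and polynomiality are then routine: each iteration strictly decreases the cardinality of the working model, so the loop executes at most $|\atom{\P}|$ times; within each iteration, the steady set is computed in linear time by unit propagation on the Horn theory $\Pnd_{\M\leftarrow}$ (as already noted in the proof of Proposition~\ref{prop:gea_cost}), while forming $\ssimpl{\P}{\M}$ and its non-disjunctive fragment, collecting their atom sets, and checking the stopping condition are all polynomial in the size of $\P$. The main conceptual obstacle is exactly the invariant lemma above; once that is in hand, nothing else requires serious work.
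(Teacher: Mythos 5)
Your proof is correct, and it reaches the same goal by the same overall device as the paper -- an iterative peeling loop that deletes one offending atom per round and recomputes $\S$ and $\ssimpl{\P}{\M}$ -- but the justification of the key step is genuinely different and more self-contained. The paper splits the work into two cases: it first strips off, in bulk, the atoms of $(\M\setminus\S)\setminus\atom{\ssimpl{\P}{\M}}$ (Lemma \ref{th:worthless_atoms}), and then removes single atoms of $(\M\setminus\S)\setminus\atom{\ssimplnd{\P}{\M}}$ by applying Lemma \ref{theo:singleton_erasable} \emph{to the simplified theory} $\ssimpl{\P}{\M}$ with model $\M\setminus\S$, transferring erasability back to $\M$ and $\P$ via the correspondence of Lemma \ref{th:equiv_theories}. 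You instead prove directly, at the level of $\P$ itself, that removing any $a\in\atom{\ssimpl{\P}{\M}}\setminus\atom{\ssimplnd{\P}{\M}}$ preserves modelhood: a clause falsified by $\M\setminus\{a\}$ would have to project to a single-head clause of $\ssimpl{\P}{\M}$ with head $\{a\}$, forcing $a\in\atom{\ssimplnd{\P}{\M}}$. This argument is sound (the constraint case is covered, since an empty-head clause satisfied by $\M$ stays satisfied by any subset, and your chain $a\in\atom{\ssimpl{\P}{\M}}\Rightarrow a\notin\S\Rightarrow H\cap\S=\emptyset\Rightarrow c\in\simpl{\P}{\M}$ is exactly what the definitions give). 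What the paper's route buys is reuse: the three auxiliary lemmas are needed anyway for the $\xi_{\rm HEF}$ operator, and the bulk removal of atoms outside $\atom{\ssimpl{\P}{\M}}$ -- which your loop never touches, and which the bare statement of the proposition indeed does not require -- is exploited there. What your route buys is a shorter, dependency-free proof of the proposition as stated.
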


The above result, which is valid  not only for HEF CNF theories
but, rather, for any CNF
theory, will make the strategy above depicted generally applicable to any HEF
CNF theory.

In order to prove Proposition \ref{th:cnf_transf},
the intermediate results stated in technical 
Lemmas \ref{th:worthless_atoms} and \ref{theo:singleton_erasable} are preliminarily needed.

\begin{lemma}\label{th:worthless_atoms}
Let $\P$ be a CNF theory,
let $\M$ be a model of $\P$
and let $\S$ be the steady set of $\M$ for $\P$.
Then, $\E = (\M\setminus\S) \setminus
\atom{\ssimpl{\P}{\M}}$
is erasable in $\M$ for $\P$.
\end{lemma}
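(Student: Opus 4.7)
The plan is to show that $\M \setminus \E$ still satisfies every clause of $\P$, using a direct case analysis. The key observation is to first rewrite $\E$ in a more convenient form: unfolding the definition gives
$$\M \setminus \E \;=\; \S \,\cup\, \bigl(\M \cap \atom{\ssimpl{\P}{\M}}\bigr),$$
since an atom of $\M$ fails to belong to $\E$ precisely when either it lies in $\S$ or it survives in $\atom{\ssimpl{\P}{\M}}$. So the atoms removed from $\M$ are exactly those outside $\S$ which do not occur in the simplified theory, and the intuition is that such atoms ``play no role'' in keeping any clause of $\P$ satisfied.

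Next, I would fix an arbitrary clause $c \equiv H \leftarrow B$ of $\P$ and show it remains true in $\M \setminus \E$, splitting on whether $B \subseteq \M \setminus \E$. If $B \not\subseteq \M \setminus \E$, then $c$ is trivially satisfied. Otherwise, in particular $B \subseteq \M$, and I would further split on whether $H \cap \S = \emptyset$. If $H \cap \S \neq \emptyset$, then some head atom lies in $\S \subseteq \M \setminus \E$ and $c$ is satisfied.

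The only remaining case is $B \subseteq \M$ and $H \cap \S = \emptyset$. Here I would invoke the definition of $\simpl{\P}{\M}$ to conclude $c \in \simpl{\P}{\M}$. Since $\M$ is a model of $\P$, there exists $h \in H \cap \M$; because $H \cap \S = \emptyset$, such an $h$ actually lies in $H \cap (\M \setminus \S)$. This ensures the projected head $H \cap (\M \setminus \S)$ is non-empty, so $c_{\M \setminus \S} \in \ssimpl{\P}{\M}$ and hence $h \in \atom{\ssimpl{\P}{\M}}$. Therefore $h \in \M \cap \atom{\ssimpl{\P}{\M}} \subseteq \M \setminus \E$, which witnesses the truth of $c$ in $\M \setminus \E$.

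I do not foresee a serious obstacle: once the rewriting of $\M \setminus \E$ is in hand, the argument is a straightforward bookkeeping exercise with the definitions of steady set, $\simpl{\P}{\M}$, and $\ssimpl{\P}{\M}$. The subtle point to be careful about is making sure that the projected clause $c_{\M \setminus \S}$ really appears in $\ssimpl{\P}{\M}$ — which requires its head to be non-empty — and for this one needs to leverage the fact that $\M$ models the original clause $c$ together with $H \cap \S = \emptyset$.
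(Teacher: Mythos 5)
Your proof is correct, but it takes a different route from the paper's. The paper disposes of this lemma in one line: since $\E$ is by construction disjoint from $\atom{\ssimpl{\P}{\M}}$, the set $(\M\setminus\S)\setminus\E$ still contains $\atom{\ssimpl{\P}{\M}}$, which is trivially a model of the positive theory $\ssimpl{\P}{\M}$; hence $\E$ is erasable in $\M\setminus\S$ for $\ssimpl{\P}{\M}$, and Lemma \ref{th:equiv_theories} transfers erasability back to $\M$ for $\P$. You instead verify clause by clause that $\M\setminus\E = \S\cup(\M\cap\atom{\ssimpl{\P}{\M}})$ satisfies $\P$, and your case analysis is exhaustive and sound: the only nontrivial case ($B\subseteq\M\setminus\E$ and $H\cap\S=\emptyset$) is correctly handled by noting that $c\in\simpl{\P}{\M}$, that $\M$ being a model forces some $h\in H\cap(\M\setminus\S)$, and that this $h$ therefore survives in $\atom{\ssimpl{\P}{\M}}$. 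In effect you have inlined the relevant direction of Lemma \ref{th:equiv_theories} rather than citing it, which makes your argument longer but self-contained; the paper's version buys brevity at the cost of depending on the earlier equivalence lemma. The only cosmetic point worth noting is that the paper's definition of ``erasable'' formally requires $\E\neq\emptyset$, which neither your proof nor the paper's statement addresses; this is immaterial since the lemma is only invoked when $\M\setminus\S$ strictly contains $\atom{\ssimpl{\P}{\M}}$.
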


\begin{lemma}\label{theo:singleton_erasable}
Let $\P$ be a CNF theory
and let $\M$ be a model of $\P$.
If there exists an atom $a$ such that $a\in\M\setminus\atom{\Pnd_{\M\leftarrow}}$ then
$\{a\}$ is erasable in $\M$ for $\P$.
\end{lemma}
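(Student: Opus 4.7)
The plan is to show directly that $\M' = \M \setminus \{a\}$ is a model of $\P$; since $\{a\}$ is non-empty, this gives that $\{a\}$ is erasable in $\M$ for $\P$. I will fix an arbitrary clause $c \equiv H \leftarrow B$ of $\P$ and verify that $\M'$ satisfies $c$, invoking the hypothesis $a \notin \atom{\Pnd_{\M\leftarrow}}$ only in one subcase. The key preliminary observation is the precedence convention stated in Section~\ref{sect:prelim}: $\Pnd_{\M\leftarrow}$ stands for $(\P_{\M\leftarrow})^{nd}$, i.e., the set of clauses of the form $(H \cap \M) \leftarrow B$ extracted from $\P$ that become Horn after head-projection on $\M$, equivalently those with $|H \cap \M| = 1$.

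The argument proceeds by case analysis on the body and the head. If $a \in B$ or $B \not\subseteq \M$, then $B \not\subseteq \M'$ and $c$ is vacuously satisfied. Otherwise $a \notin B$ and $B \subseteq \M'$; since $\M$ is a model of $\P$ and the body of $c$ is satisfied in $\M$, we have $H \cap \M \neq \emptyset$. If $|H \cap \M| \ge 2$, I can pick a head atom different from $a$, which lies in $\M'$ and satisfies $c$. The delicate subcase is $|H \cap \M| = 1$, say $H \cap \M = \{h\}$, where I must argue $h \neq a$: since $|H \cap \M| = 1$, the projected clause $c_{\M\leftarrow} \equiv \{h\} \leftarrow B$ is single-headed, hence Horn and non-empty-headed, so it lies in $(\P_{\M\leftarrow})^{nd} = \Pnd_{\M\leftarrow}$. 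Therefore $h \in \atom{\Pnd_{\M\leftarrow}}$, and the hypothesis $a \notin \atom{\Pnd_{\M\leftarrow}}$ forces $h \neq a$, yielding $h \in \M'$ and so $\M' \models c$.

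The main obstacle is recognising that, under the stated precedence, $\atom{\Pnd_{\M\leftarrow}}$ tracks not only the original Horn clauses of $\P$ whose head is $a$, but also every disjunctive clause of $\P$ whose head collapses to the singleton $\{a\}$ after intersection with $\M$. Once this reading is in focus, the hypothesis is exactly strong enough to rule out the only subcase that could otherwise invalidate $\M \setminus \{a\}$ as a model, and the remainder of the verification is routine.
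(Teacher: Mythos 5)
Your proof is correct and follows essentially the same route as the paper's: the whole argument turns on the observation that any clause of $\P$ whose head meets $\M$ in exactly one atom contributes a Horn clause to $\Pnd_{\M\leftarrow}$, so $a\notin\atom{\Pnd_{\M\leftarrow}}$ guarantees $a$ is never the unique $\M$-witness of a head (nor needed in a body). Your clause-by-clause case analysis is merely a more explicit rendering of the paper's two-case sketch ($a$ in bodies versus $a$ in disjunctive heads of $\P_{\M\leftarrow}$), and in particular your handling of the $|H\cap\M|=1$ subcase makes precise the step the paper compresses into ``$\{a\}\subset H$ implies $|\M\cap H|\ge 2$''.
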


We are now in the position of proving
Proposition \ref{th:cnf_transf}.

\begin{proofOf}{Proposition \ref{th:cnf_transf}}
Let $\S$ denote the steady set 
of $\M$ in $\P$.
The two following transformations (see points 1-2)
can be recursively applied, till the 
condition $\atom{\ssimpl{\P}{\M}}=\atom{\ssimplnd{\P}{\M}}$ 
is met:
\begin{enumerate}
 \item
\underline{\em If $\M\setminus\S$ is a strict superset of $\atom{\ssimpl{\P}{\M}}$ then}
by Lemma \ref{th:worthless_atoms} the atoms in the non-empty set
$\E = (\M\setminus\S)\setminus\atom{\ssimpl{\P}{\M}}$ are erasable in $\M$
for $\P$ and $\M'$ can be set to $\M\setminus\E$;

 \item
\underline{\em Else if $\M\setminus\S$ is a strict superset of $\atom{\ssimplnd{\P}{\M}}$ 
then} any atom $a \in (\M\setminus\S)\setminus\atom{\ssimplnd{\P}{\M}}$ is such that
$\{a\}$ is erasable in $\M\setminus\S$ for $\ssimpl{\P}{\M}$ 
(by Lemma \ref{theo:singleton_erasable}, 
since $\M\setminus\S$ is a model for $\ssimplnd{\P}{\M}$)
and also erasable in $\M$ for $\P$ (by Lemma \ref{th:equiv_theories});
hence, let $\E=\{a\}$
an arbitrarily chosen atom in $(\M\setminus\S)\setminus\atom{\ssimplnd{\P}{\M}}$,
then $\M'$ can be set to $\M\setminus\E$;

 \item
\underline{\em Else}
it is the case that $\atom{\ssimpl{\P}{\M}}=\atom{\ssimplnd{\P}{\M}}$.
\end{enumerate}
The whole process can be completed polynomial time.
\end{proofOf}

\begin{figure}
\begin{function}[H]
\footnotesize
\KwIn{An HEF CNF theory $\P$ and a model $\M$ of $\P$}
\KwOut{An erasable set $\E$ in $\M$ for $\P$}
\BlankLine
$\E'=\emptyset$\;
\Repeat{$\Delta\E=\emptyset$}{
$\M=\M\setminus\E'$\;
Compute the steady set $\S$ of $\M$ for $\P$\;
$\Delta\E=\emptyset$\;
\If{$(\M\setminus\S)\supset\atom{\ssimpl{\P}{\M}}$}{
	$\Delta\E = (\M\setminus\S)\setminus\atom{\ssimpl{\P}{\M}}$
}\ElseIf{$(\M\setminus\S)\supset\atom{\ssimplnd{\P}{\M}}$}{
	Select an atom $a$ in $(\M\setminus\S)\setminus\atom{\ssimplnd{\P}{\M}}$\;
	$\Delta\E=\{a\}$\;
}
	$\E'=\E'\cup\Delta\E$\;
}
\If{$\ssimpl{\P}{\M}$ is non-disjunctive}{
	$\E'' = \M\setminus\S$\;
}\Else{
	$\E'' = find\_super{\rm -}elementary\_set$($\ssimpl{\P}{\M}$)\;
	
}
$\E=\E'\cup\E''$\;
\Return $\E$\;
\caption{$\xi_{HEF}$()}
\label{proc:hef_elim}
\end{function}
\caption{The $\xi_{HEF}$ eliminating operator.}
\label{fig:hef_elim}
\end{figure}

Before describing the $\xi_{HEF}$ eliminating operator, the following technical result is needed.

\begin{lemma}\label{theo:erasable_nd}
Let $\P$ be a CNF theory, let $\M$ be a model of $\P$, and
let $\S$ be the steady set of $\M$ for $\P$.
If the theory $\ssimpl{\P}{\M}$ is non-disjunctive,
then $\emptyset$ is its minimal model.
\end{lemma}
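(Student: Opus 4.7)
The plan is to show that $\emptyset$ already satisfies every clause of $\ssimpl{\P}{\M}$; since no set is smaller than $\emptyset$, minimality is then automatic. The argument should reduce to observing that, under the non-disjunctiveness hypothesis, every clause of $\ssimpl{\P}{\M}$ has a non-empty body, so the clause is trivially satisfied by any interpretation that falsifies all atoms.

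First I would recall the structure of the clauses in $\ssimpl{\P}{\M}$. By construction, $\ssimpl{\P}{\M} = \left(\simpl{\P}{\M}\right)_{\M\setminus\S}$ and only clauses with a \emph{non-empty head} are retained. Hence for every clause $H'\leftarrow B'$ in $\ssimpl{\P}{\M}$ we have $H'\neq\emptyset$. Under the hypothesis that $\ssimpl{\P}{\M}$ is non-disjunctive, every such $H'$ has cardinality exactly $1$, so each clause has the form $h\leftarrow B'$ for a single atom $h$.

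The key step is then to invoke Lemma \ref{prop:equiv_th_atom}, which says that no clause of the form $h\leftarrow$ (single-atom head, empty body) occurs in $\ssimpl{\P}{\M}$. Combining this with the previous observation, every clause of $\ssimpl{\P}{\M}$ has the form $h\leftarrow B'$ with $B'\neq\emptyset$. Any such clause is true in the interpretation $\emptyset$, because at least one atom of $B'$ is false in $\emptyset$. Therefore $\emptyset$ is a model of $\ssimpl{\P}{\M}$.

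Finally, since $\emptyset$ is contained in every interpretation, it has no proper subset at all and is vacuously minimal; hence $\emptyset$ is the minimal model of $\ssimpl{\P}{\M}$. I do not anticipate any real obstacle here: the entire argument rides on the previously established Lemma \ref{prop:equiv_th_atom}, which rules out the only clause shape (a fact with singleton head) that $\emptyset$ would fail to satisfy.
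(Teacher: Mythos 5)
Your proof is correct and follows exactly the route the paper takes: the paper's own proof is just the one-line remark that the result ``follows immediately from Lemma \ref{prop:equiv_th_atom}'', and your write-up supplies precisely the missing details (non-empty singleton heads, no facts by Lemma \ref{prop:equiv_th_atom}, hence every clause has a non-empty body and is satisfied by $\emptyset$, which is then trivially the minimal model).
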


Figure \ref{fig:hef_elim} shows a realization of the $\xi_{HEF}$ eliminating operator.
The following theorem asserts the most relevant result of this section,
that is, that a minimal model for an HEF CNF theory can be indeed computed in
polynomial time.

\begin{theorem} \label{theo:GEApolynomial}
Let $\P$ be a HEF CNF theory and $\M$ be a model of $\P$. 
Then,
$\rm GEA_{\xi_{\rm HEF}}(\P,\M)$ computes, in polynomial time,
a minimal model of $\P$ contained in $\M$.
\end{theorem}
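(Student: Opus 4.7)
The plan is to reduce the theorem to two facts about the operator $\xi_{\rm HEF}$ defined in Figure \ref{fig:hef_elim}, and then invoke the general correctness (Theorem \ref{theo:gea_correct}) and complexity (Proposition \ref{prop:gea_cost}) results already established for GEA. Namely, it suffices to prove that, on any HEF input $\P$ and any model $\M$ of $\P$, the procedure $\xi_{\rm HEF}(\P,\M)$ (i) returns an erasable set in $\M$ for $\P$ whenever one exists, and the empty set otherwise, and (ii) terminates in time polynomial in the size of $\P$.

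For (i), I would analyse the procedure in two stages. The first stage is the outer \textbf{repeat} loop, which iteratively accumulates atoms into $\E'$ according to the two cases of the proof of Proposition \ref{th:cnf_transf}: each $\Delta\E$ added to $\E'$ is erasable in the current $\M$ for $\P$, by Lemma \ref{th:worthless_atoms} (in the first branch) or by Lemmas \ref{theo:singleton_erasable} and \ref{th:equiv_theories} (in the second branch). Upon exit, both tests fail, so $\atom{\ssimpl{\P}{\M}}=\M\setminus\S=\atom{\ssimplnd{\P}{\M}}$, matching the hypothesis needed in the sequel. The second stage computes $\E''$: if $\ssimpl{\P}{\M}$ is non-disjunctive, Lemma \ref{theo:erasable_nd} tells us that $\emptyset$ is its minimal model, so the whole set $\M\setminus\S=\atom{\ssimpl{\P}{\M}}$ is erasable in $\M\setminus\S$ for $\ssimpl{\P}{\M}$, and hence in $\M$ for $\P$ by Lemma \ref{th:equiv_theories}; otherwise, $\ssimpl{\P}{\M}$ is disjunctive and, by Lemma \ref{theo:HEF_monotonicity}, still HEF, and moreover $\atom{\ssimpl{\P}{\M}}=\atom{\ssimplnd{\P}{\M}}$, so Theorem \ref{theo:existence} guarantees that a $\sel$ set exists, Theorem \ref{theo:correct_xiHEF} guarantees that \textit{find\_\sel\_set} returns one, and Theorem \ref{theo:eliminable_set} guarantees that it is erasable in $\M$ for $\P$. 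Composing the two stages yields that $\E=\E'\cup\E''$ is erasable in the original $\M$ for $\P$.

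I still have to address the boundary behavior: if $\M$ is already minimal, then the steady set $\S$ of $\M$ for $\P$ equals $\M$ (since any sub-model would have to contain $\S$ by Property \ref{prop:mm_cont}), so $\M\setminus\S=\emptyset$; both loop tests fail immediately, $\ssimpl{\P}{\M}$ is empty and hence trivially non-disjunctive, and the procedure returns $\E'\cup\E''=\emptyset$. Conversely, if $\M$ is not minimal, an erasable set exists and the argument above produces a non-empty $\E$: indeed, in the non-disjunctive branch $\M\setminus\S\neq\emptyset$ (since some proper sub-model exists, which must contain $\S$), while in the disjunctive branch Theorem \ref{theo:existence} yields a non-empty $\sel$ set. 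Thus $\xi_{\rm HEF}$ is a proper eliminating operator on HEF theories.

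For (ii), the outer loop strictly shrinks $\M$ at every iteration that adds a non-empty $\Delta\E$, so it runs at most $|\atom{\P}|$ times; the steady set is a Horn minimal model, computable in linear time via unit propagation, and the sets $\atom{\ssimpl{\P}{\M}}$ and $\atom{\ssimplnd{\P}{\M}}$ are clearly computable in polynomial time, so each iteration costs polynomial time. The final computation of $\E''$ is either a trivial set difference or a call to \textit{find\_\sel\_set}, which is polynomial by Theorem \ref{theo:ptime_xiHEF}. Hence $C_{\xi_{\rm HEF}}$ is polynomial and the theorem follows from Theorem \ref{theo:gea_correct} together with Proposition \ref{prop:gea_cost}. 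The only delicate point I anticipate is bookkeeping the successive models $\M$ inside the loop to correctly chain the erasability claims through Lemma \ref{th:equiv_theories}, so that the final $\E=\E'\cup\E''$ is certified as erasable in the \emph{original} $\M$; everything else is a direct application of the machinery built up in Sections \ref{sect:sel_set}--\ref{sect:comp_sel_set}.
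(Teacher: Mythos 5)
Your proof is correct and follows essentially the same route as the paper's: reduce the theorem to showing that $\xi_{\rm HEF}$ is a polynomial-time eliminating operator, certify the erasability of $\E'$ via the preprocessing of Proposition \ref{th:cnf_transf} and of $\E''$ via Lemma \ref{theo:erasable_nd} (non-disjunctive case) or Theorems \ref{theo:existence}, \ref{theo:correct_xiHEF} and \ref{theo:eliminable_set} (disjunctive case), and conclude by Theorem \ref{theo:gea_correct} together with Proposition \ref{prop:gea_cost}. One small caveat on your extra boundary check: the claim that $\M$ minimal forces $\S=\M$ does not follow from Property \ref{prop:mm_cont} alone (which only yields $\S\subseteq\M$); for HEF theories the equality does hold, but only as a by-product of the erasability analysis itself (a non-empty erasable set would otherwise be produced), and the paper's own proof leaves this case implicit.
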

\begin{proof}
Because of Theorem \ref{theo:gea_correct} and Proposition \ref{prop:gea_cost},
in order to prove the statement, it is sufficient to show that
$(i)$ $\xi_{\rm HEF}$ returns an erasable set, 
if such a set exists, and an empty one otherwise (namely
that $\xi_{\rm HEF}$ is, in fact, an eliminating operator) and that
$(ii)$ $\xi_{\rm HEF}$ runs in polynomial time.

Let us consider first point ($i$).
Lines 2-12 in Figure \ref{fig:hef_elim} serve the purpose
of finding a subset
$\M'\subseteq\M$
such that 
$\atom{\ssimpl{\P}{\M'}}$ coincides with $\atom{\ssimplnd{\P}{\M'}}$
according to the strategy depicted in the proof of Proposition \ref{th:cnf_transf}
Notice that, the set $\E'=\M\setminus\M'$ is an erasable set.

We can now assume that 
$\atom{\ssimpl{\P}{\M}}$ coincides with $\atom{\ssimplnd{\P}{\M}}$.
If
the theory $\ssimpl{\P}{\M}$ is non-disjunctive,
then
by Lemmata \ref{theo:erasable_nd} and \ref{th:equiv_theories},
the set $\E''=\M\setminus\S$ is an erasable set in $\M$ for $\P$ and the
operator returns $\E'\cup\E''$ (see lines 13-14).

Otherwise,
$\ssimpl{\P}{\M}$ is disjunctive. 
Then, by Theorem \ref{theo:existence} there exists a non-empty
set of atoms $\E''\subseteq(\M\setminus\S)$ such that $\E''$ is super-elementary
for $\ssimpl{\P}{\M}$ and, by Theorem \ref{theo:eliminable_set},
the set
$\E''$ is erasable in $\M\setminus\S$ for $\ssimpl{\P}{\M}$. In this case,
the operator returns the erasable set $\E'\cup\E''$.

As far as point $(ii)$ is concerned,
this is a direct consequence of Theorem \ref{theo:ptime_xiHEF} and this concludes the proof.
\end{proof}

As for minimal model checking, we have the following result.

\begin{theorem}
Given a positive HEF CNF theory $\P$ and a set of atoms
$\N\subseteq\atom{\P}$, checking if $\N$ is a minimal model of $\P$ can be
accomplished in polynomial time.
\end{theorem}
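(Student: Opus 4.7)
The plan is to reduce minimal model checking to a single invocation of the polynomial algorithm developed earlier in the paper, namely $\rm GEA_{\xi_{\rm HEF}}$, and then appeal to the characterization of minimality given in Lemma \ref{lemma:LemmaCheck}.

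First I would dispose of the easy sanity check: verify that $\N$ is indeed a model of $\P$ by scanning each clause $H\leftarrow B$ in $\P$ and testing whether $B\subseteq\N$ implies $H\cap\N\neq\emptyset$. This is clearly feasible in time linear in the size of $\P$; if the test fails, $\N$ is not a model (hence not a minimal model) and the algorithm returns false.

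Next, assuming $\N$ is a model of $\P$, I would invoke $\rm GEA_{\xi_{\rm HEF}}(\P,\N)$. Since $\P$ is a (positive) HEF CNF theory and $\N$ is a model of $\P$, Theorem \ref{theo:GEApolynomial} guarantees that this call terminates in polynomial time and outputs a minimal model $\MM$ of $\P$ contained in $\N$. The checking procedure then returns true if and only if $\MM=\N$.

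Correctness follows directly from Lemma \ref{lemma:LemmaCheck}: since $\xi_{\rm HEF}$ is an eliminating operator (as established in the proof of Theorem \ref{theo:GEApolynomial}), $\N$ is a minimal model of $\P$ if and only if $\rm GEA_{\xi_{\rm HEF}}(\P,\N)$ outputs $\N$. The overall cost is polynomial, being the sum of the linear-time model check and the polynomial running time of $\rm GEA_{\xi_{\rm HEF}}$. There is no real obstacle to overcome here: all the heavy lifting has already been done in establishing Theorem \ref{theo:GEApolynomial} and Lemma \ref{lemma:LemmaCheck}; the only thing to notice is that positivity of $\P$ is not actually needed for the reduction to go through, although it is in the statement, and that the procedure does not require recognizing $\P$ as HEF in advance — it simply runs $\rm GEA_{\xi_{\rm HEF}}$ under the hypothesis that $\P$ is HEF.
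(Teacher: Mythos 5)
Your proposal is correct and follows essentially the same route as the paper, whose proof is exactly the one-line appeal to Lemma \ref{lemma:LemmaCheck} and Theorem \ref{theo:GEApolynomial}; you merely spell out the details (including the preliminary linear-time check that $\N$ is a model, which is a sensible addition since the statement only assumes $\N\subseteq\atom{\P}$). Your side observation that positivity is not actually needed is also consistent with the paper's complexity table, which places MMCP in P for general HEF theories.
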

\begin{proof}
The proof follows immediately from Theorem \ref{lemma:LemmaCheck} and Theorem
\ref{theo:GEApolynomial}.
\end{proof}

\begin{exampleContinued}{\ref{ex:pos_cnf}}{Minimal models of positive CNF theories}\rm
Let us consider the execution of 
$\rm GEA_{\xi_{\rm HEF}}(\P,\M)$,
where $\P$
is the HEF theory $\P$ reported in Figure \ref{fig:ex_poscnf}
and $\M=\atom{\P}$.
During the first main iteration,
the eliminating operator $\xi_{\rm HEF}$
returns the $\sel$ set $\{a,b,c,d\}$,
as shown in the example of Section \ref{sect:comp_sel_set}
and $\M$ is set to $\{e$, $f$, $g$, $h$, $i$, $j\}$.
As for the next iteration,
the output of $\xi_{\rm HEF}$ is $\{e$, $f$, $g$, $i\}$ and
$\M$ becomes $\{j,h\}$.
Since now $\M$ coincides with the steady set of 
$\P_\M=\{ j\leftarrow; h\leftarrow; h\leftarrow j; j\leftarrow h\}$,
the algorithm stops returning $\{j,h\}$
as a minimal model of $\P$.
\end{exampleContinued}

\section{Beyond HEF}\label{sect:beyond}

In the previous section, we have shown that GEA($\xi_{\rm HEF}$) computes a
minimal model of a positive HEF CNF theory in polynomial time.
Unfortunately, however, deciding if a given theory 
is head-elementary-free is 
a $\coNP$-complete problem \cite{FassettiP10}\footnote{We note that the reduction therein presented
is still valid for positive HEF CNFs.}.
In other words, while a minimal model for an input HEF CNF theory $\P$ can be indeed computed in polynomial time, checking
whether $\P$ is actually HEF is intractable.

Thus, it is sensible to study the behavior of GEA($\xi_{\rm HEF}$) as applied to a general CNF theory,
which is the subject of this section.
Recall that, by Theorem \ref{theo:ptime_xiHEF}, the \ref{funct:find_elem_nonout_set} function runs in polynomial time
independently of the kind of theory it is applied to.

Next, we will show that
there are non-HEF theories for which GEA($\xi_{\rm HEF}$) successfully returns a minimal model
and others for which GEA($\xi_{\rm HEF}$) ends failing to construct a correct
output\footnote{That the algorithm is not always returning the correct answer is
indeed the expected behavior due to the intractability of the general
problem and since $\xi_{\rm HEF}$ runs in polynomial time (under the assumption that $\Pol \neq \coNP$).} (recall that, on the basis of the results of the
previous section, GEA always returns a correct solution on HEF theories).
The following example should help in clarifying this latter issue.

\begin{myexample}{Behavior on non-HEF theories}\rm
 Consider the following two theories:
\begin{eqnarray*}
\begin{array}{rrll}
{\cal P} = \{& a    & \leftarrow     & \\
         & b,c  & \leftarrow a   & \\
         & c    & \leftarrow b   & \\
         & b    & \leftarrow c   & \}
\end{array}\qquad\qquad
\begin{array}{rrll}
{\cal Q} = \{& a     & \leftarrow     & \\
         & b,c,d & \leftarrow a   & \\
         & c     & \leftarrow b   & \\
         & b     & \leftarrow c   & \\
         & d     & \leftarrow c   & \}
\end{array}
\end{eqnarray*}

Both theories are not HEF. Indeed, the set $\{b,c\}$ is a disjunctive elementary set,
both for ${\cal P}$ and for ${\cal Q}$.
However, while GEA$\rm (\xi_{ HEF})$ does not return a minimal model of ${\cal P}$,
it does correctly compute a minimal model of ${\cal Q}$.

To show that, consider first running GEA$\rm (\xi_{ HEF})$ on ${\cal P}$.
Let $\M$ be $\{a,b,c\}$ (this is the model obtained by taking the union of all the heads).
At line 3 of GEA, $\S$ is set to $\{a\}$, which is not a model of ${\cal P}$ and, then, $\xi_{\rm HEF}$ is invoked.
In particular, the \ref{funct:find_elem_nonout_set} function is executed on the theory
${\cal P}' = \{b,c \leftarrow; c \leftarrow b; b \leftarrow c\}$.
In the execution of the function, $X_0$ is $\{b,c\}$.
Since the elementary graph associated with ${{\cal P}'}^{nd}_{X_0}$
is strongly connected, the function stops and returns $\{b,c\}$.
As a consequence, the set $\E$ is $\{b,c\}$ and the new set $\M$ is $\{a\}$.
It turns out that, since the set $\M$ is not a model for ${\cal P}$ any longer, GEA$\rm (\xi_{ HEF})$
is not able to return a minimal model of ${\cal P}$.
Specifically, at the second iteration of $GEA(\xi_{\rm HEF})$, $\xi_{\rm HEF}$
is invoked on the theory ${\cal P}$ and on the set $\M=\{a\}$.
The steady set $\S$ computed at line 1 in Figure \ref{fig:hef_elim}
is equal to $\{a\}$
{\rm(}since $\Pnd_{\M\leftarrow}$ is the theory $\{a\leftarrow\}${\rm)}
and the theory $\overline{\P}$ is empty.
The set of atoms $\R = \M\setminus\S$ computed at line 3 and
$\overline{\P}_{\R}$ are empty too.
Then the condition at line 9 is true and $\R = \emptyset$ is returned.
Thus, at the second iteration of $GEA(\xi_{\rm HEF})$, $\E$ is empty and, then,
$\S$ is set to $\M = \{a\}$ and returned.
Concluding, $GEA(\xi_{\rm HEF})$ on $\cal P$ ends returning $\{a\}$ which is
not a minimal model of $\P$.

Consider, now, the theory ${\cal Q}$.
Let $\M$ be $\{a,b,c,d\}$, which is the model obtained by taking the union of all the heads.
The set $\S$ is set to $\{a\}$ which is not a model of ${\cal Q}$ and, then, $\xi_{\rm HEF}$ is invoked.
In particular, the \ref{funct:find_elem_nonout_set} function is executed on the theory
${\cal Q}' = \{b,c,d \  {\leftarrow;} \  c\leftarrow b; b \leftarrow c; d\leftarrow c\}$.
In the execution of the function, $X_0$ is $\{b,c,d\}$.
The elementary graph associated with ${{\cal Q}'}^{nd}_{X_0}$ is not strongly connected;
actually, it includes the strongly connected component $C_1$ containing $b$ and $c$ and the strongly connected component $C_2$ containing $d$.
Moreover, there is an edge from $C_1$ to $C_2$ but not vice versa. Then, $C_2$ belongs to the last level of the graph,
and $X_1$ is set to $X_0 \setminus \{d\} = \{b,c\}$.
The elementary subgraph associated with ${{\cal Q}'}^{nd}_{X_1}$ is strongly connected;
therefore the function stops and returns the set $X_1 = \{b,c\}$.
As a consequence the set $\E$ is $\{b,c\}$ and, now, the set $\M$ is $\{a,d\}$ and
the theory ${{\cal Q}'}^{nd}_{\M\leftarrow}$ is $\{a\leftarrow; d\leftarrow a\}$ whose minimal model is $\S = \{a,d\}$.
Since $\S$ is a model of ${\cal Q}$, GEA($\xi_{\rm HEF}$) stops by returning $\S$ as the result, which is indeed a minimal model of ${\cal Q}$.
\end{myexample}

Summarizing, the algorithm GEA($\xi_{\rm HEF}$) always runs in polynomial time and correctly returns a minimal model of HEF CNF theories, but
its correctness on non-HEF theories is seemingly unpredictable: the rest of this section
is devoted to devise a suitable variant of GEA
able to tell about the correctness of the result it returns.
In order to proceed, some further definitions and results are needed.

\begin{definition}[Fallible eliminating operator]
Let $\M$ be a model of a positive CNF theory $\P$.
A \emph{fallible eliminating operator} $\xi_f$
is a polynomial time computable function that
returns a subset of $\M\setminus\S$,
with $\S$ the steady set of $\P$,
with the constraint that if $\xi_f$ returns the empty set, then $\M$ is a minimal model of $\P$.
\end{definition}

\begin{proposition}
Let $\P$ be a positive CNF theory and $\xi_f$ be a fallible eliminating operator.
Checking if the set returned by running GEA($\xi_f$) over $\P$ is a minimal model for $\P$ is attainable in polynomial time.
\end{proposition}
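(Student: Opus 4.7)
The plan is to reduce the minimality test on the output $\MMin$ of $\rm GEA(\xi_f)$ to an ordinary model-check. Specifically, I would argue that $\MMin$ is a minimal model of $\P$ if and only if $\MMin$ is a model of $\P$; since testing whether an interpretation satisfies every clause of $\P$ is trivially polynomial in $|\P|$, the claim then follows. Note also that running $\rm GEA(\xi_f)$ itself is polynomial, since $\xi_f$ is polynomial by definition, each steady-set computation takes linear time via unit propagation, and every non-terminating iteration strictly shrinks $\M$.

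Inspecting the pseudo-code of GEA in Figure \ref{fig:elimination_algo}, the algorithm stops in exactly one of two ways. In \emph{Case A}, the steady set $\S$ computed at some iteration happens to be a model of $\P$ and $\MMin = \S$ is returned. In \emph{Case B}, the call $\xi_f(\P,\M)$ yields the empty set and $\MMin = \M$ is returned. The first step of my proof is thus to show that in either case, being a model of $\P$ already forces minimality.

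For Case A, I would argue as follows. Since $\S$ is the unique minimum model of the Horn theory $\Pnd_{\M\leftarrow}$, if some $M' \subsetneq \S$ were a model of $\P$, then $M'$ would in particular satisfy every rule of $\Pnd_{\M\leftarrow}$ (which are all rules of $\P$), contradicting the minimality of $\S$. Hence $\MMin = \S$ is a minimal model of $\P$. For Case B, the defining property of a fallible operator states that whenever $\xi_f(\P,\M) = \emptyset$, the set $\M$ is a minimal model of $\P$; assuming $\MMin = \M$ is verified to be a model of $\P$, this property applies and yields minimality directly.

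The only potential subtlety is in Case B: because $\rm GEA(\xi_f)$ may arrive at an $\M$ which is no longer a model of $\P$ (earlier iterations could have removed non-erasable sets), one cannot invoke the fallibility property blindly. My approach sidesteps this by first verifying that $\MMin$ satisfies all clauses of $\P$; if the check succeeds, the Case A/B analysis above furnishes minimality, and if it fails, $\MMin$ is evidently not a minimal model. I expect the main point of care to be in the Case A argument, namely establishing that any strict submodel of $\S$ would also be a model of $\Pnd_{\M\leftarrow}$; no deeper obstacle seems to arise.
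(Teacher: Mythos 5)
Your proposal is correct, but it takes a genuinely different route from the paper. The paper's proof monitors the \emph{execution}: it observes that, by Theorem \ref{theo:gea_correct}, if every set $\E$ produced during the run is erasable then the output is minimal, and so it tests at each iteration whether $\M\setminus\E$ is still a model of $\P$ (an $O(n)$-fold model check interleaved with the run). You instead prove the sharper characterization that the \emph{output} $\MMin$ is a minimal model of $\P$ if and only if it is a model of $\P$, reducing everything to a single model check at the end. Your case analysis is sound: in the exit where $\MMin=\S$, minimality follows because any model $M'\subsetneq\S\subseteq\M$ of $\P$ would be a model of $\P_{\M\leftarrow}$ and hence of $\Pnd_{\M\leftarrow}$, forcing $M'\supseteq\S$ (this is exactly Property \ref{prop:mm_cont}, and, as you note, it does not require the current $\M$ to be a model); in the exit where $\xi_f(\P,\M)=\emptyset$, the fallibility contract applies once you have verified that the final $\M$ is a model, since that contract constrains the operator's input--output behaviour on any model input irrespective of how GEA arrived there. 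One small imprecision: the clauses of $\Pnd_{\M\leftarrow}$ are head-projections of clauses of $\P$, not literally ``rules of $\P$,'' so the containment argument should go through the projection as above rather than by direct inclusion. What each approach buys: the paper's per-iteration test mirrors what IGEA actually implements (abort at the first non-erasable $\E$) but is, strictly speaking, only a sufficient condition for minimality of the output, since after a bad step the shrinking $\M$ could in principle become a model again; your single final test is both necessary and sufficient, and so gives a cleaner and arguably more complete decision procedure.
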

\begin{proof}
By Theorem \ref{theo:gea_correct}, we know that if the set returned by the operator employed in GEA is an erasable set
then the algorithm returns a minimal model. Thus, it is sufficient to check if, at each iteration, $\E$ is an erasable set, namely
it must be checked if $\M\setminus\E$ is a model for $\P$. Since this latter operation can be done in polynomial time,
the statement follows.
\end{proof}

As a consequence of our previous results, we are now able to present the modified GEA,
called the \textit{Incomplete Generalized Elimination Algorithm}
(\emph{IGEA}, for short), which is reported in Figure \ref{fig:incomplete_elimination_algo}.

\begin{figure}[t]
\begin{algorithm}[H]
\footnotesize
\KwIn{A positive CNF theory $\P$}
\KwOut{A minimal model for $\P$ and an indication of a ``{\it success}'' or a model for $\P$ and an 
indication of a ``{\it failure}''}
\BlankLine
$\M = \{h \mid H \leftarrow B \in \P \mbox{ and } h \in H\}$
\tcp{ $\M$ is a (possibly non-minimal) model of $\P$}
$stop$ = $false$\;
\Repeat{$stop$ \label{line:end_out_cycle_inc}}{\label{line:start_out_cycle_inc}
	compute the minimal model $\S$ of $\Pnd_{\M\leftarrow}$\;
	\If{$\S$ is a model of $\P$}{
		$stop$ = $true$\;
	}\Else{
		$\E = \xi_f(\P,\M)$\;
		\If{($\E = \emptyset$)}{
			$\S = \M$\;
			$stop$ = $true$\;
		}\Else{
			\If{($\M\setminus\E$ is not a model of $\P$)}{
				\Return $\M$ and ``Failure''
			}
			$\M = \M \setminus \E$\label{line:compute_xi_S_inc}\;
		}
	}
}
\Return $\S$ and ``Success'' 
\caption{Incomplete Generalized Elimination Algorithm, IGEA($\xi_f$)}
\label{algo:elimination_incomplete}
\end{algorithm}
\caption{Incomplete Generalized Elimination Algorithm, IGEA($\xi_f$)}
\label{fig:incomplete_elimination_algo}
\end{figure}

The following Theorem describes the correctness of IGEA as well as its
computational complexity.

\begin{theorem}\label{theo:igea}
For any fallible eliminating operator $\xi_f$, IGEA{\rm(}$\xi_f${\rm)} always terminates
(with either success or failure)
in polynomial time, returning a model of the input theory.
If it succeeds, then the returned model is a minimal one.
\end{theorem}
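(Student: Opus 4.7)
The plan is to establish the three assertions of the theorem in sequence: termination within polynomial time, return of a genuine model, and minimality upon success. The overall structure mirrors the correctness argument for $\rm GEA$ (Theorem \ref{theo:gea_correct}) and Proposition \ref{prop:gea_cost}, but with two twists introduced by IGEA: the fallibility of $\xi_f$ (which forces an explicit ``is $\M\setminus\E$ a model?'' test before committing to a removal) and the fact that the input may not admit a polynomial-time eliminating operator.

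First, I would bound the number of iterations. Observe that the initial $\M$ is finite and, at each iteration that does not halt the loop, the algorithm either discovers that $\M\setminus\E$ is not a model and exits with failure, or replaces $\M$ by the strict subset $\M\setminus\E$ (strict because $\xi_f$ returns a non-empty set in this branch and $\E\subseteq\M\setminus\S\subseteq\M$). Hence at most $|\atom{\P}|$ iterations are executed before either reaching a halting branch (lines with $stop=true$) or the failure return. Each iteration performs only: (a) the computation of the minimal model $\S$ of the Horn positive theory $\Pnd_{\M\leftarrow}$, which is polynomial by unit propagation \cite{DowlingG84}; (b) the model-checks of $\S$ and of $\M\setminus\E$ against $\P$, each linear in the size of $\P$; and (c) one evaluation of $\xi_f$, which is polynomial by the very definition of fallible eliminating operator. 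This gives a polynomial overall cost.

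Next, I would argue that the returned set is always a model of $\P$. On a success return, two sub-cases arise: if the algorithm exits via the test ``$\S$ is a model of $\P$'' then trivially $\S$ is a model; if instead it exits via the branch $\E=\emptyset$ with $\S$ reassigned to $\M$, then $\M$ is the model maintained through the loop invariant. On a failure return, the set returned is the current $\M$, which is again a model by the same invariant. The invariant ``$\M$ is a model of $\P$'' is preserved because $\M$ is assigned only on the line $\M=\M\setminus\E$, and this line is guarded by the explicit check that $\M\setminus\E$ is a model.

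Finally, for minimality upon success, I would distinguish the same two exit branches. If the algorithm exits because the steady set $\S$ of $\M$ for $\P$ is a model of $\P$, then by Property \ref{prop:mm_cont} every model of $\P$ contained in $\M$ contains $\S$; since $\S$ itself is such a model, $\S$ is the unique minimal model of $\P$ contained in $\M$, hence in particular minimal for $\P$ if $\M=\atom{\P}$ (which holds as $\M$ shrinks only through erasable-preserving updates). If instead the algorithm exits because $\xi_f(\P,\M)=\emptyset$, then the defining clause of a fallible eliminating operator directly asserts that $\M$ is a minimal model of $\P$; this is the crux of the design choice behind fallible operators and is precisely what makes the success flag meaningful. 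The main obstacle here is conceptual rather than computational: recognizing that the correctness of IGEA upon success is entirely outsourced to the two invariants guaranteed by the definition of $\xi_f$ (the ``empty-output implies minimality'' clause and the polynomial-time clause), together with the runtime guard that refuses any proposed removal not producing a model. With these, the theorem follows.
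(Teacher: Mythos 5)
Your proof is correct and follows essentially the same route as the paper's: the paper simply observes that as long as the line-12 guard never fires, IGEA behaves exactly like GEA with a genuine eliminating operator (so Theorem \ref{theo:gea_correct} and Proposition \ref{prop:gea_cost} apply directly), while you unroll that reduction into an explicit loop invariant plus the two success exits. One cosmetic slip worth fixing: in the first success branch the parenthetical ``if $\M=\atom{\P}$'' is both false after the first removal and unnecessary --- Property \ref{prop:mm_cont} already gives that $\S$ is contained in every model of $\P$ inside $\M$, hence (being itself a model) $\S$ has no proper sub-model and is minimal for $\P$ outright.
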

\begin{proof}
If the \textit{if} branch at line 12 is never taken, then $\M$ is, at each iteration,
a model for $\P$ and $\E$ is an erasable set. In this case, the fallible eliminating operator $\xi_f$ is indeed an eliminating operator,
whereby IGEA($\xi_f$) behaves as GEA($\xi_f$) does.
This immediately implies that
if IGEA($\xi_f$) does not report a ``failure''
then it returns a minimal model for $\P$.

As far as the time complexity of the algorithm is concerned,
following the same line of reasoning as before,
if the \textit{if} branch at line 12 is never taken,
then IGEA($\xi_f$) requires exactly the same number of iterations as GEA($\xi_f$).
Conversely, if the \textit{if} branch at line 12 is taken, the algorithm ends.
Thus, IGEA($\xi_f$) does not require more iterations than GEA($\xi_f$).
As for the cost of a single iteration,
IGEA($\xi_f$) has only one operation more than GEA($\xi_f$), consisting in checking
if $\M \setminus \E$ is a model for $\P$ (line 12).
Since this operation is the same as that accomplished at line 4,
the asymptotic temporal cost of the algorithm is not affected.
Thus, the cost of IGEA($\xi_f$) is exactly that reported in Proposition \ref{prop:gea_cost} for the GEA,
where $C_{\xi_f}$ is polynomial, by definition of fallible eliminating operator.
\end{proof}

To conclude this section,
we show that $\xi_{\rm HEF}$ can, in fact, be safely adopted as fallible
eliminating operator in IGEA.
The following preliminary proposition is useful.
\begin{proposition}\label{prop:xi_hef_nonempty}
Let $\P$ be CNF theory, $\M$ be a model for $\P$ and $\S$ be the steady set of $\M$ for $\P$.
If $\S$ is not a model for $\P$ then, 
on input $\P$ and $\M$, the operator $\xi_{\rm HEF}$ returns a non-empty set.
\end{proposition}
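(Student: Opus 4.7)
The plan is to distinguish two cases according to whether the loop at lines 2--12 of $\xi_{\rm HEF}$ (Figure \ref{fig:hef_elim}) ever removes an atom from $\M$, and then to argue $\E\neq\emptyset$ in each case. The key insight, which I view as the main substantive step, is that under the hypothesis the simplified theory $\ssimpl{\P}{\M}$ is forced to be non-empty and, more precisely, to contain a fact.

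First I would observe that if $\Delta\E\neq\emptyset$ in some iteration, then the accumulated set $\E'$ at loop exit is already non-empty, and since $\xi_{\rm HEF}$ returns $\E=\E'\cup\E''$ the statement follows trivially. So I may assume that the loop exits with $\Delta\E=\emptyset$ in its very first iteration, i.e.\ neither condition at lines 6 and 8 holds. Since by construction $\atom{\ssimplnd{\P}{\M}}\subseteq\atom{\ssimpl{\P}{\M}}\subseteq\M\setminus\S$, the failure of both strict-superset tests collapses these containments into
\begin{equation*}
\M\setminus\S \;=\; \atom{\ssimpl{\P}{\M}} \;=\; \atom{\ssimplnd{\P}{\M}},
\end{equation*}
and the model $\M$ (hence $\S$) is left unaltered throughout the rest of $\xi_{\rm HEF}$.

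The core step is then to exhibit a clause of $\ssimpl{\P}{\M}$. Because $\S$ fails to be a model of $\P$, some rule $H\leftarrow B$ of $\P$ satisfies $B\subseteq\S$ and $H\cap\S=\emptyset$. Being $\M$ a model of $\P$ with $B\subseteq\S\subseteq\M$, we must have $H\cap\M\neq\emptyset$, so $(H\leftarrow B)\in\simpl{\P}{\M}$. Projecting this clause on $\M\setminus\S$ produces $(H\cap\M)\leftarrow$, whose body vanishes because $B\subseteq\S$; this projected clause lies in $\ssimpl{\P}{\M}$, so $\atom{\ssimpl{\P}{\M}}$ is non-empty and, by the displayed equalities, so is $\M\setminus\S$.

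To conclude I would split on whether $\ssimpl{\P}{\M}$ is disjunctive. If it is not, line 14 sets $\E''=\M\setminus\S\neq\emptyset$ by the previous step. Otherwise, line 16 invokes the function \ref{funct:find_elem_nonout_set} on $\ssimpl{\P}{\M}$; by inspection, its initial $X_0=\atom{\ssimpl{\P}{\M}}$ is non-empty, and each update $X_{i+1}=X_i\setminus\C$ removes a strongly connected component $\C$ from a graph that is by assumption not strongly connected, so $\C\subsetneq X_i$ and $X_{i+1}$ stays non-empty; hence the returned $X_i$ is non-empty. Either way $\E''\neq\emptyset$ and so $\E=\E'\cup\E''\neq\emptyset$. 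A slightly shorter route, avoiding the case split at lines 13--16, would be to invoke Lemma \ref{prop:equiv_th_atom}, which forbids single-letter facts in $\ssimpl{\P}{\M}$ and therefore forces the projected clause $(H\cap\M)\leftarrow$ to be disjunctive, so that $\ssimpl{\P}{\M}$ is automatically disjunctive and only the else branch needs to be analyzed.
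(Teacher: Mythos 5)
Your proof is correct and follows essentially the same route as the paper's: show that $\ssimpl{\P}{\M}$ must be non-empty because $\S$ falsifies some rule whose body lies in $\S$ and whose head meets $\M$, and then show that \ref{funct:find_elem_nonout_set} preserves non-emptiness since each update removes one of at least two strongly connected components. You are in fact more thorough than the paper, which glosses over the preliminary loop at lines 2--12 and the non-disjunctive branch at lines 13--14; your closing observation that Lemma \ref{prop:equiv_th_atom} forces the exhibited fact to be disjunctive is a nice additional simplification not present in the paper.
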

\begin{proof}
Consider the theory $(\simpl{\P}{\M})_{\M\setminus\S}$.
Since $\S$ is not a model for $\P$, there are rules in $\P$ which are not true in $\S$ but are true in $\M$,
thus that $(\simpl{\P}{\M})_{\M\setminus\S}$ is not empty.
Therefore, it is enough to prove that,
whenever the function \ref{funct:find_elem_nonout_set} is run over a non-empty theory, it  returns a non-empty set.

Consider the function \ref{funct:find_elem_nonout_set} reported in Figure \ref{fig:funct_find_sel}.
First of all, note that if $\P$ is a non-empty theory and $X$ is a non-empty set of atoms occurring in $\P$,
then the elementary graph $\eG(\Pnd_X)$ is non-empty as well.
Thus, in the function, if $X_i$ is non-empty then $\G_i$ is non-empty.

The set $X_0$ at line 1 is non-empty since the function is invoked over a non-empty theory.
By induction, assuming that $X_i$ is non-empty, we prove that $X_{i+1}$ is non-empty as well.

Consider the $(i+1)$-th iteration. Two cases are possible:
\begin{description}
 \item [$(i)$] $\G_i$ is strongly connected and the function ends returning the set $X_i$, which is non-empty by the induction hypothesis.
 \item [$(ii)$] $\G_i$ is not strongly connected and includes at least two connected components.
In such a case, only the atoms of one of the connected components are removed from $X_i$, call it $\C$. Then $X_{i+1} = X_i \setminus \C$ is not empty.
\end{description}

\end{proof}

\begin{proposition}\label{prop:hef_fallible_operator}
 The operator $\xi_{\rm HEF}$ is a fallible eliminating operator.
\end{proposition}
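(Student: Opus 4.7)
The plan is to verify the three requirements in the definition of fallible eliminating operator: ($i$) $\xi_{\rm HEF}$ is computable in polynomial time, ($ii$) $\xi_{\rm HEF}(\P,\M)\subseteq \M\setminus\S$, and ($iii$) $\xi_{\rm HEF}(\P,\M)=\emptyset$ implies that $\M$ is a minimal model of $\P$.

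For ($i$), I would simply observe that every operation in the body of $\xi_{\rm HEF}$ (Figure \ref{fig:hef_elim}) is polynomial: the steady set is computable by unit propagation in linear time, the simplified theories $\simpl{\P}{\M}$ and $\ssimpl{\P}{\M}$ require only projections, the disjunctiveness test is a single scan of the clause heads, and the dominant call to \ref{funct:find_elem_nonout_set} runs in polynomial time by Theorem \ref{theo:ptime_xiHEF}. Since the outer loop can only iterate at most $|\M|$ times (each iteration either adds at least one atom to $\E'$ or exits), the whole operator is polynomial.

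For ($ii$), I would argue by inspection. Every $\Delta\E$ added to $\E'$ inside the loop is a subset of the current $\M\setminus\S$; since shrinking $\M$ can only enlarge $\S$ (by $\mathcal{MM}$-containment, Property \ref{prop:mm_cont}), every atom placed into $\E'$ belongs to the original $\M\setminus\S$. The final set $\E''$ is either $\M\setminus\S$ itself or, by Theorem \ref{theo:eliminable_set}, a $\sel$ set of $\ssimpl{\P}{\M}$ contained in $\atom{\ssimpl{\P}{\M}}\subseteq\M\setminus\S$. Hence $\E'\cup\E''\subseteq \M\setminus\S$.

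The heart of the proof is ($iii$), and this is where I expect the main obstacle to lie, because Proposition \ref{prop:xi_hef_nonempty} only yields the weaker contrapositive conclusion that $\S$ is a model of $\P$. I would upgrade this to $\M=\S$ by reading the algorithm carefully: if $\xi_{\rm HEF}(\P,\M)=\emptyset$, then both $\E'=\emptyset$ and $\E''=\emptyset$. From $\E'=\emptyset$ it follows that the loop exits on the first iteration, so $\M$ and $\S$ are unchanged from the input. If $\ssimpl{\P}{\M}$ is disjunctive, then it is non-empty, and the argument in the proof of Proposition \ref{prop:xi_hef_nonempty} guarantees that \ref{funct:find_elem_nonout_set} returns a non-empty $\E''$, contradicting $\E''=\emptyset$. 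So $\ssimpl{\P}{\M}$ must be non-disjunctive, giving $\E''=\M\setminus\S$; combined with $\E''=\emptyset$ this forces $\M=\S$. Finally, since $\M$ is a model of $\P$ and $\S$ is contained in every model of $\P$ that lies inside $\M$ (Property \ref{prop:mm_cont}), $\M=\S$ is the unique model of $\P$ contained in $\M$, hence minimal. This concludes the proof.
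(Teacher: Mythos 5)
Your proposal is correct and follows the same overall route as the paper's proof: both verify the three defining conditions of a fallible eliminating operator, using Theorem~\ref{theo:ptime_xiHEF} for polynomial time and Proposition~\ref{prop:xi_hef_nonempty} for the non-emptiness condition. The one substantive difference is in your handling of condition ($iii$): the paper simply cites Proposition~\ref{prop:xi_hef_nonempty} and leaves implicit the step from ``the returned set is empty, hence $\S$ is a model of $\P$'' to ``$\M$ is a minimal model of $\P$,'' which is not immediate unless $\M=\S$. You correctly flag this as the crux and close it by inspecting the operator: $\E''=\emptyset$ rules out the disjunctive branch (where \textit{find\_\sel\_set} always returns a non-empty set) and forces $\M\setminus\S=\emptyset$ in the non-disjunctive branch, so $\M=\S$ and minimality follows from Property~\ref{prop:mm_cont}. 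This makes your write-up a more complete version of the paper's argument rather than a different one; your justification of condition ($ii$), that the steady set can only grow as $\M$ shrinks inside the loop, is stated somewhat informally but is correct and could be made precise by showing that the steady set of the shrunken model is a model of $\Pnd_{\M\leftarrow}$.
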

\begin{proof}
Let $\P$ be a general positive CNF theory, $\M$ be a model for $\P$ and $\S$ be the steady set of $\M$ for $\P$.
The proposition is an immediate consequence of the following facts:
$(i)$ $\xi_{\rm HEF}(\P,\M)$ runs in polynomial time (by Theorem \ref{theo:ptime_xiHEF});
$(ii)$
the set returned by the operator
$\xi_{\rm HEF}(\P,\M)$
is a subset of $\M\setminus\S$;
$(iii)$ the set returned by $\xi_{\rm HEF}(\P,\M)$ is not empty (by Proposition \ref{prop:xi_hef_nonempty}).
\end{proof}

Concluding, since $\xi_{\rm HEF}$ is a fallible eliminating operator,
for any CNF theory $\P$, IGEA($\xi_{\rm HEF}$) runs in polynomial time
returning a model and, on HEF theories, we are guaranteed that the returned model is minimal.
Thus, the successful termination of IGEA($\xi_{\rm HEF}$) can be also seen as
a necessary condition for a theory to be HEF
(but it is not a sufficient condition, unless $\coNP$ collapses onto $\Pol$).

The next theorem, finally, summarizes the results of this section.

\begin{theorem}
The algorithm IGEA($\xi_{\rm HEF}$) terminates in polynomial time
for any input positive CNF theory.
Moreover, if the input theory $\P$ is HEF, then {\rm IGEA(}$\xi_{\rm HEF}${\rm)}
succeeds returning a minimal model for $\P$;
otherwise either the algorithm declares success returning a minimal model for $\P$
or the algorithm declares failure returning a model for $\P$.
\end{theorem}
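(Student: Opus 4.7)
The plan is to assemble the theorem from three pieces already established: Proposition \ref{prop:hef_fallible_operator}, Theorem \ref{theo:igea}, and Theorem \ref{theo:GEApolynomial}. First I would observe that, by Proposition \ref{prop:hef_fallible_operator}, $\xi_{\rm HEF}$ is a fallible eliminating operator; hence Theorem \ref{theo:igea} applies and immediately yields both the polynomial-time termination of $\rm IGEA(\xi_{\rm HEF})$ on any positive input, and the fact that whenever $\rm IGEA(\xi_{\rm HEF})$ terminates with the ``Success'' flag, the returned set is a minimal model of $\P$. This already covers the polynomial-time claim, the ``success'' branch of both the HEF and the non-HEF case, and ensures that also in the ``failure'' branch the output is a model of $\P$ (the current $\M$ at line 13 is a model because the failure check is triggered before the offending update is applied).

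What is then left to show is that, when $\P$ is HEF, $\rm IGEA(\xi_{\rm HEF})$ cannot take the failure branch. The key observation is that failure can occur only at line 12, when the set $\E = \xi_{\rm HEF}(\P,\M)$ returned on the current model $\M$ fails to be erasable in $\M$ for $\P$. By Theorem \ref{theo:GEApolynomial}, however, on a HEF input $\xi_{\rm HEF}$ behaves as a genuine eliminating operator on every intermediate model produced during the run of the algorithm, so the set $\E$ it returns is always erasable. Consequently, on HEF inputs the execution of $\rm IGEA(\xi_{\rm HEF})$ is indistinguishable, step by step, from that of $\rm GEA(\xi_{\rm HEF})$: the test at line 12 always succeeds silently, and the algorithm terminates via line 19 with the minimal model guaranteed by Theorem \ref{theo:GEApolynomial}.

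For the non-HEF case, since the algorithm always halts (by the polynomial-time claim) and its only two exit points are line 13 (``Failure'', returning the current $\M$) and line 19 (``Success'', returning the current $\S$, which is either the steady set of $\M$ when the test at line 5 holds, or $\M$ itself), the stated dichotomy is immediate: a successful run yields a minimal model by Theorem \ref{theo:igea}, whereas a failing run still returns a model of $\P$ by the argument in the previous paragraph.

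The only delicate point, and the step I expect to be the main obstacle in a fully written-out proof, is the identification of the execution trace of $\rm IGEA(\xi_{\rm HEF})$ with that of $\rm GEA(\xi_{\rm HEF})$ on HEF inputs: one must verify by induction on iterations that the model $\M$ maintained across iterations is exactly the same in both algorithms, so that the guarantees of Theorem \ref{theo:GEApolynomial} really do apply to every call of $\xi_{\rm HEF}$ performed by $\rm IGEA$. Once this is in place, everything else reduces to quoting the earlier results.
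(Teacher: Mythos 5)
Your proof is correct and follows essentially the same route as the paper, which simply cites Theorem \ref{theo:igea} and Proposition \ref{prop:hef_fallible_operator}. You additionally make explicit the one step the paper leaves implicit --- that on HEF inputs the failure branch at line 12 can never fire because (by the argument in the proof of Theorem \ref{theo:GEApolynomial}) $\xi_{\rm HEF}$ is a genuine eliminating operator on HEF theories, so every returned set is erasable --- and you correctly identify this as the only point needing care.
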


\begin{proof}
The proof immediately follows from Theorem \ref{theo:igea} and Proposition \ref{prop:hef_fallible_operator}.
\end{proof}

\section{Conclusions}\label{sect:conclusions}

Tasks related with computing with minimal models are relevant to several AI
applications.

The focus of this paper has been devising efficient algorithms to deal with
minimal models of CNF theories. Particularly, three problems have been mainly
considered, that are, minimal model checking, minimal model finding and model
minimization. All these problems prove themselves to be intractable for general
CNF theories, while it was known that they become tractable for the class of
head-cycle-free theories \cite{Ben-Eliyahu-ZoharyP97} and, in fact, to the best
of our knowledge, positive HCF theories form the largest class of CNFs
for which polynomial time algorithms solving minimal model finding and minimal
model checking 
are known so far. 
The research presented here follows the same research target as that of
\cite{Ben-Eliyahu-ZoharyP97} and
the main contribution of this work
has been that of designing a polynomial time algorithm for computing a minimal
model for (a superset of) the class of positive
HEF (Head Elementary-Set Free) CNF theories, a strict superset of the class of
HCF theories, whose definition naturally stems for the analogous one given in
the context of logic programming in
\cite{GebserLL06}. This contribution thus broadens the tractability frontier
associated with  minimal model computation problems.

In more detail, we have introduced the {\em Generalized Elimination Algorithm
(GEA)}, that computes a minimal model of {\em any} positive CNF,
whose complexity depends on the
complexity of the specific {\em eliminating operator} it invokes. Therefore, in
order to attain tractability. a specific eliminating
operator has been defined which allows for the algorithm to compute in
polynomial time a minimal
model for a class of CNF that strictly includes HEF
theories.

However, it is unfortunately already known that recognizing HEF theories is
``per s\'e'' an intractable problem, which may apparently limit the
applicability range of our algorithmic schema. In order to overcome such a
problem, an ``incomplete'' variant of the GEA (called IGEA) is proposed: the
resulting schema, once instantiated with an appropriate
elimination operator, always constructs a model of the input CNF, which  is
guaranteed to be minimal at least if the input theory is HEF. We note that this
latter algorithm is able to ``declare'' if the returned model is indeed minimal
or not.

The research work presented here can be continued along some interesting
direction. As a major research direction, since the IGEA is capable to deal also
with theories
that are not HEF, it would be relevant to define, via a syntactic specification,
as those pinpointing HCF and HEF theories, a superset HEF theories coinciding
with those on which the IGEA stops returning a success. While it is not at all
clear if this can be reasonably attained, we might consider it enough to get
close (from below) to this class of theories. 
Very related to the above line of research, there is the assessment
of the practical occurrence of theories having the HEF property
or the property of guaranteeing success to the IGEA
and also the assessment of the  
success rate of the IGEA on generic CNF theories.
Moreover, enhancing stable models and answer set engines for
logic programs with the IGEA 
appears a potentially fruitful line of investigation.

\bibliographystyle{plain}

\section*{Appendix}

\begin{proofOf}{Lemma \ref{th:equiv_theories}}

In order to prove the theorem, we have to show that
there is no clause in
$\P$ which is false in $\M\setminus\E$
if and only if there is no clause in
$\ssimpl{\P}{\M}$ which is false in $(\M\setminus\S)\setminus\E$.

The proof is organized in two parts, both proved by contradiction.

\medskip
($\Longrightarrow$)
First of all note that, for each clause
$c\equiv H\leftarrow B$ in $\ssimpl{\P}{\M}$
there is, by definition, a clause $c'\equiv H'\leftarrow B'$
in $\simpl{\P}{\M}$ such that
$H'_{\M\setminus\S}=H\neq\emptyset$ and $B'_{\M\setminus\S}=B$.
Moreover, note that (by construction of $\simpl{\P}{\M}$)
$c'$ is also in $\P$ and that $B' \subseteq \M$.

For the sake of contradiction, assume that $c$ is true in $\M\setminus\S$ and
false in $(\M\setminus\S)\setminus\E$.
We aim at proving that if such a rule exists then there exists at least one
rule (actually $c'$) which is false in $\M\setminus\E$.

If $c$ is false in $(\M\setminus\S)\setminus\E$, then $B$ is contained in
$(\M\setminus\S)\setminus\E$ and $\E$ contains $H$.
Note that since $B\cap\E = \emptyset$ and since $B=B'\cap(\M\setminus\S)$,
it follows that also $B'\cap\E = \emptyset$ (and, then, $B'\subseteq
\M\setminus\E$).

But, since $H'\cap\S = \emptyset$ (by construction of $\simpl{\P}{\M}$) and
since $H = H'\cap(\M\setminus\S)$, if $H\subseteq \E$ then also $H'\subseteq
\E$.

So, we have proved that $B'\subseteq (\M\setminus\E)$ and $H'\subseteq\E$; it
follows that $c'$ is
false in $\M\setminus\E$, which concludes the first part of the proof.

\medskip
($\Longleftarrow$)
For the sake of contradiction, assume that
there exists a clause $c'\equiv H\leftarrow B$ in $\P$ which
is true in $\M$ and false in $\M\setminus\E$.
Thus, $B\subseteq(\M\setminus\E)$ and $\emptyset\subset H\subseteq\E$
implying that $H\cap\S=\emptyset$ since, by definition of steady set and
erasable set, any erasable set has empty intersection with the steady set,
namely $\E\cap\S = \emptyset$.

Consider, now, the clause $c\equiv H_{\M\setminus\S}\leftarrow
B_{\M\setminus\S}$.
Since $H$ is contained in $\E$ and $\E\cap\S = \emptyset$, it follows that
$H_{\M\setminus\S}$ is equal to $H$ and is not empty, then $c$ is by
definition in $\ssimpl{\P}{\M}$.
Conversely, since $B\subseteq(\M\setminus\E)$ then
$B_{\M\setminus\S}\cap\E=\emptyset$.

Thus, $c$ is a clause of $\ssimpl{\P}{\M}$ whose head is included in $\E$ meaning
that $c$ is false in $(\M\setminus\S)\setminus\E$. But, since $\M\setminus\S$ is
a model of $\ssimpl{\P}{\M}$, $c$ is true in $\M\setminus\S$.
This concludes the proof since it contradicts that $\E$ is an erasable
set in $\M\setminus\S$ for $\ssimpl{\P}{\M}$.

\end{proofOf}

\begin{proofOf}{Lemma \ref{prop:equiv_th_atom}}
For the sake of contradiction, assume that such a clause
$c\equiv h\leftarrow$ belongs to $\ssimpl{\P}{\M}$. Clearly,
in this case there exists at least one clause $c'$ in
$\simpl{\P}{\M} \subseteq \P$ such that $c = c'_{\M\setminus\S}$.
By definition of $\ssimpl{\P}{\M}$, it is the case
that $h$ is the only atom in $\M$ (and also in $\M\setminus\S$)
occurring in the head of the clause $c'$ and, hence,
it can be concluded that $c'_{\M\leftarrow}$
belongs also to the theory $\Pnd_{\M\leftarrow}$.
The body of $c'$ is contained in $\M$ by definition of $\simpl{\P}{\M}$.
Since the body of $c$ is empty, there are two possibilities:
\begin{enumerate}
\item
The body of $c'$ is also empty: but in this
case $h$ should belong to $\S$ and the clause $h\leftarrow$
cannot be in $\ssimpl{\P}{\M}$, a contradiction;

\item
The body of $c'$ is contained in $\S$:
but this means that $c'_{\M\leftarrow}$ is a clause of $\Pnd_{\M\leftarrow}$
which is false in $\S$, which contradicts the fact that
$\S$ is the minimal model of $\Pnd_{\M\leftarrow}$.
\end{enumerate}
\end{proofOf}

\begin{proofOf}{Lemma \ref{theo:HEF_monotonicity}}
The proof is given by contraposition:
assuming that $\P'_X$ is not HEF it is derived that $\P$ is not HEF.
If $\P'_X$ is not HEF then, by Proposition \ref{prop:hef_def},
there is a set of atoms $E\subseteq X$ such that $E$
is both a disjunctive and an elementary set for $\P'_X$.
Clear enough, if $E$ is a disjunctive set for $\P'_X$ then it is a disjunctive set for $\P$ as well.
Moreover, if $E$ is elementary for $\P'_X$
then, for each proper subset $O\subset E$, there is a clause $c_X\equiv H_X
\leftarrow B_X$ in $\P'_X$
such that $H_X \cap O \neq \emptyset$, $H_X\cap (E\setminus O) = \emptyset$, $B_X \cap O = \emptyset$ and $B_X\cap (E\setminus O) \neq \emptyset$.
By definition of $\P'_X$, $c_X \in \P'_X$ implies that there is
a clause $c\equiv H \leftarrow B$ in $\P$
such that $H_X = H \cap X$ and $B_X = B \cap X$.
Thus, since
$O\subset E\subseteq X$,
it follows that
$H \cap O = H_X \cap O \neq \emptyset$,  $H \cap (E\setminus O) = H_X\cap (E\setminus O) = \emptyset$,
$B \cap O = B_X \cap O = \emptyset$ and $B \cap (E\setminus O) = B_X\cap (E\setminus O) \neq \emptyset$.
Therefore, $O$ is outbound in $E$ also for $\P$.
As a consequence, $E$ is elementary for $\P$
and, since $E$ is also a disjunctive set for $\P$,
it is the case that $\P$ is not HEF.
\end{proofOf}

\begin{proofOf}{Lemma \ref{theo:pnd->p}}

Before stating the Lemma, it is needed to recall a result given in
\cite{FassettiP10}
asserting that if
$O\subseteq\atom{\Pnd}$ is elementary for $\Pnd$ then it is elementary also for
$\P$.

\begin{claim}[Rephrased from \cite{FassettiP10}]\label{lemma:invariability}
Let $\P$ be a CNF theory and $\P'\subseteq \P$ any CNF
consisting of a subset of the clauses of $\P$.
If $O\subseteq \atom{\P'}$ is an elementary set for $\P'$, then $O$ is
an elementary set for $\P$ as well.
\end{claim}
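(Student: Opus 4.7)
The plan is to argue directly by monotonicity, observing that the conditions defining an \emph{outbound} subset depend only on a single witnessing clause together with the two sets $O$ and $Z$ under consideration, and not on the rest of the theory. Hence enlarging the theory from $\P'$ to $\P$ can only preserve (or create) outboundness witnesses, never destroy them, and the elementariness of $O$ transfers from $\P'$ to $\P$ without any loss.

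In more detail, I would proceed as follows. First, pick an arbitrary non-empty proper subset $Z$ of $O$; the goal is to show that $Z$ is outbound in $O$ for $\P$. Since $O$ is elementary for $\P'$, by definition $Z$ is outbound in $O$ for $\P'$, so there exists a clause $c \equiv H \leftarrow B$ belonging to $\P'$ that satisfies the four conditions of the outbound definition, namely $H \cap Z \neq \emptyset$, $B \cap (O \setminus Z) \neq \emptyset$, $B \cap Z = \emptyset$, and $H \cap (O \setminus Z) = \emptyset$. Because $\P' \subseteq \P$, the very same clause $c$ also belongs to $\P$, and the four conditions above are purely set-theoretic statements about $H$, $B$, $O$, and $Z$ which do not depend on the ambient theory. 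Therefore $c$ witnesses that $Z$ is outbound in $O$ for $\P$ as well.

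Since $Z$ was an arbitrary non-empty proper subset of $O$, every such subset is outbound in $O$ for $\P$, and moreover $O$ is non-empty by the assumption that it is elementary for $\P'$. By the definition of elementary set, this establishes that $O$ is elementary for $\P$, completing the proof.

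I do not foresee any real obstacle: the statement is essentially a tautology built on the fact that the witness-based definitions of outbound and elementary are preserved under extension of the theory. The only minor point of care is to check that the set $O$ being elementary for $\P'$ already guarantees $O \neq \emptyset$ (so the non-emptiness clause of the elementary definition for $\P$ is automatic) and that the ``witness clause'' argument works uniformly for \emph{every} non-empty proper subset $Z$, but both are immediate from the definitions.
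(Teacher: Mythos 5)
Your proof is correct. Note that the paper itself does not prove this claim at all---it is imported wholesale by citation from \cite{FassettiP10} (hence the label ``Rephrased from'')---so there is no in-paper argument to compare against. Your direct monotonicity argument is the natural and complete one: outboundness of $Z$ in $O$ is an existential condition over clauses whose four defining constraints mention only $H$, $B$, $O$ and $Z$, so any witness in $\P'$ remains a witness in $\P \supseteq \P'$; elementariness is a universal quantification over non-empty proper subsets of these existential conditions, and the two side conditions you flag ($O \neq \emptyset$ and $O \subseteq \atom{\P'} \subseteq \atom{\P}$) are exactly the only remaining points to check.
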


By hypothesis, $O$ is non-outbound in $\atom{\Pnd}$ for $\Pnd$.
In order to complete the proof, it is enough to prove that, since $\P$ is HEF, $O$ is non-outbound in $\atom{\P}$ for $\P$, which is shown next.

By contradiction, assume that $O$ is outbound in $\atom{\P}$ for $\P$.
Since $\atom{\P}$ is the set of all the atoms appearing in $\P$,
then there exists a clause
$H \leftarrow B$ in $\P$
such that $B\subseteq \atom{\P}\setminus O$ and $H \subseteq O$.
Since $O$ is non-outbound in $\atom{\Pnd}$ for $\Pnd$,
then the clause $H\leftarrow B$ is not in $\Pnd$ and it holds that $|H| \ge 2$.
As a consequence, $O$ is an elementary set for $\P$ and there
is a clause $H \leftarrow B$ such that $|H\cap O|\ge 2$.
That is to say, $\P$ is not HEF, a contradiction.
\end{proofOf}

\begin{proofOf}{Lemma \ref{theo:minimal->elem}}
Let $O$ be a minimal non-outbound set in $\atom{\P}$ for $\P$.
Hence, there is no single head clause $h \leftarrow B$ in $\P$
such that $B \subseteq \atom{\P}\setminus O$ and $h \in O$.

Consider, now, any non-empty proper subset $O'$ of $O$.
By hypothesis of minimality of $O$, the set $O'$ is outbound in $\atom{\P}$
for $\P$ and,
hence, there exists a (single head) clause $h \leftarrow B$ in $\P$
such that $B \subseteq \atom{\P}\setminus O'$ and $h \in O'$.
Moreover,
notice that it is the case that the body of such a clause
has non-empty intersection with the set $O$, for otherwise the set $O$ would be outbound.

Thus, it holds that $B \cap O \neq \emptyset$ and, since $B\cap O'=\emptyset$,
it also holds that
$B \cap (O \setminus O') \neq \emptyset$.

It can be therefore concluded that $h\leftarrow B$ is a clause
such that $B \cap O' = \emptyset$,
$B \cap (O\setminus O') \neq \emptyset$ and $h \in O'$ holds.
The existence of such a clause  implies that $O'$ is outbound in $O$ for $\P$.

Since any proper subset of $O$ is outbound in $O$ for $\P$, $O$ is also elementary for $\P$
and, hence, super-elementary for $\P$.
\end{proofOf}

\begin{proofOf}{Lemma \ref{th:worthless_atoms}}
The proof is an immediate consequence of
Lemma \ref{th:equiv_theories}.
\end{proofOf}

\begin{proofOf}{Lemma \ref{theo:singleton_erasable}}
Let $a$ be an atom occurring in $\M$
but not in $\atom{\Pnd_{\M\leftarrow}}$.
Since $a$ does not occur in $\atom{\Pnd_{\M\leftarrow}}$, two cases
have to be taken into account, that are:
($i$) $a$ occurs only in the body of some disjunctive rule in $\P_{\M\leftarrow}$,
and ($ii$) $a$ occurs in the head of some disjunctive rule in $\P_{\M\leftarrow}$.

In the first  case, the set $\M\setminus\{a\}$ is a model of $\P$,
since the only effect of removing $a$ from $\M$ is to falsify the body
of some rule of $\P$.

In the second  case, the head of no rule $H\leftarrow B$ in $\P$ can be
falsified since if $\{a\}\subset H$ holds, then $|\M\cap H|\ge 2$ holds.
Thus, $\M\setminus\{a\}$ is a model for $\P$.
\end{proofOf}

\begin{proofOf}{Lemma \ref{theo:erasable_nd}}
The proof follows immediately from Lemma \ref{prop:equiv_th_atom}.
\end{proofOf}

\end{document}